\journal{International Journal of Approximate Reasoning}
\newtheoremstyle{examplecon}
{\topsep} {\topsep}%
{\upshape}
{}
{\bfseries\scshape}
{.}
{ }
{\thmname{#1} \thmnumber{ #2}\thmnote{#3}\normalfont\enspace(continued)}
\theoremstyle{examplecon}
\newtheorem*{examplecont}{Example}
\theoremstyle{definition}
\newtheorem{definition}{Definition}
\newtheorem{lemma}{Lemma}
\newtheorem{proposition}{Proposition}
\newtheorem{theorem}{Theorem}
\newtheorem{corollary}{Corollary}
\newtheorem{example}{Example}
\theoremstyle{remark}
\newcommand{\ie}{i.e.}
\newcommand{\ifff}{iff}
\newcommand{\virg}[1]{``#1''}
\newcommand{\argfram}{\ensuremath{\Delta}}                          
\newcommand{\setarg}{\mathcal{A}}                           
\newcommand{\attackrel}{\rightarrow}                        
\newcommand{\carfun}[1]{\ensuremath{\mathrm{F}_{#1}}}                 
\newcommand{\unset}{\U}
\newcommand{\set}[1]{\{#1\}}
\newcommand{\gensem}{\ensuremath{\Sigma}}
\newcommand{\aPreferredExtension}{\ensuremath{\Pi}}
\newcommand{\R}[0]{\ensuremath{\mathcal{R}}}
\newcommand{\E}[0]{\ensuremath{\mathcal{E}}}
\newcommand{\A}[0]{\ensuremath{\mathcal{A}}}
\newcommand{\SUB}[0]{\ensuremath{\mathcal{S}}}
\newcommand{\U}[0]{\ensuremath{\mathcal{U}}}
\newcommand{\AF}[0]{\ensuremath{AF}}
\newcommand{\AFRA}[0]{\ensuremath{AFRA}}
\newcommand{\GAF}[0]{\AFRA}
\newcommand{\EAF}[0]{\ensuremath{EAF}}
\newcommand{\HEAF}[0]{\ensuremath{HEAF}}
\newcommand{\psEAF}[0]{\ensuremath{psEAF}}
\newcommand{\VAF}[0]{\ensuremath{VAF}}
\newcommand{\EAFplus}[0]{\ensuremath{EAF+}}
\newcommand{\element}[1]{\ensuremath{\mathscr{#1}}}
\newcommand{\tup}[2]{\langle #1, #2\rangle}
\newcommand{\tupthree}[3]{\langle #1, #2, #3\rangle}
\newcommand{\tupN}[1]{\langle #1\rangle}
\newcommand{\unsetAFRA}{\SUB}
\newcommand{\ASAFRA}[1]{\ensuremath{\mathcal{AS}_{#1}}}
\newcommand{\arga}{\ensuremath{A}}
\newcommand{\argb}{\ensuremath{B}}
\newcommand{\argc}{\ensuremath{C}}
\newcommand{\argd}{\ensuremath{D}}
\newcommand{\arge}{\ensuremath{E}}
\newcommand{\argf}{\ensuremath{F}}
\newcommand{\argg}{\ensuremath{G}}
\newcommand{\argn}{\ensuremath{N}}
\newcommand{\argp}{\ensuremath{P}}
\newcommand{\argx}{\ensuremath{X}}
\newcommand{\argxprime}{\ensuremath{X'}}
\newcommand{\argy}{\ensuremath{Y}}
\newcommand{\argyprime}{\ensuremath{Y'}}
\newcommand{\argz}{\ensuremath{Z}}
\newcommand{\elementa}{\ensuremath{\element{A}}}
\newcommand{\elementb}{\ensuremath{\element{B}}}
\newcommand{\elementc}{\ensuremath{\element{C}}}
\newcommand{\elementu}{\ensuremath{\element{U}}}
\newcommand{\elementv}{\ensuremath{\element{V}}}
\newcommand{\elementw}{\ensuremath{\element{W}}}
\newcommand{\elementx}{\ensuremath{\element{X}}}
\newcommand{\attackalpha}{\ensuremath{\alpha}}
\newcommand{\attackbeta}{\ensuremath{\beta}}
\newcommand{\attackgamma}{\ensuremath{\gamma}}
\newcommand{\attackdelta}{\ensuremath{\delta}}
\newcommand{\attackepsilon}{\ensuremath{\epsilon}}
\newcommand{\attackzeta}{\ensuremath{\zeta}}
\newcommand{\attacketa}{\ensuremath{\eta}}
\newcommand{\attacktheta}{\ensuremath{\theta}}
\newcommand{\attackiota}{\ensuremath{\iota}}
\newcommand{\attackkappa}{\ensuremath{\kappa}}
\newcommand{\attacksAF}[2]{\ensuremath{(#1,#2) \in \attackrel}}
\newcommand{\src}[1]{\ensuremath{src(#1)}}
\newcommand{\trg}[1]{\ensuremath{trg(#1)}}
\newcommand{\AFRattacks}[2]{\ensuremath{#1 \rightarrow_{R} #2}}
\newcommand{\toAF}[1]{#1_{AF}}
\newcommand{\toAFRA}[1]{#1_R}
\newcommand{\Atilde}{\ensuremath{\widetilde{\mathcal{A}}}}
\newcommand{\Rtilde}{\ensuremath{\widetilde{\mathcal{R}}}}
\newcommand{\anAFRA}{\ensuremath{\Gamma}}
\newcommand{\bobsAFRA}{\ensuremath{\Gamma_{Bob}}}
\newcommand{\anotherAFRA}{\ensuremath{\widehat{\Gamma}}}
\newcommand{\athirdAFRA}{\ensuremath{\overline{\Gamma}}}
\newcommand{\charfunction}[1]{\ensuremath{\mathbb{F}_{#1}}}
\newcommand{\ModgilArgs}{\ensuremath{Args}}
\newcommand{\ModgilArgsCF}{\ensuremath{ArgsC}}
\newcommand{\ModgilR}{\ensuremath{R}}
\newcommand{\ModgilD}{\ensuremath{D}}
\newcommand{\ModgilDefeatSWord}{\ensuremath{defeat_S}}
\newcommand{\ModgilDefeatS}[3]{\ensuremath{#1 \rightarrow^{#3} #2}}
\newcommand{\ModgilNDefeatS}[3]{\ensuremath{#1 \nrightarrow^{#3} #2}}
\newcommand{\ModgilEAFTuple}{\ensuremath{\tupthree{\ModgilArgs}{\ModgilR}{\ModgilD}}}
\newcommand{\aModgilEAF}{\ensuremath{\Lambda}}
\newcommand{\ModgilRS}{\ensuremath{R_S}}
\newcommand{\ModgilCharFunct}{\ensuremath{F_{\aModgilEAF}}}
\newcommand{\unsetModgil}{\ensuremath{S}}
\newcommand{\BvVArgsC}{\ensuremath{\mathcal{A}_{C}}}
\newcommand{\BvVArgsNot}{\ensuremath{\overline{\mathcal{A}}}}
\newcommand{\BvVnot}{\ensuremath{not}}
\newcommand{\BvVArgsDiesis}{\ensuremath{\mathcal{A}_{\#}}}
\newcommand{\BvVDiesis}{\ensuremath{\#}}
\newcommand{\BvVtuple}{\ensuremath{\tupN{\BvVArgsC, \BvVArgsNot, \BvVnot, \BvVArgsDiesis, \BvVDiesis}}}
\newcommand{\BvVarga}{\ensuremath{a}}
\newcommand{\BvVargb}{\ensuremath{b}}
\newcommand{\BvVargc}{\ensuremath{c}}
\newcommand{\BvVargd}{\ensuremath{d}}
\newcommand{\aBvV}{\ensuremath{\Upsilon}}
\newcommand{\dungconffree}{D-conflict-free}				
\newcommand{\dungacceptable}{D-acceptable}
\newcommand{\dungadmissible}{D-admissible}
\newcommand{\dungpreferred}{D-preferred}
\newcommand{\dungstable}{D-stable}
\newcommand{\dungcomplete}{D-complete}
\newcommand{\dunggrounded}{D-grounded}
\newcommand{\dungsemistable}{D-semi-stable}
\newcommand{\dungrange}{D-range}
\newcommand{\dungideal}{D-ideal}
\newcommand{\dungcharacteristic}{D-characteristic}
\newcommand{\dungplus}[1]{{#1}^{D+}}								
\newcommand{\exttoafra}[1]{{#1}^{\rightarrow \AFRA}} 
\newcommand{\opexttoafra}{\ensuremath{\rightarrow \AFRA}}  
\newcommand{\drange}[1]{\ensuremath{\mathit{Drange}(#1)}}
\newcommand{\range}[1]{\ensuremath{range(#1)}}
\begin{document}

\begin{frontmatter}

\title{\AFRA: Argumentation Framework with Recursive Attacks}
\author[unibs]{Pietro Baroni}
\ead{pietro.baroni@ing.unibs.it}

\author[unibs]{Federico Cerutti\corref{corr}}
\ead{federico.cerutti@ing.unibs.it}

\author[unibs]{Massimiliano Giacomin}
\ead{massimiliano.giacomin@ing.unibs.it}

\author[unibs]{Giovanni Guida}
\ead{giovanni.guida@ing.unibs.it}

\cortext[corr]{Corresponding author}

\address[unibs]{Dip. di Elettronica per l'Automazione, University of Brescia, Via Branze 38, I-25123 Brescia, Italy}

\begin{abstract}
The issue of representing attacks to attacks in argumentation is receiving an increasing attention as a useful conceptual modelling tool in several contexts. In this paper we present \AFRA, a formalism encompassing unlimited recursive attacks within argumentation frameworks.
\AFRA{} satisfies the basic requirements of definition simplicity and rigorous compatibility with Dung's theory of argumentation. This paper provides a complete development of the \AFRA{} formalism complemented by illustrative examples and a detailed comparison with other recursive attack formalizations.
\end{abstract}

\begin{keyword}
Argumentation frameworks \sep Argumentation semantics \sep Argument attack relation
\end{keyword}

\end{frontmatter}

\section{Introduction}

An argumentation framework ($\AF$ in the following), as introduced in the seminal paper by Dung \cite{dung1995}, is an abstract structure consisting of a set of elements, called \emph{arguments}, whose origin, nature and possible internal organization is not specified, and by a binary relation of \emph{attack} on the set of arguments, whose meaning is not specified either.
This abstract formalism has been shown to encompass a large variety of more specific formalisms in areas ranging from nonmonotonic reasoning to logic programming and game theory, and, as such, is widely regarded as a powerful tool for theoretical analysis.
Several variations of the original $\AF$ formalism have been proposed in the literature.
On one hand, some approaches enrich the original framework with additional concepts, necessary to modelling in a \virg{natural} way specific reasoning situations.
This is the case, for instance, of preference-based argumentation \cite{amgoud&cayrol2002,amgoud&kaci2007}, where a preference ordering among arguments is considered, of value-based argumentation \cite{benchcapon2003}, where a value is associated to arguments in order to account for the concept of preference (an investigation on the relations between preference-based and valued-based argumentation is given in \cite{kaci&vandertorre2007}), of bipolar argumentation \cite{amgoudetal2008,cayroletal2010}, where a relation of support between arguments is considered besides that of attack, or of weighted argument systems \cite{dunneetal2009}, where a weight indicates the relative strength of attacks.
On the other hand, some proposals investigate generalized versions of the original $\AF$ definition (in particular, of the notion of attack), without introducing any additional concept within the basic scheme, as in \cite{modgil2007,benchcapon&modgil2008,modgil2009}.
This paper lies in the latter line of investigation and pursues the goal of generalizing the $\AF$ notion of attack by allowing an attack, starting from an argument, to be directed not just towards an argument but also towards any other attack. This will be achieved by a recursive definition of the attack, that leads to the proposal of a new framework called $\AFRA$ (Argumentation Framework with Recursive Attacks). 

The paper is organized as follows.
Section \ref{sec_background} recalls the basic notions and fundamental properties of Dung's argumentation framework.
Section \ref{sec_motivations} introduces the definition of \AFRA{} accompanied by a discussion of its motivations and objectives. In particular, \AFRA{} is required to parallel the semantics notions of Dung's theory and their fundamental properties while extending them to recursive attacks in an intuitively plausible and formally simple way.
Section \ref{sec_semantics_afra_1} introduces the generalized version of the basic notions of defeat, conflict-free set, acceptable argument, characteristic function, and admissible set showing that the relevant requirements stated in Section \ref{sec_motivations} hold.
Section \ref{sec_semantics_afra_2} extends to \AFRA{} the definitions of complete, grounded, preferred, stable, semi-stable and ideal semantics, showing that their required properties, paralleling the traditional ones, hold.
Sections \ref{sec_compatibility_afra_af} and \ref{SecGAFAF} deal with further relationships between \AFRA{} and \AF. The former shows that when an \AFRA{} coincides with an \AF{} (since no attacks to attacks are present) all the generalized \AFRA{} notions are fully compatible with the original ones. In the latter a method to express an \AFRA{} as an \AF{} is provided.
Section \ref{sec_discuss} draws a detailed comparison of \AFRA{} with the related formalisms Extended Argumentation Framework (\EAF) and Higher Order Argumentation Framework (HOAF). Finally Section \ref{sec_concl} summarizes the main contributions of the paper and discusses directions for future research.

\section{Background notions}
\label{sec_background}

In Dung's theory an argumentation framework (\AF) is a pair $\tup{\setarg}{\attackrel}$ where $\setarg$ is a set of arguments and $\attackrel \subseteq \setarg \times \setarg$ is a binary relation on it. 
The terse intuition behind this formalism is that arguments may attack each other 
and useful formal definitions and theoretical investigations may be built on this simple basis. 
In particular, the notions recalled in Definition \ref{defrecall} lie at the heart 
of the definitions of Dung's \emph{argumentation semantics}\footnote{The letter \virg{D} prefixed to the terms introduced in Definitions \ref{defrecall} and \ref{def_sem_recall} denotes that they specifically refer to the Dung's proposal.}, 
each of them representing a formal way of determining the conflict outcome \cite{baroni&giacomin2009}.

\begin{definition}
\label{defrecall}
Given an \AF{} $\argfram = \tup{\setarg}{\attackrel}$:
\begin{itemize}
  \item a set $\unset \subseteq \setarg$ is \emph{\dungconffree} 
        if $\nexists \arga, \argb \in \unset$ s.t. $\attacksAF{\arga}{\argb}$;
  \item an argument $\arga \in \setarg$ is \emph{\dungacceptable} with respect to a set $\unset \subseteq \setarg$
        (or, equivalently, is defended by $\unset$) 
        if $\forall \argb \in \setarg$ s.t. $\attacksAF{\argb}{\arga}$, 
        $\exists \argc \in \unset$ s.t. $\attacksAF{\argc}{\argb}$;
  \item the function $\carfun{\argfram}: 2^{\setarg} \rightarrow 2^{\setarg}$ such that
        $\carfun{\argfram}(\U) = \set{\arga \mid \arga \mbox{ is } \mbox{\dungacceptable} \mbox{ w.r.t. } \U}$
        is called the \emph{\dungcharacteristic{} function} of \argfram;
  \item a set $\unset \subseteq \setarg$ is \emph{\dungadmissible} 
        if $\unset$ is \dungconffree{} and every element of $\unset$ is \dungacceptable{} with respect to $\unset$,
        \ie{} $\unset \subseteq \carfun{\argfram}(\unset)$.
\end{itemize}
\end{definition}

An argumentation semantics identifies for any argumentation framework a set of \emph{extensions}, 
namely sets of arguments which are \virg{collectively acceptable}, 
or, in other words, able to survive together the conflict represented by the attack relation: for instance
arguments that belong to all of extensions can be considered \emph{skeptically justified}, while arguments belonging to at least an extension can be considered \emph{credulously justified}.
We recall that while the \emph{grounded} and \emph{ideal} semantics always identify a unique extension for a given argumentation framework 
(called grounded and ideal extension, respectively), 
the \emph{preferred}, \emph{stable}, and \emph{semi-stable} semantics can identify several extensions
(called preferred, stable, and semi-stable extensions, respectively)\footnote{The reader is referred 
to \cite{AIJ07,BaroniGiacominijar09} for a detailed discussion and a comparison 
concerning the different semantics proposed in the literature.}.

\begin{definition}
\label{def_sem_recall}
Given an \AF{} $\argfram = \tup{\setarg}{\attackrel}$:
\begin{itemize}
  \item a set $\unset \subseteq \setarg$ is a \emph{\dungcomplete{} extension} 
        if $\unset$ is \dungadmissible{} 
        and $\forall \arga \in \setarg$ s.t. $\arga$ is \dungacceptable{} w.r.t. $\unset$,  $\arga \in \unset$;
  \item a set $\unset \subseteq \setarg$ is the \emph{\dunggrounded{} extension} 
        if $\unset$ is the least (w.r.t. set inclusion) fixed point\footnote{It is shown in Theorem 25 of \cite{dung1995} that the grounded extension can be equivalently characterized as the least \dungcomplete{} extension.} of the \dungcharacteristic{} function $\carfun{\argfram}$;        
  \item a set $\unset \subseteq \setarg$ is a \emph{\dungpreferred{} extension} 
        if $\unset$ is a maximal (w.r.t. set inclusion) \dungadmissible{} set;
  \item a set $\unset \subseteq \setarg$ is a \emph{\dungstable{} extension} 
        if $\unset$ is \dungconffree{} and $\forall \arga \in \setarg \setminus \unset$, 
        $\exists \argb \in \unset$ s.t. $\attacksAF{\argb}{\arga}$;
  \item a set $\unset \subseteq \setarg$ is a \emph{\dungsemistable{} extension} 
        if $\unset$ is a \dungcomplete{} extension with maximal (w.r.t. set inclusion) \dungrange{} 
        (given a set $\unset \subseteq \setarg$, the \dungrange{} of $\unset$, denoted as $\drange{\unset}$, 
         is $\unset \cup \dungplus{\unset}$ 
         where $\dungplus{\unset} = \set{\arga \in \setarg | \exists \argb \in \unset$ 
         s.t. $ \attacksAF{\argb}{\arga} }$);
  \item a set $\unset \subseteq \setarg$ is the \emph{\dungideal{} extension} 
        if $\unset$ is the maximal (w.r.t. set inclusion) \dungideal{} set 
        (a set $\unset \subseteq \setarg$ is \emph{\dungideal{}} if it is \dungadmissible{} 
        and $\forall \E$ s.t. $\E$ is a \dungpreferred{} extension, 
        $\unset \subseteq \E$).
\end{itemize}
\end{definition}

It is easy to note that any extension of the above semantics is  a \dungadmissible{} set,
that is, it is able to defend all of its arguments 
(in the sense that arguments are \dungacceptable{} w.r.t. the extension itself).
The notion of acceptability and the related notion of admissibility are supported by intuition 
and satisfy a set of fundamental properties which in turn entail several desirable consequences, holding even in the infinite case, 
such as the existence of preferred extensions as well as the existence and uniqueness of the grounded extension.
These properties are recalled in the following proposition \cite{dung1995}.

\begin{proposition}
\label{prop_requirements}
  Given an \AF{} $\argfram = \tup{\setarg}{\attackrel}$:
  \begin{itemize}
    \item the \dungcharacteristic{} function preserves \dungconffree ness, 
          \ie{} given a set $\unset \subseteq \setarg$,
          if $\unset$ is \dungconffree{} then also $\carfun{\argfram}(\unset)$ is \dungconffree;
    \item the \dungcharacteristic{} function is monotonic,
          \ie{} if $\unset_1 \subseteq \unset_2$,
          then $\carfun{\argfram}(\unset_1) \subseteq \carfun{\argfram}(\unset_2)$;
    \item Dung's fundamental lemma: given a \dungadmissible{} set $\unset \subseteq \setarg$
          and two arguments $\arga, \arga' \in \setarg$ that are \dungacceptable{} w.r.t. $\unset$,
          it holds that
          $\unset' = \unset \cup \set{\arga}$ is \dungadmissible{}
          and $\arga'$ is \dungacceptable{} w.r.t. $\unset'$;
    \item the set of all admissible sets form a complete partial order w.r.t. set inclusion.
  \end{itemize}
\end{proposition}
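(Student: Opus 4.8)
The plan is to establish the four items in the stated order, since each leans on the previous, and to single out the one step that genuinely requires care. First I would record a trivial but repeatedly-used fact: acceptability is preserved when the defending set grows, i.e.\ if $\arga$ is \dungacceptable{} w.r.t.\ $\unset$ and $\unset \subseteq \unset'$, then $\arga$ is \dungacceptable{} w.r.t.\ $\unset'$ — indeed every attacker of $\arga$ is already counter-attacked from within $\unset$. Monotonicity of $\carfun{\argfram}$ is then immediate: if $\unset_1 \subseteq \unset_2$ and $\arga \in \carfun{\argfram}(\unset_1)$, the same remark gives $\arga \in \carfun{\argfram}(\unset_2)$.

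For preservation of \dungconffree ness I would argue by contradiction. Assuming $\unset$ is \dungconffree{} and $\arga, \argb \in \carfun{\argfram}(\unset)$ with $\attacksAF{\arga}{\argb}$, acceptability of $\argb$ w.r.t.\ $\unset$ supplies $\argc \in \unset$ with $\attacksAF{\argc}{\arga}$, and acceptability of $\arga$ w.r.t.\ $\unset$ then supplies $\argd \in \unset$ with $\attacksAF{\argd}{\argc}$; but $\argc, \argd \in \unset$ and $\attacksAF{\argd}{\argc}$ contradict \dungconffree ness of $\unset$.

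For Dung's fundamental lemma, the clause ``$\arga'$ is \dungacceptable{} w.r.t.\ $\unset' = \unset \cup \set{\arga}$'' is just the preservation remark applied to $\unset \subseteq \unset'$, and the self-defence half of ``$\unset'$ is \dungadmissible'' is the same remark applied to each member of $\unset$ and to $\arga$. Hence everything reduces to showing $\unset'$ is \dungconffree, which I would do in three moves: (i) no member of $\unset$ attacks $\arga$ — otherwise acceptability of $\arga$ w.r.t.\ $\unset$ forces a counter-attacker in $\unset$, contradicting \dungconffree ness of $\unset$; (ii) $\arga$ does not attack itself — a self-attack would again demand, by acceptability of $\arga$, a counter-attacker in $\unset$, contradicting (i); (iii) $\arga$ attacks no member of $\unset$ — such a member, being \dungacceptable{} w.r.t.\ $\unset$ by admissibility of $\unset$, would be defended by some $\argc \in \unset$ with $\attacksAF{\argc}{\arga}$, again contradicting (i). Together with \dungconffree ness of $\unset$ itself, (i)--(iii) yield \dungconffree ness of $\unset'$.

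Finally, for the complete-partial-order claim, $\emptyset$ is \dungadmissible{} (vacuously \dungconffree{} and vacuously self-defending), so it is the bottom element; it then remains to check that the union of a directed family — in particular of a chain — $\mathcal{C}$ of \dungadmissible{} sets is \dungadmissible. Here \dungconffree ness is ``local'': any attacking pair inside $\bigcup \mathcal{C}$ already sits inside a single member of $\mathcal{C}$ by directedness, contradicting its \dungconffree ness; and self-defence follows from the preservation remark, since each argument of $\bigcup \mathcal{C}$ is defended already by the member of $\mathcal{C}$ containing it, a subset of $\bigcup \mathcal{C}$. Thus $\bigcup \mathcal{C}$ is the least upper bound of $\mathcal{C}$. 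The only genuine obstacle is the conflict-freeness half of the fundamental lemma: the three exclusions (i)--(iii) must each be derived by pitting the acceptability of $\arga$ against the conflict-freeness (resp.\ admissibility) of $\unset$; everything else is a direct unfolding of Definition \ref{defrecall} together with the single reusable remark on enlarging the defending set.
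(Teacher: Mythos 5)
Your proof is correct. The paper states Proposition \ref{prop_requirements} without proof (recalling it from Dung's original work), but your arguments are exactly the standard ones and coincide with the proofs the paper gives for the corresponding \AFRA{} generalizations (Propositions \ref{prop_cffunction} and \ref{prop_monotonic}, Lemma \ref{lemma_fundamental_lemma} and Theorem \ref{thm_afra_dopo_fundamental_lemma}); the only cosmetic difference is that for the fundamental lemma you split the conflict-freeness of $\unset \cup \set{\arga}$ into three explicit cases (including the self-attack on $\arga$), whereas the paper's \AFRA{} version compresses this into a single case distinction.
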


\section{Motivations and requirements}
\label{sec_motivations}

In Dung's theory, arguments are regarded as the only entities that may be in conflict with each other and may be defeasible.
The issue of extending the framework in such a way as also attacks are allowed to feature these properties has recently received significant attention in the literature.
In fact, enabling attacks to attacks and considering them defeasible turns out to provide a useful and intuitively plausible formal counterpart to representation and reasoning patterns commonly adopted in various contexts.
For instance, an approach to reasoning about preferences based on attacks to attacks has been introduced in \cite{modgil2007} and extensively developed in \cite{modgil2009}.
In \cite{boellaetal2008,boellaetal2008b} attacks to attacks are considered in the context of reasoning about coalitions, while in \cite{barringeretal2005} attacks to attacks are discussed in connection with the notions of strength, support and temporal dynamics.

The present paper contributes to the research line on formalizing attacks to attacks in argumentation by pursuing the following main objectives:

\begin{itemize}
\item encompassing an unrestricted recursive notion of attack to attack;
\item keeping the proposed formalism as simple as possible;
\item encompassing Dung's \AF{} as a special case of the proposed formalism;
\item ensuring compatibility between the semantics notions in the proposed formalism and those in Dung's \AF.
\end{itemize}

As to the first point, in some previous proposals (e.g. \cite{modgil2007,modgil2009}) only one level of recursion is allowed, i.e. attacks attacking other attacks can not in turn be attacked.
While this choice may be justified in specific contexts (e.g. reasoning about preferences), we aim at proposing a more general formalism which is able to accommodate various kinds of representation and reasoning needs related to recursive attacks.
In particular, further levels of recursive attacks can be considered in the area of modelling decision processes as shown by the following example, which will be used throughout the paper to illustrate the main concepts of the proposed approach.

Suppose Bob is deciding about his Christmas holidays and, as a general rule of thumb, he is willing to buy cheap last minute offers. Suppose two such offers are available, one for a week in Gstaad and another for a week in Cuba.
Then, using his behavioral rule, Bob can build two arguments, one, let say $G$, whose premise is \virg{There is a last minute offer for Gstaad} and whose conclusion is \virg{I should go to Gstaad}, the other, let say $C$,  whose premise is \virg{There is a last minute offer for Cuba} and whose conclusion is \virg{I should go to Cuba}.
As the two choices are incompatible, $G$ and $C$ attack each other, a situation giving rise to an undetermined choice. 
Suppose however that Bob has a preference $P$ for skiing and knows that Gstaad is a renowned ski resort. The point now is: how can we represent this preference?
$P$ might be represented implicitly by suppressing the attack from $C$ to $G$, but this is unsatisfactory, since it would prevent further reasoning on $P$, as described below.
So let us consider $P$ as an argument whose premise is \virg{Bob likes skiing} and whose conclusion is \virg{If possible, Bob prefers a ski resort}.
$P$ might then attack $C$, but this would not be sound since $P$ is not actually in contrast with the existence of a good last minute offer for Cuba and the fact that, according to Bob's general behavioral rule, this provides him with a good reason for going to Cuba. 
Thus, following \cite{modgil2009}, it seems more reasonable to represent $P$ as attacking the attack from $C$ to $G$, causing $G$ to prevail. Note that the attack from $C$ to $G$ is not suppressed, but only made ineffective, in the specific situation at hand, due to the attack of $P$.

Assume now that Bob learns that there have been no snowfalls in Gstaad since one month and from this fact he derives that it might not be possible to ski there. This argument ($N$), whose premise is \virg{The weather report informs that in Gstaad there were no snowfalls since one month} and whose conclusion is \virg{It is not possible to ski in Gstaad}, does not affect neither the existence of last minute offers for Gstaad nor Bob's general preference for ski, rather it affects the ability of this preference to affect the choice between Gstaad and Cuba. 
Thus argument $N$ attacks the attack originated from $P$.  

Suppose finally that eventually Bob is informed that in Gstaad it is anyway possible to ski, thanks to a good amount of artificial snow. This leads to building an argument, let say $A$, which attacks $N$, thus in turn reinstating the attack originated from $P$ and intuitively supporting the choice of Gstaad.  A graphical illustration of this example is provided in Figure \ref{fig_esempio}.

\begin{figure}[ht]
	\centering
	\includegraphics[scale=0.25]{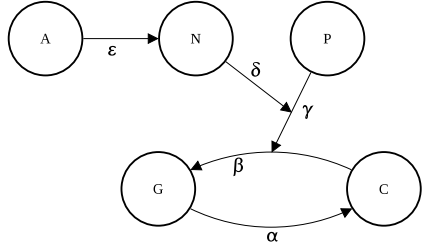}
	\caption{Bob's last minute dilemma.}
	\label{fig_esempio}
\end{figure}

As pointed out by one of the reviewers of this paper, alternative formalizations of this example not involving attacks to attacks are possible.
For instance, from the general preference for skiing, represented in the example by argument $P$, one might derive a distinct and more specific argument $P'$ representing the preference for Gstaad over Cuba. In this case argument $P'$ (instead of $P$) would attack $\beta$ through $\gamma$ and argument $N$ would attack $P'$ (instead of $\gamma$) through $\delta$. 
Of course, the representation adopted for this example -- like any formal representation of a real situation -- is a matter of modelling choice.
In general, we do not claim that there are indisputable, theoretical reasons for asserting that recursive attacks are strictly necessary.
Indeed, technically speaking, extended argumentation frameworks encompassing attacks to attacks do not feature an augmented expressive power with respect to Dung's formalism, as they can be translated into traditional argumentation frameworks, as shown for instance in \cite{modgil2009} and in Section \ref{SecGAFAF} of the present paper.
From a modelling point of view, however, it can be observed that attacks to attacks offer a useful tool supporting a natural representation of some reasoning patterns.

\begin{figure}[ht]
	\centering
	\includegraphics{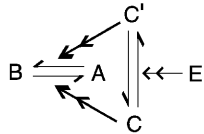}
	\caption{The \EAF{} for the weather forecast example (Fig. 3 in \cite{modgil2009}).}
	\label{fig_esempio_modgil}
\end{figure}

As a further example, consider the case presented in \cite{modgil2009} concerning two agents P and Q
exchanging arguments about weather forecasts (see Figure \ref{fig_esempio_modgil}).
Argument $A$, asserted by agent P, can be synthesized as  \virg{Today will be dry in London since the BBC forecast sunshine}, while agent Q asserts argument $B$ \virg{Today will be wet in London since CNN forecast rain}. Arguments $A$ and $B$ have contradictory conclusions and therefore attack each other. 
Preferences may then be expressed by P and Q in order to resolve this undecided situation.
For instance P may state an argument $C$  \virg{But the BBC are more trustworthy than CNN}, which expresses a preference for BBC, while Q may reply with an argument $C'$ \virg{However, statistically CNN are more accurate forecasters than the BBC} expressing a preference for CNN.
The two conflicting preferences attack each other and, according to the preference modeling adopted in \cite{modgil2009}, $C$ attacks the attack from $B$ to $A$, while $C'$ attacks the attack from $A$ to $B$.
Agent Q may then state an argument $E$ asserting that \virg{Basing a comparison on statistics is more rigorous and rational than basing a comparison on your instincts
 about their relative trustworthiness}. As argument $E$ expresses a preference for $C'$ over $C$, $E$ attacks the attack from $C$ to $C'$.
Now, in order to see how recursive attacks may play a role  in this context, consider the following additional argument $F$ asserted by P: \virg{However, BBC has recently changed its whether forecast model, no information on the new model is available; therefore statistics on CNN loses prevalence over personal opinion about BBC}.
$F$ does not attack neither $C'$ that states the preference for CNN's weather forecast over BBC's one based upon statistics, nor $E$, which states the general principle that basing a comparison on statistics is more rigorous and rational than basing a comparison on instincts. Obviously, it does not attack neither $C$, nor $A$, nor $B$. 
In fact, $F$ attacks the assumption that $E$ affects the attacks between $C$ and $C'$: while it is generally accepted that basing a comparison on statistics is more rigorous that basing a comparison on personal intuition, in the case at hand, existing statistics are not decisive for a comparison between the accuracy of CNN and BBC forecasts.
In other words $F$ provides a good reason for believing that $E$ does not attack the attack from $C'$ to $C$ and this can be modelled as an attack from $F$ to the attack originating from $E$. Therefore the situation remains undecided and both attacks between $A$ and $B$ are still in force.

Given the kind of representation needs illustrated above, we pursue the second and third objectives stated above by introducing in a rather straightforward way the fundamental definition of our proposal, namely the concept of argumentation framework with recursive attacks.

\begin{definition}[$\AFRA$]
\label{def_afra}
An Argumentation Framework with Recursive Attacks ($\AFRA$) is a pair $\tup{\A}{\R}$ where:
\begin{itemize}
\item $\A$ is a set of arguments;
\item $\R$ is a set of attacks, namely pairs $(\arga, \elementx)$ s.t. $\arga \in \A$ and ($\elementx \in \R$ or $\elementx \in \A$).
\end{itemize}
Given an attack $\attackalpha=(\arga, \elementx) \in \R$, we say that $\arga$ is the source of $\attackalpha$, denoted as $\src{\attackalpha}=\arga$ and $\elementx$ is the target of $\attackalpha$, denoted as $\trg{\attackalpha}=\elementx$.

When useful, we will denote an attack to attack explicitly showing all the recursive steps implied by its definition; for instance $(\arga, (\argb, \argc))$ means $(\arga, \attackalpha)$ where $\attackalpha = (\argb, \argc)$.
\end{definition}

The formalization of Bob's last minute dilemma in terms of \AFRA{} gives a simple illustration of the use of the formalism.

\begin{example}[Bob's last minute dilemma]
\label{ex_bob_afra}

Let $\bobsAFRA = \tup{\A}{\R}$ be an \AFRA{} where: $\A = \set{\argc,$ $\argg,$ $\argp,$ $\argn,$ $\arga}$ and $\R = \set{\attackalpha,$ $\attackbeta,$ $\attackgamma,$ $\attackdelta,$ $\attackepsilon}$, with 
$\attackalpha = (\argg, \argc)$,
$\attackbeta = (\argc, \argg)$,
$\attackgamma = (\argp, \attackbeta)$,
$\attackdelta = (\argn, \attackgamma)$,
$\attackepsilon = (\arga, \argn)$.
\end{example}

As to our third objective, it can be noted that an \AFRA{} is also an \AF{} when $\R$ does not include pairs $(\arga, \elementx)$ such that $\elementx \in \R$.

The fourth high-level objective of \virg{compatibility} concerns the \AFRA{} semantics notions which will be introduced in Sections \ref{sec_semantics_afra_1} and \ref{sec_semantics_afra_2}.
The underlying idea is that the basic concepts of conflict-freeness, acceptability, admissibility and the various proposals of extension-based semantics are formally introduced in the context of \AFRA, by explicitly considering both arguments and attacks.
We remark in particular that, according to Definition \ref{def_afra}, we regard attacks as entities which are rooted in arguments and, as a consequence, we require that their inclusion in an extension is possible only in case of inclusion of their source argument too.
This choice ensures preservation of the main lines of Dung's well-established conceptual framework for semantics definition, while anyway reflecting the extended (in a sense, empowered) role ascribed to attacks in \AFRA, in particular their defeasibility. 

From a more formal perspective, the objective of \virg{compatibility} leads to the following requirements:
\begin{itemize}
\item the fundamental properties listed in Proposition \ref{prop_requirements} should still hold for the parallel concepts introduced in the context of \AFRA;
\item in the case where an \AFRA{} is also an \AF, a bijective correspondence between the semantics notions according to the two formalisms should hold.  
\end{itemize}

The definition of semantics notions for \AFRA{} in accordance with the objectives discussed above is carried out in  Sections \ref{sec_semantics_afra_1} and \ref{sec_semantics_afra_2}.

\section{Basic semantic notions for \AFRA}
\label{sec_semantics_afra_1}

\subsection{Defeat and conflict-free sets}

As a starting point for the definition of any semantics-related notion we consider the concept of defeat.
According to the role played by attacks in \AFRA{} we introduce a notion of direct defeat which regards attacks, rather than their source arguments, as the subjects able to defeat arguments or other attacks. This is also coherent with the fact that an attack can be made ineffective by attacking the attack itself rather than its source.

\begin{definition}[Direct Defeat]\label{defdirdef}
Let $\tup{\A}{\R}$ be an \GAF, $\attackalpha \in \R$, $\elementv \in \A \cup \R$: 
$\attackalpha$ directly defeats $\elementv$ iff $\elementv = \trg{\attackalpha}$.
\end{definition}

Moreover, according to the idea that an attack is strictly related to its source, we introduce a notion of indirect defeat for an attack, corresponding to the situation where its source receives a direct defeat.

\begin{definition}[Indirect Defeat]
\label{defindirdef}
Let $\tup{\A}{\R}$ be an \GAF{} and $\attackalpha, \attackbeta \in \R$:
if $\attackalpha$ directly defeats $\src{\attackbeta}$ then 
$\attackalpha$ indirectly defeats $\attackbeta$.
\end{definition}

\begin{examplecont}[\ref{ex_bob_afra}]
In \bobsAFRA{} there are the following direct and indirect defeats: $\attackalpha$ directly defeats $\argc$; $\attackalpha$ indirectly defeats $\attackbeta$; $\attackbeta$ directly defeats $\argg$; $\attackbeta$ indirectly defeats $\attackalpha$; $\attackgamma$ directly defeats $\attackbeta$; $\attackdelta$ directly defeats $\attackgamma$; $\attackepsilon$ directly defeats $\argn$; $\attackepsilon$ indirectly defeats $\attackdelta$.
\end{examplecont}

As a special, but significant, situation note that in case of a self-attacking argument, exemplified by the \AFRA{} $\tup{\set{\arga}}{\set{\attackalpha}}$ with $\attackalpha=(\arga,\arga)$, $\attackalpha$ directly defeats $\arga$ and indirectly defeats itself.

Summing up, a defeat is either a direct or indirect defeat.

\begin{definition}[Defeat]
\label{defeat_gaf}
Let $\tup{\A}{\R}$ be an \GAF, $\attackalpha \in \R$, $\elementv \in \A \cup \R$: 
$\attackalpha$ defeats $\elementv$, denoted as $\AFRattacks{\attackalpha}{\elementv}$, iff $\attackalpha$ directly or indirectly defeats $\elementv$.
\end{definition}

The definition of conflict-free set follows directly, requiring the absence of defeats.

\begin{definition}[Conflict--free set]
\label{conflict-free_gaf}
Let $\tup{\A}{\R}$ be an \GAF, $\SUB \subseteq \A \cup \R$ is conflict--free iff $\nexists \elementv, \elementw \in \SUB$ s.t. $\AFRattacks{\elementv}{\elementw}$.
\end{definition}

The definition of conflict-free set for \AFRA{} is formally quite similar to the corresponding one in \AF{} but they feature substantial differences.
A first one, which is quite evident and common to other \AFRA{} notions, concerns the fact that a set of arguments and attacks, rather than just a set of arguments is considered.
A slightly subtler one, related to the underlying notion of defeat, consists in the fact that in \AFRA{} every set of arguments $\unset \subseteq \setarg$ is conflict-free, since only the explicit consideration of attacks gives rise to conflict in this approach. While this may sound peculiar according to the \virg{traditional} view, it is again coherent with the central role played by attacks and, as it will be seen later, does not prevent (indeed it enables) the achievement of the compatibility requirement with \AF.

\begin{examplecont}[\ref{ex_bob_afra}]
Consider $\unsetAFRA_1 = \set{\argg, \argc}$: as explained above, $\unsetAFRA_1$ is conflict-free as it does not explicitly include any attack. On the other hand, the sets $\unsetAFRA_2 = \set{\argg, \argc, \attackalpha}$, $\unsetAFRA_3 = \set{\argg, \argc, \attackbeta}$, $\unsetAFRA_4 = \set{\argg, \argc, \attackalpha, \attackbeta}$ are not conflict-free.
Note also that Definition \ref{conflict-free_gaf} encompasses sets consisting of attacks only. For instance the set $\unsetAFRA_5 = \set{\attackalpha, \attackbeta}$ is not conflict-free since $\AFRattacks{\attackalpha}{\attackbeta}$ ($\attackalpha$ indirectly defeats $\attackbeta$) and, analogously, $\AFRattacks{\attackbeta}{\attackalpha}$.
\end{examplecont}

\subsection{Acceptability and characteristic function}

The definition of acceptability is formally very similar to the traditional one, apart from the fact of encompassing sets of both arguments and attacks.

\begin{definition}[Acceptability]
\label{acceptability_gaf}
Let $\tup{\A}{\R}$ be an \GAF, $\SUB \subseteq \A \cup \R$ and $\elementw \in \A \cup \R$: 
$\elementw$ is acceptable w.r.t. $\SUB$ (or, equivalently \emph{is defended} by $\SUB$) iff $\forall \attackalpha \in \R$ s.t. $\AFRattacks{\attackalpha}{\elementw}$ $\exists \attackbeta \in \SUB$ s.t.
$\AFRattacks{\attackbeta}{\attackalpha}$.
\end{definition}

Note that while acceptability is defined with reference to a set $\SUB$ possibly including both arguments and attacks, only attacks are \virg{effective} as far as acceptability is concerned. In fact it is easy to see that an element (either argument or attack) is acceptable w.r.t. a set $\SUB$ if and only if it is acceptable w.r.t. to $\SUB \cap \R$.

\begin{examplecont}[\ref{ex_bob_afra}]
Considering \bobsAFRA, it can be seen that $\argg$ is acceptable w.r.t. $\set{\attackgamma}$ and w.r.t. $\set{\attackalpha}$, while it is not acceptable w.r.t. $\set{\argp}$. As other examples, $\attackbeta$ is acceptable w.r.t. $\set{\attackdelta}$, and $\attackgamma$ is acceptable w.r.t. $\set{\attackepsilon}$.
\end{examplecont}

Lemma \ref{lemma_acceptability_source_attack} shows that the acceptability of an attack implies the acceptability of its source, in accordance with the requirements mentioned in Section \ref{sec_motivations}.

\begin{lemma}
\label{lemma_acceptability_source_attack}
Let $\tup{\A}{\R}$ be an \AFRA{} and $\SUB \subseteq \A \cup \R$. If an attack $\attackalpha \in \R$ is acceptable w.r.t \SUB, then $\src{\attackalpha}$ is acceptable w.r.t to \SUB.

\begin{proof}
Suppose $\src{\attackalpha}=\arga{}$ is not acceptable w.r.t. \SUB. Then, $\exists \attackbeta$ s.t. $\AFRattacks{\attackbeta}{\arga}$ and $\nexists \attackgamma \in \SUB$ s.t. $\AFRattacks{\attackgamma}{\attackbeta}$. But since $\AFRattacks{\attackbeta}{\arga}$ and $\arga = \src{\attackalpha}$, then $\AFRattacks{\attackbeta}{\attackalpha}$; therefore $\attackalpha$ is not acceptable w.r.t. \SUB. Contradiction.
\end{proof}
\end{lemma}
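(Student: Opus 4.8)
The plan is to argue by contraposition, exactly matching the structure of Definitions \ref{acceptability_gaf}, \ref{defeat_gaf} and \ref{defindirdef}. Assume that $\src{\attackalpha} = \arga$ is \emph{not} acceptable w.r.t. $\SUB$. Unfolding Definition \ref{acceptability_gaf} applied to the element $\arga \in \A$, this means there exists some $\attackbeta \in \R$ with $\AFRattacks{\attackbeta}{\arga}$ such that no $\attackgamma \in \SUB$ satisfies $\AFRattacks{\attackgamma}{\attackbeta}$. The goal is then to show that this same $\attackbeta$ witnesses the non-acceptability of $\attackalpha$, i.e.\ that $\AFRattacks{\attackbeta}{\attackalpha}$ holds.

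The key step is the observation that $\AFRattacks{\attackbeta}{\arga}$, combined with $\arga = \src{\attackalpha}$, means precisely that $\attackbeta$ directly defeats $\src{\attackalpha}$. By Definition \ref{defindirdef}, this is exactly the premise for $\attackbeta$ to indirectly defeat $\attackalpha$, hence $\AFRattacks{\attackbeta}{\attackalpha}$ by Definition \ref{defeat_gaf}. There is a small subtlety to keep in mind here: for $\attackbeta$ to \emph{directly} defeat $\arga$ we would need $\arga = \trg{\attackbeta}$, whereas $\attackbeta$ might defeat $\arga$ only indirectly. But indirect defeat of an argument is impossible: indirect defeat (Definition \ref{defindirdef}) always has an attack as its target, never an argument. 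So $\AFRattacks{\attackbeta}{\arga}$ with $\arga \in \A$ forces $\attackbeta$ to defeat $\arga$ directly, i.e.\ $\arga = \trg{\attackbeta} = \src{\attackalpha}$, which is what makes the indirect-defeat rule applicable to $\attackalpha$. This is the only place where a little care is required; everything else is mechanical unfolding of definitions.

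Having established $\AFRattacks{\attackbeta}{\attackalpha}$ while no $\attackgamma \in \SUB$ defeats $\attackbeta$, Definition \ref{acceptability_gaf} applied to $\attackalpha$ yields that $\attackalpha$ is not acceptable w.r.t.\ $\SUB$, contradicting the hypothesis. I do not anticipate any real obstacle: the proof is a short contrapositive chain through three definitions, and the only thing to be careful about is not conflating direct and indirect defeat when moving from ``$\attackbeta$ defeats the source of $\attackalpha$'' to ``$\attackbeta$ indirectly defeats $\attackalpha$''.
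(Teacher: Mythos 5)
Your proof is correct and follows essentially the same contrapositive argument as the paper: the attacker $\attackbeta$ of $\arga = \src{\attackalpha}$ is shown to also defeat $\attackalpha$ (via indirect defeat), so any failure of defence for $\arga$ is a failure of defence for $\attackalpha$. Your explicit observation that a defeat of an \emph{argument} must be a direct defeat (since indirect defeat only ever targets attacks) is a small point the paper leaves implicit, and it is exactly the right justification for applying Definition \ref{defindirdef}.
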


The definition of characteristic function parallels the traditional one.

\begin{definition}
\label{def_characteristic_function_afra}
The characteristic function \charfunction{\anAFRA} of an \AFRA{} $\anAFRA = \tup{\A}{\R}$ is defined as follows:
$$
\charfunction{\anAFRA}: 2^{\A \cup \R} \mapsto 2^{\A \cup \R}
$$
$$
\charfunction{\anAFRA}(\unsetAFRA) = \{\elementv | \elementv \textrm{is acceptable w.r.t. }\unsetAFRA\}
$$
\end{definition}

Propositions \ref{prop_cffunction} and \ref{prop_monotonic} show that the fundamental properties of preserving conflict-freeness and being monotonic hold for the \AFRA{} characteristic function, as required.

\begin{proposition}
\label{prop_cffunction}
Let $\anAFRA=\tup{\A}{\R}$ be an \AFRA. If $\unsetAFRA \subseteq \A \cup \R$ is conflict-free, then $\charfunction{\anAFRA}(\unsetAFRA)$ is also conflict-free.
\end{proposition}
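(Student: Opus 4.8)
The plan is to argue by contradiction, following the skeleton of Dung's classical argument but paying attention to the fact that in \AFRA{} the ``attacking'' side of a defeat is always an attack, never a bare argument. First I would assume $\unsetAFRA$ is conflict-free and suppose, towards a contradiction, that $\charfunction{\anAFRA}(\unsetAFRA)$ is not: then there are $\elementv, \elementw \in \charfunction{\anAFRA}(\unsetAFRA)$ with $\AFRattacks{\elementv}{\elementw}$. By Definition \ref{defeat_gaf} the source of any defeat lies in $\R$, so $\elementv$ is necessarily an attack; I would fix the notation $\elementv = \attackalpha \in \R$.

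Next I would unfold acceptability twice. Since $\elementw \in \charfunction{\anAFRA}(\unsetAFRA)$ it is acceptable w.r.t. $\unsetAFRA$, and since $\AFRattacks{\attackalpha}{\elementw}$, Definition \ref{acceptability_gaf} yields an attack $\attackbeta \in \unsetAFRA$ with $\AFRattacks{\attackbeta}{\attackalpha}$. Then, because $\attackalpha = \elementv$ itself belongs to $\charfunction{\anAFRA}(\unsetAFRA)$ and is thus acceptable w.r.t. $\unsetAFRA$, and because $\AFRattacks{\attackbeta}{\attackalpha}$, the same definition gives some $\attackgamma \in \unsetAFRA$ with $\AFRattacks{\attackgamma}{\attackbeta}$.

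Finally I would observe that $\attackbeta, \attackgamma \in \unsetAFRA$ together with $\AFRattacks{\attackgamma}{\attackbeta}$ contradicts the conflict-freeness of $\unsetAFRA$ (Definition \ref{conflict-free_gaf}), which completes the argument. I do not expect any genuine obstacle here: the whole proof is just a double application of the acceptability condition, and the only point deserving care is the remark that the defeating entity $\elementv$ must belong to $\R$ --- this is precisely what makes it legitimate to re-apply acceptability to $\attackalpha$ in the second step. Note also that the argument covers the degenerate case $\elementv = \elementw$ (a self-defeat inside $\charfunction{\anAFRA}(\unsetAFRA)$), since nothing above requires $\elementv$ and $\elementw$ to be distinct.
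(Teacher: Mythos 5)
Your proof is correct and follows essentially the same route as the paper's: assume a defeat $\AFRattacks{\attackalpha}{\elementw}$ between two elements of $\charfunction{\anAFRA}(\unsetAFRA)$, apply acceptability of $\elementw$ to obtain a defeater $\attackbeta \in \unsetAFRA$ of $\attackalpha$, then apply acceptability of $\attackalpha$ to obtain a defeater of $\attackbeta$ inside $\unsetAFRA$, contradicting its conflict-freeness. Your explicit remarks that the defeating element must lie in $\R$ and that the case $\elementv = \elementw$ is covered are left implicit in the paper but add nothing beyond its argument.
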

\begin{proof}
Assume that there are \attackalpha{} and \elementv{} in $\charfunction{\anAFRA}(\unsetAFRA)$ such that \AFRattacks{\attackalpha}{\elementv}. By the acceptability of $\elementv$, there exists $\attackbeta \in \unsetAFRA$ s.t. \AFRattacks{\attackbeta}{\attackalpha}. Then, by the acceptability of $\attackalpha$ there is $\attackbeta' \in \unsetAFRA$ s.t. \AFRattacks{\attackbeta'}{\attackbeta}, contradicting the hypothesis that $\unsetAFRA$ is conflict-free. Therefore $\charfunction{\anAFRA}(\unsetAFRA)$ is conflict-free.
\end{proof}

\begin{proposition}\label{prop_monotonic}
Let $\anAFRA=\tup{\A}{\R}$ be an \AFRA. The Function \charfunction{\anAFRA} is monotonic w.r.t. set inclusion.
\end{proposition}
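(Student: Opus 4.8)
The plan is to prove monotonicity directly from the definition of acceptability, mirroring the classical argument for Dung's characteristic function in Proposition \ref{prop_requirements}. First I would fix two sets $\unsetAFRA_1 \subseteq \unsetAFRA_2 \subseteq \A \cup \R$ and aim to show $\charfunction{\anAFRA}(\unsetAFRA_1) \subseteq \charfunction{\anAFRA}(\unsetAFRA_2)$.

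Then I would take an arbitrary $\elementv \in \charfunction{\anAFRA}(\unsetAFRA_1)$, i.e.\ an element (argument or attack) acceptable w.r.t.\ $\unsetAFRA_1$. Unfolding Definition \ref{acceptability_gaf}, for every $\attackalpha \in \R$ with $\AFRattacks{\attackalpha}{\elementv}$ there is some $\attackbeta \in \unsetAFRA_1$ with $\AFRattacks{\attackbeta}{\attackalpha}$. Since $\unsetAFRA_1 \subseteq \unsetAFRA_2$, the very same $\attackbeta$ witnesses acceptability w.r.t.\ $\unsetAFRA_2$: for every such $\attackalpha$ there is $\attackbeta \in \unsetAFRA_2$ with $\AFRattacks{\attackbeta}{\attackalpha}$. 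Hence $\elementv$ is acceptable w.r.t.\ $\unsetAFRA_2$, so $\elementv \in \charfunction{\anAFRA}(\unsetAFRA_2)$, which establishes the inclusion and therefore monotonicity.

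There is essentially no obstacle here: the argument is an immediate consequence of the existential quantifier in the acceptability definition being preserved when the witnessing set is enlarged, and --- unlike the conflict-freeness preservation proof (Proposition \ref{prop_cffunction}) --- it does not even need the remark that acceptability depends only on $\SUB \cap \R$. The only thing to keep in mind is that the defeat relation $\AFRattacks{\cdot}{\cdot}$ is determined by the $\AFRA$ itself and is independent of the candidate sets $\unsetAFRA_1, \unsetAFRA_2$, so the recursive structure of attacks introduces no complication.
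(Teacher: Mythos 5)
Your proof is correct and is essentially the paper's own argument: the paper phrases it as a proof by contradiction (an attacker of $\elementv$ undefended in the larger set would also be undefended in the smaller one), while you give the direct version, but both rest on the same observation that a defender in $\unsetAFRA_1$ remains a defender in $\unsetAFRA_2$. Your closing remarks (that the defeat relation is independent of the candidate sets, and that nothing beyond preservation of the existential witness is needed) are accurate.
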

\begin{proof}
Letting $\unsetAFRA \subseteq \unsetAFRA' \subseteq (\A \cup \R)$, we have to show that $\charfunction{\anAFRA}(\unsetAFRA) \subseteq \charfunction{\anAFRA}(\unsetAFRA')$, i.e. that every $\elementv$ which is acceptable w.r.t. \unsetAFRA{} is acceptable w.r.t. $\unsetAFRA'$. Suppose that $\elementv$ is acceptable w.r.t. \unsetAFRA{} but not w.r.t. $\unsetAFRA'$. Then, $\exists \attackalpha \in \R$ s.t. $\AFRattacks{\attackalpha}{\elementv}$ and $\nexists \attackbeta \in \unsetAFRA'$ s.t. $\AFRattacks{\attackbeta}{\attackalpha}$, which, since $\unsetAFRA \subseteq \unsetAFRA'$,  implies $\nexists \attackbeta \in \unsetAFRA$ s.t. $\AFRattacks{\attackbeta}{\attackalpha}$, which contradicts the hypothesis that \elementv{} is acceptable w.r.t. \unsetAFRA.
\end{proof}

\subsection{Admissibility}

The definition of admissible sets in \AFRA{} requires conflict-freeness and acceptability of all set elements, exactly as in \AF.

\begin{definition}[Admissibility]
\label{admissibility_gaf}
Let $\anAFRA = \tup{\A}{\R}$ be an \GAF: $\SUB \subseteq \A \cup \R$ is admissible iff it is conflict--free and each element of $\SUB$ is acceptable w.r.t. $\SUB$ (i.e. $\unsetAFRA \subseteq \charfunction{\anAFRA}(\unsetAFRA)$). 
\end{definition}

As required, a parallel of Dung's fundamental lemma holds in the context of \AFRA.

\begin{lemma}[Fundamental lemma]
\label{lemma_fundamental_lemma}
Let $\tup{\A}{\R}$ be an $\GAF$, $\SUB \subseteq \A \cup \R$ an admissible set and $\elementv, \elementv' \in \A \cup \R$ elements acceptable w.r.t. $\SUB$. Then:
\begin{enumerate}
\item $\SUB' = \SUB \cup \{\elementv\}$ is admissible; and
\item $\elementv'$ is acceptable w.r.t. $\SUB'$.
\end{enumerate}
\begin{proof}\mbox{}
\begin{enumerate}
\item $\elementv$ is acceptable w.r.t. $\SUB$ therefore each element of $\SUB'$ is acceptable w.r.t. $\SUB'$. Suppose $\SUB'$ is not conflict--free; therefore there exists an element $\elementw \in \SUB$ such that either $\AFRattacks{\elementv}{\elementw}$ or $\AFRattacks{\elementw}{\elementv}$. From the admissibility of $\SUB$ and the acceptability of $\elementv$ there exists an element $\bar{\elementw} \in \SUB$ such that $\AFRattacks{\bar{\elementw}}{\elementw}$ or $\AFRattacks{\bar{\elementw}}{\elementv}$. Since $\SUB$ is conflict--free it follows that $\AFRattacks{\bar{\elementw}}{\elementv}$. 
But then from the acceptability of $\elementv$ 
there must exist an element $\hat{\elementw} \in \SUB$ such that $\AFRattacks{\hat{\elementw}}{\bar{\elementw}}$. Contradiction.
\item Immediate from Proposition \ref{prop_monotonic}.
\qedhere
\end{enumerate}
\end{proof}
\end{lemma}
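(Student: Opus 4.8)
The plan is to transcribe Dung's classical proof of the fundamental lemma into the $\AFRA$ setting, the only novelty being that the working set $\SUB'$ may now contain attacks as well as arguments and that ``defeat'' means the combined direct-or-indirect relation $\rightarrow_R$ of Definition~\ref{defeat_gaf}. The one structural fact I would keep in mind throughout is that only attacks act as defeaters, so whenever I write $\AFRattacks{\elementx}{\elementy}$ the defeater $\elementx$ automatically lies in $\R$; this makes every appeal to acceptability (Definition~\ref{acceptability_gaf}) legitimate.

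Part 2 requires no real argument: since $\SUB \subseteq \SUB'$ and $\elementv'$ is acceptable w.r.t. $\SUB$, the monotonicity of the characteristic function (Proposition~\ref{prop_monotonic}) immediately yields that $\elementv'$ is acceptable w.r.t. $\SUB'$. For part 1 I would first dispatch the trivial case $\elementv \in \SUB$, in which $\SUB' = \SUB$ is already admissible, so from then on I assume $\elementv \notin \SUB$. Admissibility of $\SUB'$ has two components. Acceptability of every element of $\SUB'$ w.r.t. $\SUB'$ is again immediate from monotonicity: each element of $\SUB$ is acceptable w.r.t. $\SUB$, hence w.r.t. $\SUB'$, and $\elementv$ is acceptable w.r.t. $\SUB$ by hypothesis, hence w.r.t. $\SUB'$. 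So the substance of the lemma is the conflict-freeness of $\SUB'$.

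For that I would argue by contradiction. If $\SUB'$ is not conflict-free then, since $\SUB$ is, the offending defeat involves $\elementv$, so there is $\elementw \in \SUB$ with $\AFRattacks{\elementv}{\elementw}$ or $\AFRattacks{\elementw}{\elementv}$. In the first case $\elementw$ is acceptable w.r.t. $\SUB$ (because $\SUB$ is admissible), in the second case $\elementv$ is acceptable w.r.t. $\SUB$ (by hypothesis); either way I obtain some $\bar{\elementw} \in \SUB$ with $\AFRattacks{\bar{\elementw}}{\elementw}$ or $\AFRattacks{\bar{\elementw}}{\elementv}$. The alternative $\AFRattacks{\bar{\elementw}}{\elementw}$ is impossible because $\bar{\elementw}, \elementw \in \SUB$ and $\SUB$ is conflict-free, so $\AFRattacks{\bar{\elementw}}{\elementv}$ must hold. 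But then $\bar{\elementw}$ is an attack defeating $\elementv$, and acceptability of $\elementv$ w.r.t. $\SUB$ produces $\hat{\elementw} \in \SUB$ with $\AFRattacks{\hat{\elementw}}{\bar{\elementw}}$, contradicting conflict-freeness of $\SUB$ once more. Hence $\SUB'$ is conflict-free, and thus admissible.

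The only delicate point — and the step I would watch most carefully — is the middle of the conflict-freeness argument: one must invoke the full admissibility of $\SUB$ (not merely its conflict-freeness) to obtain a defender $\bar{\elementw}$ of $\elementw$ in the subcase $\AFRattacks{\elementv}{\elementw}$, and one must use conflict-freeness of $\SUB$ twice, first to rule out $\AFRattacks{\bar{\elementw}}{\elementw}$ and then to close the contradiction via $\hat{\elementw}$. The potential edge case of a self-defeat $\elementw = \elementv$ needs no separate treatment, since it would force $\elementv \in \SUB$, which has already been excluded. Everything else is a faithful copy of the $\AF$ proof, the passage to sets of arguments-and-attacks being harmless precisely because Definitions~\ref{defeat_gaf} and~\ref{acceptability_gaf} were set up to make defeat and acceptability behave as in Dung's framework.
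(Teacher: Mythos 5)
Your proof is correct and follows essentially the same route as the paper's: part 2 via monotonicity of the characteristic function (Proposition \ref{prop_monotonic}), and part 1 by reducing everything to conflict-freeness of $\SUB'$, obtaining a defender $\bar{\elementw} \in \SUB$ whose only admissible option is $\AFRattacks{\bar{\elementw}}{\elementv}$, and then a defender $\hat{\elementw} \in \SUB$ of $\elementv$ against $\bar{\elementw}$, clashing with conflict-freeness of $\SUB$. The one blemish is your parenthetical on self-defeat: a conflict $\AFRattacks{\elementv}{\elementv}$ does not force $\elementv \in \SUB$ (it is a conflict in $\SUB'$ involving no element of $\SUB$ at all), but the case is harmless for a different reason --- acceptability of $\elementv$ w.r.t. $\SUB$ then yields $\bar{\elementw} \in \SUB$ with $\AFRattacks{\bar{\elementw}}{\elementv}$ and in turn $\hat{\elementw} \in \SUB$ with $\AFRattacks{\hat{\elementw}}{\bar{\elementw}}$, exactly your main chain (the paper's own proof silently skips this case as well).
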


The following theorem completes the verification that \AFRA{} satisfies all the fundamental properties of Dung's theory listed in Proposition \ref{prop_requirements}.

\begin{theorem}
\label{thm_afra_dopo_fundamental_lemma}
 Let \anAFRA{} be an \AFRA. The set of all admissible sets of \anAFRA{} 
 forms a complete partial order with respect to set inclusion.
\begin{proof}
 We have to prove that (i) the set of all admissible sets has a least element and (ii)
 each chain of admissible sets has a least upper bound.
 Point (i) immediately follows from the fact that the empty set is admissible, 
 therefore it is obvioulsy the least element.
 As for (ii), let $\Omega$ be a chain of admissible sets: 
 we prove that $\SUB = \bigcup_{\omega \in \Omega}\omega$
 is admissible, thus obviously a least upper bound of $\Omega$.
 First, $\SUB$ is conflict-free, otherwise $\exists \elementu, \elementv \in \SUB$ 
 such that $\AFRattacks{\elementu}{\elementv}$, entailing that $\exists \omega \in \Omega$
 such that $\elementu, \elementv \in \omega$ and contradicting the admissibility of $\omega$.
 Second, suppose that $\elementu \in \SUB$ and $\AFRattacks{\elementv}{\elementu}$:
 we have to prove that $\exists \elementw \in \SUB$ such that $\AFRattacks{\elementw}{\elementv}$.
 The conclusion follows from the fact that $\exists \omega \in \Omega$ such that $\elementu \in \omega$,
 and since $\omega$ is admissible $\exists \elementw \in \omega \subseteq \SUB$ 
 such that $\AFRattacks{\elementw}{\elementv}$.
\end{proof}
\end{theorem}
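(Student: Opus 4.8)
The plan is to verify directly the two defining clauses of a complete partial order with respect to set inclusion: (i) that the collection of admissible sets has a least element, and (ii) that every chain of admissible sets has a least upper bound. Both are routine once the right candidates are identified, and neither needs the heavier \AFRA{} machinery developed above — only Definitions~\ref{conflict-free_gaf}, \ref{acceptability_gaf} and \ref{admissibility_gaf}.

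For (i) I would observe that $\emptyset \subseteq \A \cup \R$ is admissible: it is vacuously conflict-free and, having no elements, vacuously satisfies the acceptability requirement of Definition~\ref{admissibility_gaf}. Since $\emptyset$ is contained in every set, it is the least admissible set under $\subseteq$ (this also settles the least upper bound of the empty chain). For (ii), given a chain $\Omega$ of admissible sets the natural candidate is $\SUB = \bigcup_{\omega \in \Omega}\omega$: it is trivially an upper bound of $\Omega$, and any admissible upper bound of $\Omega$ must contain every $\omega$ and hence contain $\SUB$, so it suffices to show that $\SUB$ is itself admissible. Conflict-freeness: if $\elementu, \elementv \in \SUB$ with $\AFRattacks{\elementu}{\elementv}$, choose $\omega_1, \omega_2 \in \Omega$ with $\elementu \in \omega_1$ and $\elementv \in \omega_2$; since $\Omega$ is a chain one of $\omega_1,\omega_2$ contains the other, say $\omega_1 \subseteq \omega_2$, so $\elementu, \elementv \in \omega_2$, contradicting the conflict-freeness of $\omega_2$. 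Acceptability: if $\elementu \in \SUB$ then $\elementu \in \omega$ for some $\omega \in \Omega$, and for every $\attackalpha \in \R$ with $\AFRattacks{\attackalpha}{\elementu}$, admissibility of $\omega$ yields $\attackbeta \in \omega$ with $\AFRattacks{\attackbeta}{\attackalpha}$; as $\omega \subseteq \SUB$ we get $\attackbeta \in \SUB$, so $\elementu$ is acceptable w.r.t.\ $\SUB$. Hence $\SUB$ is admissible and is the least upper bound of $\Omega$.

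There is essentially no obstacle: the argument is the verbatim \AFRA{} analogue of Dung's original proof for \AF, the only points deserving a word of care being the use of the total ordering of $\Omega$ to gather two elements into a single member $\omega$, and the observation that the empty chain is handled by the least element from clause (i). In particular, unlike the other fundamental properties in Proposition~\ref{prop_requirements}, this one does not invoke monotonicity of $\charfunction{\anAFRA}$ (Proposition~\ref{prop_monotonic}) or the fundamental lemma (Lemma~\ref{lemma_fundamental_lemma}), so the proof can be kept short and self-contained.
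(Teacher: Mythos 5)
Your proof is correct and follows essentially the same route as the paper's: the empty set as least element, and the union of a chain shown admissible (hence a least upper bound) by checking conflict-freeness and acceptability against individual chain members. The only difference is that you spell out the use of the chain's total ordering to place two conflicting elements into a single member $\omega$, a step the paper leaves implicit.
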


\begin{examplecont}[\ref{ex_bob_afra}]
In \bobsAFRA{} there are fourty admissible sets, denoted in the following as $\ASAFRA{i}$.
First observe that according to Definition \ref{admissibility_gaf} the empty set is admissible for any \AFRA, thus we have $\ASAFRA{1} = \emptyset$. 
As to sets consisting of arguments only, note that only unattacked arguments can be admissible by themselves since in \AFRA{} defense is carried out by attack elements (for instance $\argg$ requires $\attackalpha$ for its defense). Thus we have $\ASAFRA{2} = \set{\argp}$, $\ASAFRA{3} = \set{\arga}$, and of course their union $\ASAFRA{6} = \set{\arga,$ $\argp}$ (the adopted numbering is in accordance with Figure \ref{fig_hasse}). 
Also singletons consisting of (directly or indirectly) unattacked attacks 
and those able to defend themselves on their own
are of course admissible, yielding $\ASAFRA{4} = \set{\attackalpha}$ and $\ASAFRA{5} = \set{\attackepsilon}$ 
(note for instance that $\attackdelta$ is indirectly defeated by $\attackepsilon$ and it does not defend itself).
Of course any set including only these individually admissible elements is admissible too, 
giving rise to 11 further admissible sets: 
$\ASAFRA{6} = \set{\arga,$ $\argp}$, 
$\ASAFRA{9} = \set{\argp,$ $\attackepsilon}$,
$\ASAFRA{10} = \set{\arga,$ $\attackepsilon}$,
$\ASAFRA{11} = \set{\argp,$ $\attackalpha}$,
$\ASAFRA{12} = \set{\arga,$ $\attackalpha}$,
$\ASAFRA{13} = \set{\attackepsilon,$ $\attackalpha}$,
$\ASAFRA{14} = \set{\argp,$ $\attackepsilon,$ $\attackalpha}$,
$\ASAFRA{15} = \set{\arga,$ $\attackepsilon,$ $\attackalpha}$,
$\ASAFRA{20} = \set{\arga,$ $\argp,$ $\attackalpha}$,
$\ASAFRA{21} = \set{\arga,$ $\argp,$ $\attackepsilon}$,
$\ASAFRA{34} = \set{\arga,$ $\argp,$ $\attackepsilon,$ $\attackalpha}$.

Considering now defense by individually admissible attacks we note that $\attackepsilon$ defends $\attackgamma$ by indirectly defeating $\attackdelta$ and $\attackalpha$ defends $\argg$ by indirectly defeating $\attackbeta$, leading to 
$\ASAFRA{7} = \set{\attackepsilon,$ $\attackgamma}$,
$\ASAFRA{8} = \set{\argg,$ $\attackalpha}$.
Of course the union of these two sets, being conflict-free, is admissible too,
leading to $\ASAFRA{32} = \set{\argg,$ $\attackepsilon,$ $\attackgamma,$ $\attackalpha}$.
Adding other unattacked elements to any of these three sets preserves admissibility, leading to the following 14 admissible sets:
$\ASAFRA{16} = \set{\argg,$ $\attackepsilon,$ $\attackalpha}$,
$\ASAFRA{17} = \set{\argp,$ $\argg,$ $\attackalpha}$,
$\ASAFRA{18} = \set{\arga,$ $\argg,$ $\attackalpha}$,
$\ASAFRA{22} = \set{\argp,$ $\attackepsilon,$ $\attackgamma}$,
$\ASAFRA{23} = \set{\arga,$ $\attackepsilon,$ $\attackgamma}$,
$\ASAFRA{24} = \set{\attackepsilon,$ $\attackgamma,$ $\attackalpha}$,
$\ASAFRA{25} = \set{\argp,$ $\argg,$ $\attackepsilon,$ $\attackalpha}$,
$\ASAFRA{26} = \set{\arga,$ $\argg,$ $\attackepsilon,$ $\attackalpha}$,
$\ASAFRA{33} = \set{\arga,$ $\argp,$ $\argg,$ $\attackalpha}$,
$\ASAFRA{27} = \set{\arga,$ $\argp,$ $\attackepsilon,$ $\attackgamma}$,
$\ASAFRA{31} = \set{\arga,$ $\attackepsilon,$ $\attackgamma,$ $\attackalpha}$,
$\ASAFRA{30} = \set{\argp,$ $\attackepsilon,$ $\attackgamma,$ $\attackalpha}$.
$\ASAFRA{37} = \set{\arga,$ $\argp,$ $\attackepsilon,$ $\attackgamma,$ $\attackalpha}$,
$\ASAFRA{39} = \set{\arga,$ $\argp,$ $\argg,$ $\attackepsilon,$ $\attackalpha}$.

Since $\attackgamma$, being defended by $\attackepsilon$, 
in turn defends $\argg$ by directly defeating $\attackbeta$, the set
$\ASAFRA{19} = \set{\argg,$ $\attackepsilon,$ $\attackgamma}$ is admissible.
Again, adding unattacked elements gives rise to the following 6 further admissible sets:
$\ASAFRA{28} = \set{\argp,$ $\argg,$ $\attackepsilon,$ $\attackgamma}$,
$\ASAFRA{29} = \set{\arga,$ $\argg,$ $\attackepsilon,$ $\attackgamma}$,
$\ASAFRA{35} = \set{\argp,$ $\argg,$ $\attackepsilon,$ $\attackgamma,$ $\attackalpha}$,
$\ASAFRA{36} = \set{\arga,$ $\argg,$ $\attackepsilon,$ $\attackgamma,$ $\attackalpha}$,
$\ASAFRA{38} = \set{\arga,$ $\argp,$ $\argg,$ $\attackepsilon,$ $\attackgamma}$,
$\ASAFRA{40} = \set{\arga,$ $\argp,$ $\argg,$ $\attackepsilon,$ $\attackgamma,$ $\attackalpha}$.

Figure \ref{fig_hasse} shows the Hasse diagram (w.r.t. set inclusion) of the admissible sets listed above. Coherently with Theorem \ref{thm_afra_dopo_fundamental_lemma} this is a complete partial order with the empty set as minimal element at the bottom (as for any \AFRA{} and for any \AF) and (at least) one maximal admissible set, namely $\ASAFRA{40}$.
\end{examplecont}

\begin{figure}[ht]
	\centering
	\includegraphics[width=11cm]{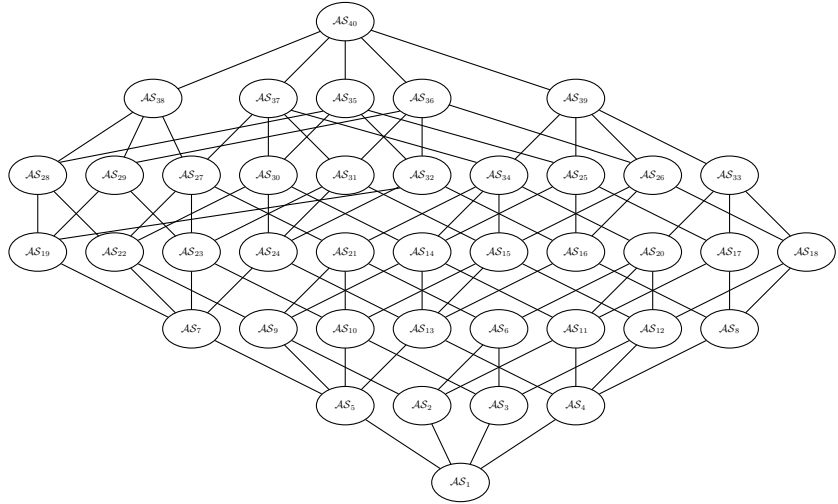}
	\caption{Hasse diagram of admissible sets for Example \ref{ex_bob_afra}.}
	\label{fig_hasse}
\end{figure}

\section{Semantics for \AFRA}
\label{sec_semantics_afra_2}

In this section we define and analyse the \AFRA{} semantics corresponding to the ones listed in Definition \ref{def_sem_recall}.

\subsection{Complete Semantics}

The notion of complete extension closely parallels the traditional one by requiring admissibility and the inclusion of any acceptable argument.

\begin{definition}[Complete extension]
\label{def_complete_afra}
Let $\anAFRA = \tup{\A}{\R}$ be an \AFRA. A set $\unsetAFRA \subseteq \A \cup \R$ is a complete extension if and only if \unsetAFRA{} is admissible and every element of $\A \cup \R$ which is acceptable w.r.t. \unsetAFRA{} belongs to \unsetAFRA, i.e. $\charfunction{\anAFRA}(\unsetAFRA) \subseteq \unsetAFRA$. 
\end{definition}

By inspection of Definitions \ref{admissibility_gaf} and \ref{def_complete_afra} it is immediate to see that a complete extension can be equivalently characterized as a conflict-free set $\unsetAFRA$ which is a fixed point of $\charfunction{\anAFRA}$, i.e. such that $\charfunction{\anAFRA}(\unsetAFRA) =\unsetAFRA$.

\begin{examplecont}[\ref{ex_bob_afra}]
In \bobsAFRA{} there is exactly one complete extension: $\set{\arga,$ $\argp,$ $\argg,$ $\attackepsilon,$ $\attackgamma,$ $\attackalpha}$.
\end{examplecont}

We introduce also a more articulated \AFRA{} (shown in Figure \ref{fig_ex_semantics}) which will be useful for illustration and comparison of the semantics to be introduced in the following.

\begin{example}
\label{ex_semantics}
Let $\anotherAFRA = \tup{\A}{\R}$ be an $\AFRA$, where:
$\A = \set{\arga,$ $\argb,$ $\argc,$ $\argd,$ $\arge,$ $\argf,$ $\argg}$, and
$\R = \set{\attackalpha,$ $\attackbeta,$ $\attackgamma,$ $\attackdelta,$ $\attackepsilon,$ $\attacketa,$ $\attackzeta,$ $\attacktheta,$ $\attackiota,$ $\attackkappa}$ with
$\attackalpha = (\arga, \argb)$,
$\attackbeta = (\argb, \attackalpha)$,
$\attackgamma = (\argc, \attackalpha)$,
$\attackdelta = (\argc, \argd)$,
$\attackepsilon = (\arge, \attackdelta)$,
$\attacketa = (\argd, \attackepsilon)$,
$\attackzeta = (\arga, \argf)$,
$\attacktheta = (\argf, \arga)$,
$\attackiota = (\argf, \argg)$,
$\attackkappa = (\argg, \argg)$.

As to the complete extensions of $\anotherAFRA$, note first that the unattacked elements are $\arge$, $\argc$ and $\attackgamma$ and that $\attackgamma$ defends both $\argb$ and $\attackbeta$ by directly defeating $\attackalpha$. It follows that $\set{\argb,$ $\argc,$ $\arge,$ $\attackbeta,$ $\attackgamma}$ is a complete extension.
Further note that $\attacktheta$ defends itself, $\argf$ and $\attackiota$ by indirectly defeating $\attackzeta$ and, analogously, $\attackzeta$ defends itself and $\arga$ by indirectly defeating $\attacktheta$. This gives rise to two further complete extensions, namely 
$\set{\argb,$ $\argc,$ $\arge,$ $\argf,$ $\attackbeta,$ $\attackgamma,$ $\attacktheta,$ $\attackiota}$ and
$\set{\arga,$ $\argb,$ $\argc,$ $\arge,$ $\attackbeta,$ $\attackgamma,$ $\attackzeta}$.
All other arguments and attacks in $\anotherAFRA$ have no defense and hence do not belong to any admissible set or complete extension.
\end{example}

\begin{figure}[ht]
	\centering
	\includegraphics[scale=0.25]{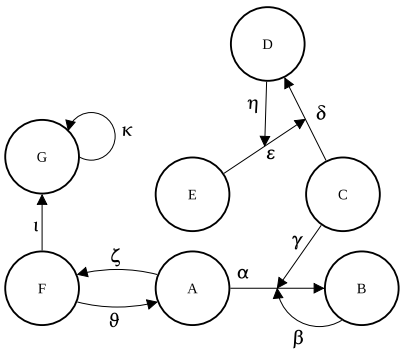}
	\caption{Graphical representation of Example \ref{ex_semantics}.}
	\label{fig_ex_semantics}
\end{figure}

\subsection{Grounded Semantics}

The definition of grounded semantics parallels Dung's one: as in his approach, the basic properties of the characteristic function (whose validity we have already proved also in the context of \AFRA) ensure the uniqueness of the grounded extension and the fact that it can be equivalently characterized as the least complete extension.

\begin{definition}[Grounded extension]
Let $\anAFRA = \tup{\A}{\R}$ be an $\AFRA$. 
The grounded extension of \anAFRA{} 
is the least fixed point of \charfunction{\anAFRA}.
\end{definition}

\begin{lemma}
\label{lemma_grounded_as_complete}
The grounded extension is the least complete extension.
\end{lemma}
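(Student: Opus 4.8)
The plan is to establish the two inclusions hidden in the statement: that the grounded extension is itself a complete extension, and that it is contained in every complete extension. Fix an \AFRA{} $\anAFRA = \tup{\A}{\R}$ and write $G$ for its grounded extension, i.e.\ the least fixed point of $\charfunction{\anAFRA}$; this exists because $2^{\A \cup \R}$ is a complete lattice under inclusion and $\charfunction{\anAFRA}$ is monotonic by Proposition \ref{prop_monotonic} (Knaster--Tarski). The proof will repeatedly use the characterization, noted right after Definition \ref{def_complete_afra}, that a complete extension is precisely a conflict-free fixed point of $\charfunction{\anAFRA}$.

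The minimality half is immediate. If $\unsetAFRA$ is any complete extension, then by Definition \ref{def_complete_afra} it is admissible, so $\unsetAFRA \subseteq \charfunction{\anAFRA}(\unsetAFRA)$, and it also satisfies $\charfunction{\anAFRA}(\unsetAFRA) \subseteq \unsetAFRA$; hence $\charfunction{\anAFRA}(\unsetAFRA) = \unsetAFRA$, i.e.\ $\unsetAFRA$ is a fixed point of $\charfunction{\anAFRA}$. Since $G$ is the least fixed point, $G \subseteq \unsetAFRA$.

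It remains to show that $G$ is a complete extension. Being a fixed point by construction, by the characterization above it suffices to show that $G$ is conflict-free. Here I would invoke the standard constructive description of the least fixed point of a monotonic operator on a complete lattice: $G$ is the least upper bound of the (possibly transfinite) ascending chain obtained by iterating $\charfunction{\anAFRA}$ from the bottom element, $\emptyset \subseteq \charfunction{\anAFRA}(\emptyset) \subseteq \charfunction{\anAFRA}^{2}(\emptyset) \subseteq \cdots$, where successive applications are nested by monotonicity and unions are taken at limit stages. Every member of this chain is conflict-free: $\emptyset$ trivially; a successor $\charfunction{\anAFRA}(\unsetAFRA)$ of a conflict-free $\unsetAFRA$ by Proposition \ref{prop_cffunction}; and a limit stage because the union of a chain of conflict-free sets is conflict-free, any violating pair $\elementv, \elementw$ with $\AFRattacks{\elementv}{\elementw}$ lying already together in a single member of the chain (exactly the argument used in the proof of Theorem \ref{thm_afra_dopo_fundamental_lemma}). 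The same reasoning applies to $G$ itself, which is the union of this chain: a defeat inside $G$ would force both of its endpoints into one chain member, contradicting conflict-freeness there. Hence $G$ is a conflict-free fixed point, i.e.\ a complete extension, and together with the previous paragraph this shows $G$ is the least complete extension.

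I expect no step to be genuinely difficult; the only point that needs care is the limit-stage reasoning in the fixed-point iteration, and that is precisely the chain-union argument already carried out for Theorem \ref{thm_afra_dopo_fundamental_lemma}, so it can be cited rather than repeated. If one prefers to avoid transfinite iteration entirely, an alternative route to conflict-freeness of $G$ is: produce \emph{some} conflict-free fixed point $G'$ by that iteration, note $G \subseteq G'$ because $G$ is the least fixed point, and conclude that $G$ is conflict-free since every subset of a conflict-free set is conflict-free.
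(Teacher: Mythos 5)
Your proof is correct. The paper in fact gives no proof of this lemma at all: it merely remarks that the result follows \virg{as in Dung's approach} from the already-established properties of \charfunction{\anAFRA} (monotonicity, Proposition \ref{prop_monotonic}, and preservation of conflict-freeness, Proposition \ref{prop_cffunction}), and your argument --- minimality from the least-fixed-point property plus the characterization of complete extensions as conflict-free fixed points, and conflict-freeness of the grounded extension via the (transfinite) iteration from $\emptyset$ --- is exactly the standard argument the paper is alluding to, so it matches the intended route.
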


The identification of the grounded extension in Examples \ref{ex_bob_afra} and \ref{ex_semantics} follows easily.

\begin{examplecont}[\ref{ex_bob_afra}]
The grounded extension of \bobsAFRA{} is $\set{\arga,$ $\argp,$ $\argg,$ $\attackepsilon,$ $\attackgamma,$ $\attackalpha}$.
\end{examplecont}

\begin{examplecont}[\ref{ex_semantics}]
The grounded extension of $\anotherAFRA$ is $\set{\argb,$ $\argc,$ $\arge,$ $\attackbeta,$ $\attackgamma}$.
\end{examplecont}

\subsection{Preferred semantics}

As expected, preferred extensions are defined as maximal admissible sets.

\begin{definition}[Preferred extension]
\label{def_preferred_afra}
Let $\anAFRA = \tup{\A}{\R}$ be an \GAF. 
A set $\SUB \subseteq \A \cup \R$ is a preferred extension of \anAFRA{} iff it is a maximal (w.r.t. set inclusion) admissible set. 
\end{definition}

Theorem \ref{thm_adm_in_pref} and Corollary \ref{cor_pref_exist} follow directly from Theorem \ref{thm_afra_dopo_fundamental_lemma}.

\begin{theorem}\label{thm_adm_in_pref}
Let $\anAFRA = \tup{\A}{\R}$ be an \GAF. For each admissible set \SUB{} of \anAFRA, there exists a preferred extension \E{} of \anAFRA{} such that $\SUB \subseteq \E$.
\end{theorem}

\begin{corollary}\label{cor_pref_exist}
Every \AFRA{} possesses at least one preferred extension.
\end{corollary}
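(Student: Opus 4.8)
The plan is to derive the corollary directly from the structural result already in hand, namely Theorem~\ref{thm_afra_dopo_fundamental_lemma} (the admissible sets form a complete partial order under inclusion) and its consequence Theorem~\ref{thm_adm_in_pref}. The first step is the trivial observation that the empty set is admissible for every \AFRA{} $\anAFRA = \tup{\A}{\R}$: it is conflict--free vacuously, and the condition that each of its elements be acceptable w.r.t.\ it is vacuously satisfied since it has no elements (this is exactly the fact used to establish point (i) of Theorem~\ref{thm_afra_dopo_fundamental_lemma}). Hence the collection of admissible sets of \anAFRA{} is nonempty.

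The second step is to invoke Theorem~\ref{thm_adm_in_pref} with $\SUB = \emptyset$: since $\emptyset$ is an admissible set, there exists a preferred extension \E{} of \anAFRA{} with $\emptyset \subseteq \E$. In particular such an \E{} exists, which is the claim. As a self--contained alternative one can bypass Theorem~\ref{thm_adm_in_pref} and argue directly by Zorn's lemma: the poset of admissible sets is nonempty by the first step, every chain of admissible sets has an upper bound in the poset by part (ii) of Theorem~\ref{thm_afra_dopo_fundamental_lemma}, so the poset has a maximal element, which is by Definition~\ref{def_preferred_afra} a preferred extension.

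I do not expect a genuine obstacle here, since all the work has been front--loaded into the complete--partial--order structure of Theorem~\ref{thm_afra_dopo_fundamental_lemma}; the only point worth stating carefully is that admissibility of $\emptyset$ holds for \emph{every} \AFRA{} (so the argument is not vacuous on the framework side), and that, in the direct variant, a maximal element of the poset of admissible sets is exactly what \virg{maximal admissible set} means, so no further verification is needed to conclude it is a preferred extension.
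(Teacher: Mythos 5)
Your proof is correct and follows essentially the same route as the paper, which states that this corollary (together with Theorem~\ref{thm_adm_in_pref}) follows directly from the complete-partial-order structure of admissible sets established in Theorem~\ref{thm_afra_dopo_fundamental_lemma}. Your explicit note that $\emptyset$ is admissible for every \AFRA{} (so the poset is nonempty) and the Zorn's-lemma variant are exactly the details the paper leaves implicit.
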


It also holds that preferred extensions are complete (and hence can be equivalently characterized as maximal complete extensions).

\begin{lemma}
\label{lemma_preferred_as_complete}
Every preferred extension is a complete extension, but not vice versa.
\begin{proof}
Let $\SUB$ be a preferred extension which is not complete, 
then $\exists \elementv \notin \SUB$ which is acceptable w.r.t. $\SUB$ and by the fundamental lemma 
(Lemma \ref{lemma_fundamental_lemma}) $\SUB \cup \set{\elementv}$ is admissible: 
but this contradicts the maximality of $\SUB$. 
As to the other point, in Example \ref{ex_semantics} one of the complete extensions is not preferred (see below).
\end{proof}
\end{lemma}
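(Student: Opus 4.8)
The plan is to prove the two halves of the statement separately, each closely paralleling the corresponding fact in Dung's theory.

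For the inclusion \emph{every preferred extension is complete}, I would argue by contradiction. Suppose $\SUB$ is a preferred extension of an \AFRA{} $\anAFRA = \tup{\A}{\R}$ that is not complete. Since $\SUB$ is preferred it is admissible, so by Definition \ref{def_complete_afra} the failure of completeness means there exists an element $\elementv \in \A \cup \R$ acceptable w.r.t. $\SUB$ with $\elementv \notin \SUB$. Now I invoke the Fundamental Lemma (Lemma \ref{lemma_fundamental_lemma}, item 1) with this $\elementv$: it gives that $\SUB' = \SUB \cup \set{\elementv}$ is admissible. As $\elementv \notin \SUB$, the set $\SUB'$ strictly contains $\SUB$, contradicting the maximality of $\SUB$ among admissible sets (Definition \ref{def_preferred_afra}). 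Hence no such $\elementv$ exists and $\SUB$ is complete. The parenthetical remark that preferred extensions coincide with maximal complete extensions then follows at once: every complete extension is admissible, so a maximal admissible set that is complete is also a maximal complete extension, and conversely.

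For the converse failure, I would simply point to a concrete \AFRA{} possessing a complete extension that is not maximal admissible. Example \ref{ex_semantics} already supplies one: its grounded extension $\set{\argb, \argc, \arge, \attackbeta, \attackgamma}$ is complete (indeed the least complete extension, by Lemma \ref{lemma_grounded_as_complete}), yet it is a proper subset of the complete extension $\set{\arga, \argb, \argc, \arge, \attackbeta, \attackgamma, \attackzeta}$ of $\anotherAFRA$; therefore it is not a maximal admissible set and hence not preferred.

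As for difficulty, there is essentially no obstacle: the first half is a one-line application of the already-established Fundamental Lemma, and the second half is just exhibiting an example and checking a strict inclusion between two sets already identified as complete extensions of $\anotherAFRA$. The only points requiring minimal care are invoking the Fundamental Lemma in the correct form (an admissible set $\SUB$ to which a single acceptable element $\elementv$ is added) and reading \virg{preferred} as \virg{maximal admissible}, both of which are immediate from the definitions in the excerpt.
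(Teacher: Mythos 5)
Your proof is correct and follows essentially the same route as the paper: a contradiction via the Fundamental Lemma (Lemma \ref{lemma_fundamental_lemma}) for the inclusion, and Example \ref{ex_semantics} (the grounded extension being a strict subset of another complete extension) for the failure of the converse. The extra remark about maximal complete extensions is a harmless addition already noted in the text surrounding the lemma.
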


Maximal complete extensions are easily identified in Examples \ref{ex_bob_afra} and \ref{ex_semantics}.

\begin{examplecont}[\ref{ex_bob_afra}]
In \bobsAFRA{} the only preferred extension is the grounded extension, \ie{} 
$\set{\arga,$ $\argp,$ $\argg,$ $\attackepsilon,$ $\attackgamma,$ $\attackalpha}$.
\end{examplecont}

\begin{examplecont}[\ref{ex_semantics}]
The preferred extensions of $\anotherAFRA$ are 
$\set{\argb,$ $\argc,$ $\arge,$ $\argf,$ $\attackbeta,$ $\attackgamma,$ $\attacktheta,$ $\attackiota}$
and
$\set{\arga,$ $\argb,$ $\argc,$ $\arge,$ $\attackbeta,$ $\attackgamma,$ $\attackzeta}$, while the complete (and grounded) extension $\set{\argb,$ $\argc,$ $\arge,$ $\attackbeta,$ $\attackgamma}$ is not preferred since it is not maximal w.r.t. set inclusion.
\end{examplecont}

\subsection{Stable semantics}

Stable semantics is based, as usual, on the idea that each extension attacks all elements not included in it.

\begin{definition}[Stable extension]\label{def_stab_afra}
Let $\anAFRA = \tup{\A}{\R}$ be an $\AFRA$. A set $\SUB \subseteq \A \cup \R$ is a stable extension of \anAFRA{} if and only if \SUB{} is conflict-free and $\forall \elementv \in \A \cup \R, \elementv \notin \SUB$, $\exists \attackalpha \in \SUB$ s.t. $\AFRattacks{\attackalpha}{\elementv}$. 
\end{definition}

Stable extensions are also preferred, but not vice versa. In particular, as in \AF, there are cases where no extensions complying with Definition \ref{def_stab_afra} exist.

\begin{lemma}
\label{lemma_afra_stable_preferred}
Every stable extension is a preferred extension, but not vice versa.
\end{lemma}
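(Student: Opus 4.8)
The plan is to mirror the classical Dung argument. First I would show that every stable extension $\SUB$ is admissible. Conflict-freeness is given by definition, so I only need to verify that each element of $\SUB$ is acceptable w.r.t. $\SUB$. Take $\elementv \in \SUB$ and any $\attackalpha \in \R$ with $\AFRattacks{\attackalpha}{\elementv}$. Since $\SUB$ is conflict-free, $\attackalpha \notin \SUB$; hence by the stability condition there is some $\attackbeta \in \SUB$ with $\AFRattacks{\attackbeta}{\attackalpha}$. This is exactly acceptability of $\elementv$ w.r.t. $\SUB$, so $\SUB$ is admissible.

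Next I would show $\SUB$ is maximal among admissible sets, hence preferred. Suppose for contradiction that $\SUB \subsetneq \SUB'$ with $\SUB'$ admissible, and pick $\elementw \in \SUB' \setminus \SUB$. By stability of $\SUB$, there is $\attackalpha \in \SUB \subseteq \SUB'$ with $\AFRattacks{\attackalpha}{\elementw}$. But then $\attackalpha$ and $\elementw$ are two elements of $\SUB'$ with $\AFRattacks{\attackalpha}{\elementw}$, contradicting the conflict-freeness of $\SUB'$. Hence no such $\SUB'$ exists and $\SUB$ is a preferred extension.

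For the converse direction (that not every preferred extension is stable), it suffices to exhibit an \AFRA{} with a preferred extension that is not stable; the simplest choice is a single self-attacking argument, i.e.\ $\tup{\set{\arga}}{\set{\attackalpha}}$ with $\attackalpha=(\arga,\arga)$, already discussed after Definition \ref{defindirdef}. Here $\attackalpha$ directly defeats $\arga$ and indirectly defeats itself, so the only conflict-free sets are $\emptyset$ and $\set{\arga}$; of these only $\emptyset$ is admissible (since $\arga$ is not defended), so $\emptyset$ is the unique preferred extension. But $\emptyset$ is not stable, since $\arga \notin \emptyset$ is not defeated by any element of $\emptyset$. Alternatively one may reuse Example \ref{ex_semantics}, observing that the self-attacking argument $\argg$ (via $\attackkappa$) is never defeated by any admissible set, so no stable extension exists while preferred extensions do.

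I do not expect any real obstacle here: both inclusions are immediate adaptations of the \AF{} proof, and the only point deserving a line of care is that conflict-freeness in \AFRA{} is defined in terms of the \emph{defeat} relation $\AFRattacks{\cdot}{\cdot}$ (direct or indirect), so one must invoke that relation — rather than mere membership of a pair in $\R$ — when applying conflict-freeness in the maximality argument. Everything else is routine.
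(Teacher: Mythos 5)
Your proof is correct and follows essentially the same route as the paper: the paper simply asserts that each stable extension is a maximal complete (hence preferred) extension and then uses the very same self-attacking-argument counterexample, whereas you spell out the ``easy to see'' part directly (stable $\Rightarrow$ admissible, and stability forces maximality via conflict-freeness of any would-be superset). No gaps.
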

\begin{proof}
It is easy to see that each stable extension is a maximal complete extension, hence a preferred extension. To show that the reverse does not hold, consider an \AFRA{} consisting just of a self-defeating argument: $\anAFRA{} = \tup{\A}{\R}$ with $\A = \{\arga\}$, $\R = \{(\arga, \arga)\}$. The empty set is a preferred extension of \anAFRA{} but clearly is not stable.
\end{proof}

\begin{examplecont}[\ref{ex_bob_afra}]
The only stable extension of $\bobsAFRA$ is $\set{\arga,$ $\argp,$ $\argg,$ $\attackepsilon,$ $\attackgamma,$ $\attackalpha}$.
\end{examplecont}

\begin{examplecont}[\ref{ex_semantics}]
The two preferred extensions of $\anotherAFRA$ are not stable. In particular neither of them includes nor defeats the elements $\attackdelta$, $\attackepsilon$, $\attacketa$, and $\argd$.
\end{examplecont}

\subsection{Semi-stable semantics}

Semi-stable semantics \cite{caminada2006} is based on the idea of prescribing the maximization of both the arguments included in an extension and those attacked by it, i.e. of maximizing the extension range.

\begin{definition}[Range]
\label{def_range_afra}
Let $\anAFRA = \tup{\A}{\R}$ be an \AFRA{} 
and let $\unsetAFRA \subseteq \A \cup \R$ be a set of arguments and attacks. 
The range of $\unsetAFRA$, denoted as $\range{\unsetAFRA}$, 
is defined as $\unsetAFRA \cup \unsetAFRA^{+}$ where $\unsetAFRA^{+} = \set{\elementv \in \A \cup \R | \exists \attackalpha \in \unsetAFRA \textrm{ s.t. } \AFRattacks{\attackalpha}{\elementv}}$.
\end{definition}

\begin{definition}[Semi-stable extension]
\label{def_semi_afra}
Let $\anAFRA = \tup{\A}{\R}$ be an \AFRA, a set $\unsetAFRA \subseteq \A \cup \R$ is a semi-stable extension iff $\unsetAFRA$ is a complete extension with maximal (w.r.t set inclusion) range. 
\end{definition}

Proposition \ref{prop_afra_semi} summarizes the relations of semi-stable with stable and preferred semantics in \AFRA, paralleling those holding in \AF.

\begin{proposition}
\label{prop_afra_semi}
For any $\AFRA$ $\anAFRA = \tup{\A}{\R}$
\begin{enumerate}
\item if stable extensions exist, then they coincide with semi-stable extensions;
\item every semi-stable extension is preferred but not viceversa.
\end{enumerate}
\end{proposition}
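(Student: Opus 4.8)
The plan is to transfer the classical \AF{} arguments to \AFRA, the only genuinely new ingredient being the interplay between the range operator of Definition \ref{def_range_afra} and conflict-freeness. As a preliminary I would record the easy monotonicity fact that $\SUB \subseteq \SUB'$ implies $\range{\SUB} \subseteq \range{\SUB'}$: indeed $\SUB \subseteq \SUB'$ directly, and any $\attackalpha \in \SUB$ defeating some $\elementv$ also lies in $\SUB'$, whence $\SUB^{+} \subseteq (\SUB')^{+}$.

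For item~1 I would prove both inclusions. If \SUB{} is a stable extension then, as already observed in the proof of Lemma \ref{lemma_afra_stable_preferred}, \SUB{} is complete; moreover, by Definition \ref{def_stab_afra}, every $\elementv \in (\A \cup \R)\setminus\SUB$ is defeated by some element of \SUB, so $\elementv \in \SUB^{+}$ and hence $\range{\SUB} = \A \cup \R$, which is the largest range possible. Thus \SUB{} is a complete extension with maximal range, i.e.\ semi-stable. Conversely, assume some stable extension exists, and let \E{} be an arbitrary semi-stable extension; by the previous step that stable extension is itself semi-stable with range $\A \cup \R$, so maximality of $\range{\E}$ forces $\range{\E} = \A \cup \R$. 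Since \E{} is complete it is conflict-free, and since every $\elementv \notin \E$ lies in $\range{\E} = \E \cup \E^{+}$, it must lie in $\E^{+}$, i.e.\ be defeated by some element of \E; hence \E{} is stable by Definition \ref{def_stab_afra}. Therefore, whenever stable extensions exist, the stable and the semi-stable extensions coincide.

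For item~2, let \E{} be semi-stable, hence complete and in particular admissible (Definition \ref{def_semi_afra}). Suppose \E{} were not preferred; then it is not a maximal admissible set, so by Theorem \ref{thm_adm_in_pref} there is a preferred extension \E'' with $\E \subsetneq \E''$, which is complete by Lemma \ref{lemma_preferred_as_complete}. Monotonicity yields $\range{\E} \subseteq \range{\E''}$, and I would argue this inclusion is strict: picking $\elementv \in \E''\setminus\E$, if $\elementv$ belonged to $\range{\E} = \E \cup \E^{+}$ then, being outside \E, it would be defeated by some $\attackalpha \in \E \subseteq \E''$, contradicting conflict-freeness of \E''; hence $\elementv \in \range{\E''}\setminus\range{\E}$. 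So \E'' is a complete extension whose range strictly contains that of \E, contradicting the maximality of $\range{\E}$; therefore \E{} is preferred. For the failure of the converse, it suffices to exhibit a preferred extension that is not semi-stable: in Example \ref{ex_semantics}, $\set{\arga, \argb, \argc, \arge, \attackbeta, \attackgamma, \attackzeta}$ is preferred, but its range is properly contained in the range of the other (complete) preferred extension $\set{\argb, \argc, \arge, \argf, \attackbeta, \attackgamma, \attacktheta, \attackiota}$, so it is not semi-stable. The defeat bookkeeping (including the range computations for the Example \ref{ex_semantics} witness) is entirely mechanical; the only step requiring care is the strict-inclusion argument, where conflict-freeness of the larger preferred extension is precisely what prevents the new element from already lying in $\range{\E}$.
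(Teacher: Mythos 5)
Your proof is correct and follows essentially the same route as the paper's: stable extensions have range $\A \cup \R$ and hence are semi-stable, a non-preferred semi-stable extension would be strictly contained in a complete extension of strictly larger range, and Example \ref{ex_semantics} witnesses the failure of the converse. You merely supply details the paper leaves implicit, namely the converse direction of item 1 and the conflict-freeness argument showing the range inclusion is strict (and, usefully, you route the item-2 contradiction through Theorem \ref{thm_adm_in_pref} and Lemma \ref{lemma_preferred_as_complete} so that the competing set is a \emph{complete} extension, as Definition \ref{def_semi_afra} requires, whereas the paper compares with a merely admissible superset).
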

\begin{proof}
As to the first point, note that, by definition, the range of any stable extension coincides with $\A \cup \R$, which is of course the largest possible one. Moreover stable extensions are admissible sets by Lemma \ref{lemma_afra_stable_preferred}, hence the conclusion.
As to the second point, suppose a semi-stable extension $\unsetAFRA$ is not preferred, i.e. there is an admissible set $\unsetAFRA'$ strictly including it: the range of $\unsetAFRA'$ strictly includes the range of $\unsetAFRA$, contradicting the hypothesis that $\unsetAFRA$ is a semi-stable extension.
On the other hand there are preferred extensions which are not semi-stable as in Example \ref{ex_semantics} (see below).
\end{proof}

\begin{examplecont}[\ref{ex_bob_afra}]
The only semi-stable extension of \bobsAFRA{} is $\set{\arga,$ $\argp,$ $\argg,$ $\attackepsilon,$ $\attackgamma,$ $\attackalpha}$.
\end{examplecont}

\begin{examplecont}[\ref{ex_semantics}]
Consider the preferred extensions of $\anotherAFRA$. Letting $\unsetAFRA=\set{\argb,$ $\argc,$ $\arge,$ $\argf,$ $\attackbeta,$ $\attackgamma,$ $\attacktheta,$ $\attackiota}$ it holds $\unsetAFRA^{+}=\set{\argg, \arga, \attackalpha, \attackzeta, \attackkappa}$. On the other hand letting $\unsetAFRA_{*}=\set{\arga,$ $\argb,$ $\argc,$ $\arge,$ $\attackbeta,$ $\attackgamma,$ $\attackzeta}$ it holds $\unsetAFRA_{*}^{+}=\set{\argf,\attackalpha, \attacktheta, \attackiota}$. $\unsetAFRA$ is the only semi-stable extension of $\anotherAFRA$ since $(\unsetAFRA \cup \unsetAFRA^{+}) \supsetneq (\unsetAFRA_{*} \cup \unsetAFRA_{*}^{+})$.
\end{examplecont}

\subsection{Ideal semantics}

Ideal semantics \cite{dungetal2006} considers the largest admissible set included in all preferred extensions.

\begin{definition}[Ideal extension]
\label{def_ideal_afra}
Let $\anAFRA = \tup{\A}{\R}$ be an \AFRA. A set $\unsetAFRA \subseteq \A \cup \R$ is ideal iff $\unsetAFRA$ is admissible and $\forall \aPreferredExtension$ s.t. $\aPreferredExtension$ is a preferred extension of $\anAFRA$, $\unsetAFRA \subseteq \aPreferredExtension$. The ideal extension is the maximal (w.r.t. set inclusion) ideal set.
\end{definition}

Definition \ref{def_ideal_afra} anticipates the uniqueness of ideal extension shown in the following proposition.

\begin{proposition}
The ideal extension is unique.
\end{proposition}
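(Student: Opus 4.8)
The plan is to mirror the classical Dung argument for uniqueness of the ideal extension, which proceeds by showing that the ideal sets are closed under (finite, and in fact arbitrary) union; uniqueness of a maximal element then follows immediately. First I would fix an \AFRA{} $\anAFRA = \tup{\A}{\R}$ and let $\unsetAFRA_1, \unsetAFRA_2 \subseteq \A \cup \R$ be two ideal sets; the goal is to show $\unsetAFRA_1 \cup \unsetAFRA_2$ is again an ideal set. Granting this, if $\unsetAFRA, \unsetAFRA'$ were two distinct maximal ideal sets, then $\unsetAFRA \cup \unsetAFRA'$ would be an ideal set strictly larger than at least one of them, contradicting maximality; hence the ideal extension, defined as the maximal ideal set, is unique.

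The containment condition is the easy half: since $\unsetAFRA_1 \subseteq \aPreferredExtension$ and $\unsetAFRA_2 \subseteq \aPreferredExtension$ for every preferred extension $\aPreferredExtension$ of $\anAFRA$, we immediately get $\unsetAFRA_1 \cup \unsetAFRA_2 \subseteq \aPreferredExtension$ for all such $\aPreferredExtension$. So the real work is to verify that $\unsetAFRA_1 \cup \unsetAFRA_2$ is admissible, i.e. conflict-free and self-defending. Conflict-freeness is where I expect the one genuine subtlety to sit: unlike in Dung's \AF{}, here I cannot argue purely locally, but I can use the fact that every preferred extension is conflict-free (it is admissible) together with the fact that every \AFRA{} has at least one preferred extension (Corollary~\ref{cor_pref_exist}). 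Pick any preferred extension $\aPreferredExtension$; then $\unsetAFRA_1 \cup \unsetAFRA_2 \subseteq \aPreferredExtension$, and since $\aPreferredExtension$ is conflict-free so is every subset of it, in particular $\unsetAFRA_1 \cup \unsetAFRA_2$.

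For acceptability, I would take an arbitrary $\elementv \in \unsetAFRA_1 \cup \unsetAFRA_2$, say $\elementv \in \unsetAFRA_1$ without loss of generality. Since $\unsetAFRA_1$ is admissible, $\elementv$ is acceptable w.r.t. $\unsetAFRA_1$; since $\unsetAFRA_1 \subseteq \unsetAFRA_1 \cup \unsetAFRA_2$, monotonicity of the characteristic function (Proposition~\ref{prop_monotonic}) gives that $\elementv$ is acceptable w.r.t. $\unsetAFRA_1 \cup \unsetAFRA_2$. Thus every element of the union is acceptable w.r.t. the union, so the union is admissible, and combined with the containment observation it is an ideal set. This closes the argument. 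The only delicate point to state carefully is the appeal to Corollary~\ref{cor_pref_exist} to guarantee a preferred extension exists (so that the conflict-freeness transfer has something to latch onto, and so that the family of ideal sets is nonempty — note the empty set is always admissible and trivially contained in every preferred extension, hence always an ideal set, which also secures existence of the maximal ideal set by Zorn's lemma once closure under union is established).
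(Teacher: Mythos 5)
Your proof is correct and follows essentially the same route as the paper's: both show that the union of two ideal sets is again ideal (containment in all preferred extensions is immediate, conflict-freeness transfers from a preferred extension via Corollary~\ref{cor_pref_exist}, and acceptability of each element follows from monotonicity of the characteristic function), so two distinct maximal ideal sets would be contradicted by their strictly larger union. The paper's version is merely terser, leaving implicit the points you spell out (existence of a preferred extension and the appeal to Proposition~\ref{prop_monotonic}).
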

\begin{proof}
Suppose that there are two distinct maximal ideal sets $\unsetAFRA$ and $\unsetAFRA'$ complying with Definition \ref{def_ideal_afra}.
Now $\unsetAFRA \cup \unsetAFRA'$ is included in all the preferred extensions, hence it is conflict-free, and defends all its elements, hence it is also admissible. Therefore $\unsetAFRA \cup \unsetAFRA'$ is a larger ideal set than $\unsetAFRA$ and $\unsetAFRA'$, contradicting the hypothesis.
\end{proof}

It can also be seen that the ideal extension includes all acceptable elements,
\ie{} it is a complete extension.

\begin{lemma}
\label{lemma_ideal_as_complete}
 The ideal extension is a complete extension.
\begin{proof}
 The ideal extension is admissible by definition, 
 thus it is sufficient to show that it includes any element $\elementv$
 which is acceptable w.r.t. it.
 Since the ideal extension is contained in any preferred extension, 
 it is easy to see that $\elementv$ is acceptable w.r.t any preferred extension too.
 By Lemma \ref{lemma_preferred_as_complete} the preferred extensions are also complete,
 therefore they must all include $\elementv$.
 As a consequence, $\elementv$ is included in the ideal extension, otherwise including it
 would give rise by the fundamental lemma (Lemma \ref{lemma_fundamental_lemma}) 
 to a strictly greater admissible set
 contained in all preferred extensions, contradicting the maximality of the ideal extension.
\end{proof}
\end{lemma}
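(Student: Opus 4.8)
The plan is to show that the ideal extension, which I will denote $\SUB$, satisfies the fixed-point characterization of a complete extension, namely $\charfunction{\anAFRA}(\SUB) \subseteq \SUB$. Since $\SUB$ is admissible directly by Definition~\ref{def_ideal_afra}, this inclusion is all that remains to be established in view of Definition~\ref{def_complete_afra}.

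First I would fix an arbitrary element $\elementv \in \A \cup \R$ that is acceptable with respect to $\SUB$, with the goal of proving $\elementv \in \SUB$. The key observation is that acceptability propagates upward along set inclusion: $\SUB$ is contained in every preferred extension $\aPreferredExtension$ of $\anAFRA$ (this is part of being an ideal set, Definition~\ref{def_ideal_afra}), and $\charfunction{\anAFRA}$ is monotonic (Proposition~\ref{prop_monotonic}), so from $\elementv \in \charfunction{\anAFRA}(\SUB)$ we get $\elementv \in \charfunction{\anAFRA}(\aPreferredExtension)$, i.e. $\elementv$ is acceptable w.r.t. $\aPreferredExtension$. By Lemma~\ref{lemma_preferred_as_complete} every preferred extension is complete, hence $\elementv \in \aPreferredExtension$ for every preferred extension $\aPreferredExtension$ of $\anAFRA$. (By Corollary~\ref{cor_pref_exist} at least one preferred extension exists, so this conclusion is not vacuous.)

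Next I would consider $\SUB' = \SUB \cup \set{\elementv}$. Applying the fundamental lemma (Lemma~\ref{lemma_fundamental_lemma}) with the admissible set $\SUB$ and the acceptable element $\elementv$ yields that $\SUB'$ is admissible. Combining this with the previous step, $\SUB' \subseteq \aPreferredExtension$ for every preferred extension $\aPreferredExtension$, so $\SUB'$ is an ideal set. By the maximality of $\SUB$ among ideal sets we conclude $\SUB' \subseteq \SUB$, that is $\elementv \in \SUB$. Since $\elementv$ was arbitrary, $\charfunction{\anAFRA}(\SUB) \subseteq \SUB$, and therefore $\SUB$ is a complete extension.

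I do not expect a genuine obstacle here: the argument is essentially the transposition of the \AF{} case, and every ingredient it uses --- monotonicity of $\charfunction{\anAFRA}$, completeness of preferred extensions, the fundamental lemma, and the defining property of the ideal extension --- has already been established for \AFRA{} in the preceding results. The only points meriting a little care are the ``upward propagation'' of acceptability (handled cleanly via monotonicity rather than by reasoning directly about defeats) and checking that $\SUB'$ qualifies as an ideal set, i.e. that it is both admissible \emph{and} below every preferred extension, before invoking maximality of $\SUB$.
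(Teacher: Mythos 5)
Your proof is correct and follows essentially the same route as the paper's: reduce to showing that every element acceptable w.r.t.\ the ideal extension already belongs to it, propagate acceptability up to the preferred extensions (you do this explicitly via monotonicity of $\charfunction{\anAFRA}$ where the paper says \virg{it is easy to see}), use completeness of preferred extensions to place the element in all of them, and then invoke the fundamental lemma together with maximality of the ideal extension. The only difference is your greater level of explicitness, which is harmless.
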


Since the grounded extension is included in all complete extensions (and hence in all preferred extensions) and is admissible, the ideal extension is a (possibly strict) superset of the grounded extension.

\begin{examplecont}[\ref{ex_bob_afra}]
The ideal extension of \bobsAFRA{} is $\set{\arga,$ $\argp,$ $\argg,$ $\attackepsilon,$ $\attackgamma,$ $\attackalpha}$.
\end{examplecont}

\begin{examplecont}[\ref{ex_semantics}]
The ideal extension of $\anotherAFRA$ is $\set{\argb,$ $\argc,$ $\arge,$ $\attackbeta,$ $\attackgamma}$.
\end{examplecont}

\section{Compatibility with \AF}
\label{sec_compatibility_afra_af}

In this section we prove the satisfaction of the compatibility requirement formally stated at the end of Section \ref{sec_motivations}  for the case where a given \GAF{} is an \AF. To be precise, throughout this section when stating \virg{let $\anAFRA = \tup{\A}{\R}$ be an \AF{}}, we will consider that $\anAFRA$ is an \AFRA{} such that attacks involve just arguments rather than being directed against other attacks (formally, $\R \subseteq \A \times \A$).

First of all, it is easy to see that, in this case, a dual property holds 
w.r.t. Lemma \ref{lemma_acceptability_source_attack}.

\begin{lemma}
  \label{lemma_af_acceptability_attack_source}
  Let $\anAFRA = \tup{\A}{\R}$ be an \AF{} and $\SUB \subseteq (\A \cup \R)$. 
  If an argument $\arga \in \A$ is acceptable w.r.t. \SUB, 
  then any $\attackalpha \in \R$ such that $\src{\attackalpha} = \arga$
  is acceptable w.r.t. \SUB.
\begin{proof}
  We prove that \attackalpha{} is defended by \SUB{} from any attack.
  For any \attackbeta{} such that $\AFRattacks{\attackbeta}{\attackalpha}$, 
  \attackbeta{} does not directly attack \attackalpha{} since $\tup{\A}{\R}$ is an \AF.
  As a consequence, it must be the case that $\AFRattacks{\attackbeta}{\src{\attackalpha}}$,
  \ie{} $\AFRattacks{\attackbeta}{\arga}$: since \arga{} is acceptable w.r.t. \SUB, 
  then $\exists \attackgamma \in \SUB$ s.t. $\AFRattacks{\attackgamma}{\attackbeta}$.
\end{proof}
\end{lemma}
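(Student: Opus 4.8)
The plan is to verify directly that $\attackalpha$ is defended by $\SUB$, \ie{} that every defeater of $\attackalpha$ is counter-defeated by some element of $\SUB$, which by Definition~\ref{acceptability_gaf} is precisely what acceptability w.r.t.\ $\SUB$ requires. So I would fix an arbitrary $\attackbeta \in \R$ with $\AFRattacks{\attackbeta}{\attackalpha}$ and, recalling that by Definition~\ref{defeat_gaf} this defeat is either direct or indirect, split into those two cases.

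The key observation --- and really the only step --- is that the direct case cannot occur. A direct defeat of $\attackalpha$ by $\attackbeta$ would require $\attackalpha = \trg{\attackbeta}$, hence $\trg{\attackbeta} \in \R$; but the \AF{} hypothesis $\R \subseteq \A \times \A$ forces the target of every attack to be an argument, not an attack. Thus $\AFRattacks{\attackbeta}{\attackalpha}$ must be an indirect defeat, and by Definition~\ref{defindirdef} this means $\attackbeta$ directly defeats $\src{\attackalpha} = \arga$, \ie{} $\AFRattacks{\attackbeta}{\arga}$. Since $\arga$ is acceptable w.r.t.\ $\SUB$ by hypothesis, there is some $\attackgamma \in \SUB$ with $\AFRattacks{\attackgamma}{\attackbeta}$; as $\attackbeta$ was an arbitrary defeater of $\attackalpha$, this establishes that $\attackalpha$ is acceptable w.r.t.\ $\SUB$.

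I do not expect a genuine obstacle here: the statement is a near-immediate consequence of the fact that in an \AF{} attacks are never targeted, so the only way to defeat an attack is through its source, which reduces the claim to the acceptability of $\arga$ already assumed in the hypothesis. It is worth remarking that this is the dual of Lemma~\ref{lemma_acceptability_source_attack} --- in the \AF{} case acceptability of the source argument and acceptability of the attack become equivalent --- and that the reasoning would fail in a genuine \AFRA, where $\attackbeta$ could defeat $\attackalpha$ directly without defeating $\arga$ at all.
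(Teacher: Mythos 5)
Your proof is correct and follows essentially the same route as the paper's: both observe that in an \AF{} no attack can be directly defeated (targets are always arguments), so any defeat of $\attackalpha$ must be indirect via $\src{\attackalpha}=\arga$, and then invoke the acceptability of $\arga$ to obtain the counter-defeater in $\SUB$. Your version merely makes the direct/indirect case split explicit, which the paper compresses into one sentence.
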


As anticipated above, we show that in the case of an \AF{}
the semantics defined in Section \ref{sec_semantics_afra_2}
reduce to those adopted in the context of the traditional Dung's framework.
Of course, this correspondence can only be established through a mapping,
since extensions in \AFRA, differently from those in the traditional Dung's framework, 
include both arguments and attacks. Accordingly, Definition \ref{defafraoperator} provides a natural way to extend sets of arguments (corresponding to traditional extensions) into sets of arguments and attacks (corresponding to \AFRA{} extensions).

\begin{definition}[\opexttoafra{} operator]\label{defafraoperator}
  Let $\anAFRA = \tup{\A}{\R}$ be an \AFRA. Given a set of arguments $\U \subseteq \A$,
  $\exttoafra{\U} \triangleq \U \cup \set{\attackalpha \in \R \mid \src{\attackalpha} \in \U}$.
\end{definition}

In words, given a set of arguments $\U \subseteq \A$, the \opexttoafra{} operator 
completes $\U$ with all of the attacks arising from it. This operator will play a key role in proving the satisfaction of compatibility requirements for all the considered semantics (Propositions \ref{prop_corr_af_complete}-\ref{prop_corr_af_ideal}). In fact, given a semantics $\gensem$, the compatibility requirement (in the case where a given \GAF{} is an \AF) will be expressed as a bijective correspondence, through the \opexttoafra{} operator, between (i) extensions prescribed by $\gensem$ in the traditional \AF{} formulation, and (ii) extensions prescribed by $\gensem$ in the \GAF{} formulation.
More specifically, we will show for each semantics $\gensem$ that if a set of arguments $\U$ is an \AF{} extension according to $\gensem$ then $\exttoafra{\U}$ is an \GAF{} extension according to $\gensem$ and, vice versa, if a set of arguments and attacks $\SUB$ is an \GAF{} extension according to $\gensem$ then there is a set of arguments $\U$ such that $\SUB=\exttoafra{\U}$ and $\U$ is an \AF{} extension according to $\gensem$.

The use of the \opexttoafra{} operator to prove these correspondences is supported by Lemmata \ref{lemma_acceptability_source_attack} and \ref{lemma_af_acceptability_attack_source}.
The relevant properties shown in Lemma \ref{lemma_afra_operator_properties} 
will be exploited in the following.

\begin{lemma}
\label{lemma_afra_operator_properties}
  Let $\anAFRA = \tup{\A}{\R}$ be an \GAF, and let $\U_1, \U_2 \subseteq \A$ two sets of arguments. It holds that:
  \begin{enumerate}
    \item $\U_1 \subseteq \U_2$ \ifff{} $\exttoafra{\U_1} \subseteq \exttoafra{\U_2}$
    \item $\U_1 \subsetneq \U_2$ \ifff{} $\exttoafra{\U_1} \subsetneq \exttoafra{\U_2}$
    \item $\exttoafra{\U_1} \cup \exttoafra{\U_2} = \exttoafra{(\U_1 \cup \U_2)}$
  \end{enumerate}
\begin{proof}
  \mbox{ }
  \begin{enumerate}
    \item As for the $\Rightarrow$ direction, let $\elementv \in \exttoafra{\U_1}$. 
          If \elementv{} is an argument then by definition $\elementv \in \U_1 \subseteq \U_2$
          thus $\elementv$ also belongs to $\exttoafra{\U_2}$. In the other case $\elementv \in \R$
          and by definition $\exists \arga \in \U_1 : \arga = \src{\elementv}$,
          thus $\arga \in \U_2$ and, again by definition, $\elementv \in \exttoafra{\U_2}$.
          As for the other direction, if an argument $\arga \in \U_1$ 
          then it belongs to $\exttoafra{\U_1} \subseteq \exttoafra{\U_2}$, 
          and since it is an argument then it must be the case that $\arga \in \U_2$. 
    \item Taking into account the previous point, for the $\Rightarrow$ direction we have just to show that,
          considering an argument $\arga \in \U_2$ such that $\arga \notin \U_1$,
          it holds by definition that $\arga \in \exttoafra{\U_2}$ but $\arga \notin \exttoafra{\U_2}$,
          entailing that $\exttoafra{\U_1} \subsetneq \exttoafra{\U_2}$. As for the other direction,
          by the hypothesis $\exists \elementv \in \exttoafra{\U_2} : \elementv \notin \exttoafra{\U_1}$.
          If $\elementv$ is an argument then by definition $\elementv \in \U_2$ and $\elementv \notin \U_1$,
          if $\elementv \in \R$ then these conditions hold for $\src{\elementv}$: 
          in any case, $\U_1 \subsetneq \U_2$.
    \item $\elementv \in (\exttoafra{\U_1} \cup \exttoafra{\U_2})$ $\Leftrightarrow$
    			$\elementv \in \exttoafra{\U_1}$ or $\elementv \in \exttoafra{\U_2}$ $\Leftrightarrow$
    			$\elementv \in \U_1 \vee \exists \arga \in \U_1 : \arga = \src{\elementv}
    			\vee \elementv \in \U_2 \vee \exists \argb \in \U_2 : \argb = \src{\elementv}$ $\Leftrightarrow$
    			$\elementv \in (\U_1 \cup \U_2) \vee \exists \argc \in (\U_1 \cup \U_2) : \argc = \src{\elementv}$ $\Leftrightarrow$
    			$\elementv \in \exttoafra{(\U_1 \cup \U_2)}$.
  \end{enumerate}
\end{proof}
\end{lemma}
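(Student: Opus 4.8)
The plan is to prove the three items in order, deriving each directly from Definition~\ref{defafraoperator} and using item~1 as a lemma in the proof of item~2. The one structural feature that matters throughout is that $\exttoafra{\cdot}$ adds to a set of arguments \emph{only} attack elements (those whose source lies in the set), so that membership of an \emph{argument} in $\exttoafra{\U}$ is equivalent to its membership in $\U$, while membership of an attack in $\exttoafra{\U}$ is equivalent to its source lying in $\U$.

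For item~1, in the $\Rightarrow$ direction I would take an arbitrary $\elementv \in \exttoafra{\U_1}$ and case-split on whether $\elementv$ is an argument or an attack: if $\elementv \in \A$ then $\elementv \in \U_1 \subseteq \U_2 \subseteq \exttoafra{\U_2}$; if $\elementv \in \R$ then by definition $\src{\elementv} \in \U_1 \subseteq \U_2$, hence $\elementv \in \exttoafra{\U_2}$. For $\Leftarrow$, given any argument $\arga \in \U_1$ we have $\arga \in \exttoafra{\U_1} \subseteq \exttoafra{\U_2}$, and since $\arga$ is an argument (not an attack) this forces $\arga \in \U_2$; thus $\U_1 \subseteq \U_2$.

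For item~2, item~1 already supplies the non-strict inclusion in both directions, so only strictness needs to be transferred. If $\U_1 \subsetneq \U_2$, pick an argument $\arga \in \U_2 \setminus \U_1$: then $\arga \in \exttoafra{\U_2}$ but $\arga \notin \exttoafra{\U_1}$ (again because $\arga$ is an argument), so the inclusion $\exttoafra{\U_1} \subseteq \exttoafra{\U_2}$ is strict. Conversely, if $\exttoafra{\U_1} \subsetneq \exttoafra{\U_2}$, pick $\elementv \in \exttoafra{\U_2} \setminus \exttoafra{\U_1}$; if $\elementv$ is an argument it lies in $\U_2 \setminus \U_1$, and if $\elementv$ is an attack then $\src{\elementv} \in \U_2 \setminus \U_1$ by the membership conditions, so in either case $\U_1 \subsetneq \U_2$. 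For item~3 I would write a single chain of equivalences: $\elementv \in \exttoafra{\U_1} \cup \exttoafra{\U_2}$ iff $\elementv \in \exttoafra{\U_1}$ or $\elementv \in \exttoafra{\U_2}$, iff $\elementv$ is an argument of $\U_1$ or of $\U_2$ or an attack whose source lies in $\U_1$ or in $\U_2$, iff $\elementv$ is an argument of $\U_1 \cup \U_2$ or an attack whose source lies in $\U_1 \cup \U_2$, iff $\elementv \in \exttoafra{(\U_1 \cup \U_2)}$; the only thing worth noting is that the argument/attack case-split commutes with the union.

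I do not expect a genuine obstacle: all three statements fall out of unfolding the definition of the \opexttoafra{} operator and are essentially set-membership bookkeeping. The mild care points are (i) in the converse directions of items~1 and~2, to restrict attention to \emph{argument} witnesses, so that membership collapses from $\exttoafra{\U_i}$ back to $\U_i$, and (ii) in item~2, to dispatch the non-strict part via item~1 before chasing the witness of strictness.
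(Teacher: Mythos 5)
Your proposal is correct and follows essentially the same route as the paper's own proof: the same argument/attack case split for the forward direction of item~1, the same restriction to argument witnesses for the converse directions of items~1 and~2 (with the source of an attack serving as witness where needed), and the same chain of membership equivalences for item~3. No gaps; your write-up even avoids a small typo in the paper's item~2 (where $\arga \notin \exttoafra{\U_2}$ should read $\arga \notin \exttoafra{\U_1}$).
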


A key role in proving the satisfaction of the compatibility requirement is played by showing in Proposition \ref{prop_corr_af_complete} that the desired bijective correspondence between \GAF{} and \AF{} extensions holds for the case of complete semantics.

\begin{proposition}
\label{prop_corr_af_complete}
  Let $\anAFRA = \tup{\A}{\R}$ be an \AF. 
  Then, $\SUB$ is a complete extension of $\anAFRA$ \ifff{} 
  $\SUB = \exttoafra{\U}$ where $\U$ is a \dungcomplete{} extension of $\anAFRA$.
\begin{proof}
  $\Rightarrow$. We first show that $\SUB = \exttoafra{(\SUB \cap \A)}$.
  In fact, for any $\elementv \in \SUB$ 
  if $\elementv \in \A$ then it obviously belongs to $\exttoafra{(\SUB \cap \A)}$.
  In the other case, namely $\elementv \in \R$, $\elementv$ is acceptable w.r.t. $\SUB$ 
  since it belongs to $\SUB$ which is a complete extension,
  thus by Lemma \ref{lemma_acceptability_source_attack} $\src{\elementv} \in (\SUB \cap \A)$,
  which by definition of the \opexttoafra{} operator entails $\elementv \in \exttoafra{(\SUB \cap \A)}$.
  On the other hand, for any $\elementv \in \exttoafra{(\SUB \cap \A)}$
  if $\elementv \in \A$ then it obviously belongs to $\SUB$;
  in the other case $\src{\elementv} \in \SUB$ and $\elementv \in \SUB$ follows from
  Lemma \ref{lemma_af_acceptability_attack_source} and the fact that $\SUB$ is a complete extension. \\
  According to this result, we have to show that $(\SUB \cap \A)$ is a \dungcomplete{} extension. \\
  First, $(\SUB \cap \A)$ is \dungconffree, 
  otherwise there would exist $\arga, \argb \in (\SUB \cap \A)$ with $\attacksAF{\arga}{\argb}$,
  \ie{} letting $\attackalpha = (\arga, \argb)$ we would have 
  $\AFRattacks{\attackalpha}{\argb}$ with $\src{\attackalpha} = \arga$:
  since both $\argb$ and $\attackalpha$ belong to $\SUB = \exttoafra{(\SUB \cap \A)}$, $\SUB$ 
  would not be conflict-free, contradicting the hypothesis. \\
  Then, we show that $(\SUB \cap \A)$ is \dungadmissible, \ie{}
  given $\arga \in (\SUB \cap \A)$, for any $\argb \in \A$ such that $\attacksAF{\argb}{\arga}$
  $\exists \argc \in (\SUB \cap \A)$ such that $\attacksAF{\argc}{\argb}$.
  Since $\attacksAF{\argb}{\arga}$, letting $\attackalpha = (\argb, \arga)$
  yields $\AFRattacks{\attackalpha}{\arga}$ with $\src{\attackalpha} = \argb$.
  Since $\arga \in \SUB$ and $\SUB$ is admissible by the hypothesis,
  $\exists \attackbeta \in \SUB : \AFRattacks{\attackbeta}{\attackalpha}$,
  which taking into account that $\anAFRA$ is an \AF{} yields $\AFRattacks{\attackbeta}{\argb}$.
  Since $\attackbeta \in \SUB = \exttoafra{(\SUB \cap \A)}$, $\src{\attackbeta} \in \SUB$,
  and the thesis follows from $\attacksAF{\src{\attackbeta}}{\argb}$. \\
  Finally, we prove that $(\SUB \cap \A)$ is \dungcomplete{} by showing that,
  for any $\arga \in \A$ which is \dungacceptable{} w.r.t. $(\SUB \cap \A)$,
  $\arga$ is acceptable w.r.t. $\SUB$: 
  by the hypothesis that $\SUB$ is a complete extension it then follows that
  $\arga \in \SUB$, \ie{} $\arga \in (\SUB \cap \A)$.
  Let us then consider an attack $\attackalpha \in \R$ such that $\AFRattacks{\attackalpha}{\arga}$.
  Obviously this is equivalent to $\attacksAF{\src{\attackalpha}}{\arga}$,
  and since $\arga$ is \dungacceptable{} w.r.t. $(\SUB \cap \A)$,
  $\exists \argb \in (\SUB \cap \A) : \attacksAF{\argb}{\src{\attackalpha}}$.
  Letting $\attackbeta = (\argb, \src{\attackalpha})$,
  we have $\AFRattacks{\attackbeta}{\attackalpha}$,
  and since $\argb \in (\SUB \cap \A)$ 
  then also $\attackbeta \in \SUB = \exttoafra{(\SUB \cap \A)}$.
  Summing up, for any $\attackalpha \in \R$ such that $\AFRattacks{\attackalpha}{\arga}$
  $\exists \attackbeta \in \SUB$ such that $\AFRattacks{\attackbeta}{\attackalpha}$.
  
  $\Leftarrow$. We have to show that $\exttoafra{\U}$ is a complete extension, namely conflict-free, 
  admissible and including all acceptable elements. \\
  As to the first point, assume by contradiction that $\exists \attackalpha, \elementv \in \exttoafra{\U}$
  such that $\AFRattacks{\attackalpha}{\elementv}$.
  While $\attackalpha \in \R$ by definition, $\elementv$ either belongs to $\A$ or to $\R$.
  In the first case we have $\attacksAF{\src{\attackalpha}}{\elementv}$,
  and by definition of $\exttoafra{\U}$ both $\src{\attackalpha}$ and $\elementv$ belong to $\U$,
  contradicting the fact that $\U$ is \dungconffree. 
  In the other case, \ie{} $\elementv$ is an attack, since  $\anAFRA$ is an \AF{}
  we have $\AFRattacks{\attackalpha}{\src{\elementv}}$, 
  \ie{} $\attacksAF{\src{\attackalpha}}{\src{\elementv}}$.
  But $\attackalpha, \elementv \in \exttoafra{\U}$ entails $\src{\attackalpha}, \src{\elementv} \in \U$,
  again contradicting the fact that $\U$ is \dungconffree. \\
  To show that $\exttoafra{\U}$ is admissible, consider a generic $\elementv \in \exttoafra{\U}$
  and suppose that $\exists \attackalpha \in \R$ such that $\AFRattacks{\attackalpha}{\elementv}$.
  Taking into account that $\anAFRA$ is an \AF{} and that by definition $\exttoafra{\U}$ includes the sources of
  all the attacks it includes, it is easy to see that $\exists \arga \in \U$ 
  such that $\AFRattacks{\attackalpha}{\arga}$ (where $\arga$ is either $\elementv$ or $\src{\elementv}$).
  Therefore $\attacksAF{\src{\attackalpha}}{\arga}$, and since $\U$ is \dungadmissible{}
  $\exists \argb \in \U : \attacksAF{\argb}{\src{\attackalpha}}$.
  Letting $\attackbeta = (\argb, \src{\attackalpha})$, 
  we have $\AFRattacks{\attackbeta}{\attackalpha}$ and by definition of $\exttoafra{\U}$
  it is the case that $\attackbeta \in \exttoafra{\U}$. \\
  Finally, we have to show that if $\elementv$ is acceptable w.r.t. $\exttoafra{\U}$
  then $\elementv \in \exttoafra{\U}$.
  If \elementv{} is an argument, taking into account that \anAFRA{} is an \AF, the conclusion follows from the \dungcomplete ness of \U. If \elementv{} is an attack, we prove that $\src{\elementv}$ is \dungacceptable{} w.r.t. $\U$,
  which, taking into account that $\U$ is a \dungcomplete{} extension, implies that $\src{\elementv} \in \U$,
  in turn entailing $\elementv \in \exttoafra{\U}$ by the definition of $\exttoafra{\U}$.
  Thus, assume that there is $\arga \in \A$ such that $\attacksAF{\arga}{\src{\elementv}}$.
  Letting $\attackalpha = (\arga, \src{\elementv})$, $\AFRattacks{\attackalpha}{\elementv}$
  and since $\elementv$ is acceptable w.r.t. $\exttoafra{\U}$ which is admissible by the previous point,
  there is $\attackbeta \in \exttoafra{\U}$ such that $\AFRattacks{\attackbeta}{\attackalpha}$.
  Since $\anAFRA$ is an \AF{} it must be the case that $\AFRattacks{\attackbeta}{\arga}$,
  obviously entailing that $\attacksAF{\src{\attackbeta}}{\arga}$
  with $\src{\attackbeta} \in \U$ by the definition of $\exttoafra{\U}$.
\end{proof}
\end{proposition}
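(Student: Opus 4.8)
The plan is to prove the two directions separately. The forward direction hinges on the observation that a complete extension $\SUB$ of the \AFRA{} is completely recovered from its argument part $\SUB \cap \A$ through the $\opexttoafra$ operator, so that the only possible witnessing \dungcomplete{} extension is $\U := \SUB \cap \A$; the backward direction is then a direct verification that $\exttoafra{\U}$ satisfies Definition \ref{def_complete_afra}.

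For the $\Rightarrow$ direction I would first show $\SUB = \exttoafra{(\SUB \cap \A)}$. The inclusion of $\exttoafra{(\SUB \cap \A)}$ in $\SUB$ follows because any attack $\attackalpha$ with $\src{\attackalpha} \in \SUB$ is acceptable w.r.t.\ $\SUB$ by Lemma \ref{lemma_af_acceptability_attack_source} (here we use that $\anAFRA$ is an \AF), hence lies in $\SUB$ by completeness; the reverse inclusion uses that every $\attackalpha \in \SUB$ is acceptable w.r.t.\ $\SUB$ (admissibility), so $\src{\attackalpha}$ is acceptable w.r.t.\ $\SUB$ by Lemma \ref{lemma_acceptability_source_attack} and hence in $\SUB$ by completeness, placing $\src{\attackalpha}$ in $\SUB \cap \A$ and $\attackalpha$ in $\exttoafra{(\SUB \cap \A)}$. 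Granting this, it remains to check that $\U = \SUB \cap \A$ is \dungconffree, is \dungadmissible, and contains every \dungacceptable{} argument, \ie{} that it is a \dungcomplete{} extension. Each of these is obtained by the routine translation between an \AF{} attack $\attacksAF{\arga}{\argb}$ and the \AFRA{} attack $(\arga,\argb)$ (whose source is $\arga$), keeping in mind that in an \AF{} an attack can only \emph{directly} defeat arguments and only \emph{indirectly} defeat attacks. For example, if $\arga \in \U$ is attacked by $\argb$ in the \AF, then $(\argb,\arga)$ defeats $\arga$ in the \AFRA{} sense, so admissibility of $\SUB$ yields a defeater $\attackbeta \in \SUB$ of $(\argb,\arga)$; as $\anAFRA$ is an \AF{} this must be an indirect defeat, so $\trg{\attackbeta} = \argb$, \ie{} $\attacksAF{\src{\attackbeta}}{\argb}$ with $\src{\attackbeta} \in \U$. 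The completeness clause is handled symmetrically: a defeater $\attackalpha$ of a \dungacceptable{} argument $\arga$ yields $\attacksAF{\src{\attackalpha}}{\arga}$; since $\arga$ is \dungacceptable{} w.r.t.\ $\U$ there is $\argc \in \U$ with $\attacksAF{\argc}{\src{\attackalpha}}$, and $(\argc,\src{\attackalpha}) \in \SUB = \exttoafra{(\SUB\cap\A)}$ defeats $\attackalpha$.

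For the $\Leftarrow$ direction I would take a \dungcomplete{} extension $\U$ and verify that $\exttoafra{\U}$ is conflict-free, admissible, and closed under acceptability. Conflict-freeness: a defeat inside $\exttoafra{\U}$ reduces, after passing to sources (again using that in an \AF{} both a direct and an indirect defeat of an element amount to an \AF{} attack between the relevant arguments, which lie in $\U$ by the definition of $\opexttoafra$), to a contradiction with \dungconffree ness of $\U$. Admissibility: a defeater $\attackalpha$ of some $\elementv \in \exttoafra{\U}$ yields $\attacksAF{\src{\attackalpha}}{\arga}$ with $\arga \in \U$ equal to $\elementv$ or to $\src{\elementv}$; since $\arga$ is \dungacceptable{} w.r.t.\ $\U$ there is $\argb \in \U$ with $\attacksAF{\argb}{\src{\attackalpha}}$, and $(\argb,\src{\attackalpha}) \in \exttoafra{\U}$ is the required \AFRA{} defeater. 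For the closure clause I would split on whether the acceptable element is an argument --- then it is \dungacceptable{} w.r.t.\ $\U$ as well, so \dungcomplete ness puts it in $\U \subseteq \exttoafra{\U}$ --- or an attack $\elementv$, in which case I show $\src{\elementv}$ is \dungacceptable{} w.r.t.\ $\U$ by turning each \AF{} attack on $\src{\elementv}$ into an \AFRA{} defeat of $\elementv$ (an indirect defeat) and invoking the acceptability of $\elementv$ w.r.t.\ the admissible set $\exttoafra{\U}$ established in the previous point; then $\src{\elementv} \in \U$ forces $\elementv \in \exttoafra{\U}$ by the definition of $\opexttoafra$.

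I expect the only real difficulty to be bookkeeping: keeping straight in every sub-case the distinction between direct and indirect defeat, the fact that in an \AF{} an attack is defeated only indirectly --- through a defeat of its source --- and the constant passage between an \AF{} attack $\attacksAF{\arga}{\argb}$ and the \AFRA{} attack $(\arga,\argb)$ together with its source. The conceptual content is carried entirely by Lemmata \ref{lemma_acceptability_source_attack} and \ref{lemma_af_acceptability_attack_source} and by the elementary monotonicity and union properties of $\opexttoafra$ (Lemma \ref{lemma_afra_operator_properties}); care is chiefly needed so that every defeater produced in the admissibility and completeness arguments genuinely has its source in the appropriate argument set.
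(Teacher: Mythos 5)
Your proposal is correct and follows essentially the same route as the paper's proof: the key step $\SUB = \exttoafra{(\SUB \cap \A)}$ established via Lemmata \ref{lemma_acceptability_source_attack} and \ref{lemma_af_acceptability_attack_source}, followed by the same case-by-case translation between \AF{} attacks and \AFRA{} defeats (using that in an \AF{} an attack can only be defeated indirectly, through its source) for conflict-freeness, admissibility and the closure clause in both directions. The only slight discrepancy is that Lemma \ref{lemma_afra_operator_properties} is not actually needed here --- the paper reserves it for the subsequent correspondence results --- but this does not affect the argument.
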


To exemplify this correspondence, consider the following example graphically represented in Figure \ref{fig:afraasaf}.

\begin{example}
\label{ex_afraasaf}
Let $\athirdAFRA = \tup{\set{A,B,C,D}}{\set{\alpha,\beta,\gamma,\delta, \epsilon, \zeta}}$ be an \GAF{} where: $\alpha=(A,B)$, $\beta=(B,A)$, $\gamma=(A,C)$, $\delta=(B,C)$, $\epsilon=(C,D)$, $\zeta=(D,C)$. \athirdAFRA{} is clearly also an \AF.
The \dungcomplete{} extensions of $\athirdAFRA$ are $\emptyset$, $\set{D}$, $\set{A, D}$, $\set{B, D}$ while the complete extensions (as defined in \AFRA) are $\emptyset$, $\set{D, \zeta}$, $\set{A, D, \alpha, \gamma, \zeta}$, $\set{B, D, \beta, \delta, \zeta}$.
It is easy to see that any \dungcomplete{} extension can be extended to a corresponding complete extension 
adding the attacks that arise from it (through the \opexttoafra{} operator) and, conversely, that any complete extension corresponds to a \dungcomplete{} extension if we consider only the arguments included in it.
\end{example}

\begin{figure}[ht]
	\centering
	\includegraphics[scale=0.25]{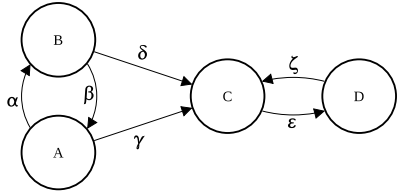}
	\caption{Graphical representation of Example \ref{ex_afraasaf}.}
	\label{fig:afraasaf}
\end{figure}

This result can be extended to prove an analogous correspondence
between the preferred and the \dungpreferred{} extensions, the grounded and the \dunggrounded{} extension,
as well as the stable and the \dungstable{} extensions.

\begin{proposition}
\label{prop_corr_pref_complete}
  Let $\anAFRA = \tup{\A}{\R}$ be an \AF. 
  Then, $\SUB$ is a preferred extension of $\anAFRA$ \ifff{} 
  $\SUB = \exttoafra{\U}$ where $\U$ is a \dungpreferred{} extension of $\anAFRA$.
\begin{proof}
  $\Rightarrow$. If $\SUB$ is a preferred extension, then by Lemma \ref{lemma_preferred_as_complete}
  it is also a complete extension, therefore by Proposition \ref{prop_corr_af_complete}
  $\SUB = \exttoafra{\U}$ where $\U$ is a \dungcomplete{} extension of $\anAFRA$.
  Assume by contradiction that $\U$ is not a \dungpreferred{} extension of $\anAFRA$:
  then there is a \dungpreferred{} extension (which is also a complete extension) 
  $\U' \subseteq \A$ such that $\U \subsetneq \U'$, 
  entailing by Lemma \ref{lemma_afra_operator_properties}(2) that $\exttoafra{\U} \subsetneq \exttoafra{\U'}$.
  Furthermore, Proposition \ref{prop_corr_af_complete} entails that
  $\exttoafra{\U'}$ is a complete extension, and in particular an admissible set:
  but this contradicts the fact that $\exttoafra{\U} = \SUB$ is a preferred extension.
  
  $\Leftarrow$. If $\U$ is a \dungpreferred{} extension of $\anAFRA$ it is in particular a \dungcomplete{} extension,
  therefore Proposition \ref{prop_corr_af_complete} entails that $\SUB = \exttoafra{\U}$ 
  is a complete extension of $\anAFRA$.
  Assuming by contradiction that it is not maximal, 
  by Theorem \ref{thm_adm_in_pref} and Lemma \ref{lemma_preferred_as_complete}
  there is a complete extension $\SUB'$ such that $\SUB \subsetneq \SUB'$,
  and by Proposition \ref{prop_corr_af_complete} it turns out that 
  $\SUB' = \exttoafra{\U'}$ where $\U'$ is a \dungcomplete{} extension of $\anAFRA$.
  However, $\SUB \subsetneq \SUB'$ entails by Lemma \ref{lemma_afra_operator_properties}(2) that
  $\U \subsetneq \U'$, contradicting the fact that $\U$ is a \dungpreferred{} extension of $\anAFRA$.
\end{proof}
\end{proposition}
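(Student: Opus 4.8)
The plan is to piggyback on the correspondence already established for complete semantics in Proposition \ref{prop_corr_af_complete}, observing that in both frameworks the preferred extensions are precisely the maximal complete extensions (for \AFRA{} this is Lemma \ref{lemma_preferred_as_complete} together with Theorem \ref{thm_adm_in_pref}; for \AF{} it is the analogous classical fact). Since by Proposition \ref{prop_corr_af_complete} and Lemma \ref{lemma_afra_operator_properties} the map $\exttoafra{(\cdot)}$ is a strictly monotone bijection between the \dungcomplete{} extensions and the complete extensions, it must carry maximal elements to maximal elements and back, which is exactly what we need. So the whole argument reduces to transporting maximality through this order isomorphism.

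Concretely, for the ``$\Rightarrow$'' direction I would take a preferred extension $\SUB$ of $\anAFRA$ (regarded as an \AFRA); by Lemma \ref{lemma_preferred_as_complete} it is complete, so Proposition \ref{prop_corr_af_complete} gives $\SUB = \exttoafra{\U}$ for some \dungcomplete{} $\U$. If $\U$ were not \dungpreferred{}, there would be a \dungpreferred{} (hence \dungcomplete) extension $\U'$ with $\U \subsetneq \U'$; Lemma \ref{lemma_afra_operator_properties}(2) then yields $\exttoafra{\U} \subsetneq \exttoafra{\U'}$, and Proposition \ref{prop_corr_af_complete} makes $\exttoafra{\U'}$ a complete, hence admissible, set strictly containing $\SUB$ --- contradicting the maximality of $\SUB$ as an admissible set. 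For ``$\Leftarrow$'', I would start from a \dungpreferred{} (a fortiori \dungcomplete) extension $\U$, so that $\SUB = \exttoafra{\U}$ is complete by Proposition \ref{prop_corr_af_complete}. If $\SUB$ were not a maximal admissible set, Theorem \ref{thm_adm_in_pref} (combined with Lemma \ref{lemma_preferred_as_complete}) would supply a preferred, hence complete, extension $\SUB'$ with $\SUB \subsetneq \SUB'$; applying Proposition \ref{prop_corr_af_complete} again writes $\SUB' = \exttoafra{\U'}$ for a \dungcomplete{} $\U'$, and Lemma \ref{lemma_afra_operator_properties}(2) forces $\U \subsetneq \U'$, contradicting the \dungpreferred ness of $\U$.

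There is no real obstacle here: the genuinely substantial work has already been done in Proposition \ref{prop_corr_af_complete}. The only two points needing a moment's care are the strictness of the inclusions under $\exttoafra{(\cdot)}$ --- which is exactly Lemma \ref{lemma_afra_operator_properties}(2) --- and, on the \AFRA{} side, the passage from ``not maximal admissible'' to ``strictly contained in a preferred extension'', which is guaranteed by Theorem \ref{thm_adm_in_pref}. Once these are invoked, both directions are short contradiction arguments.
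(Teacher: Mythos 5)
Your proposal is correct and follows essentially the same route as the paper's own proof: reduce to the complete-semantics correspondence of Proposition \ref{prop_corr_af_complete}, then transport maximality in both directions via Lemma \ref{lemma_afra_operator_properties}(2), using Lemma \ref{lemma_preferred_as_complete} and Theorem \ref{thm_adm_in_pref} exactly where the paper does. No gaps.
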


\begin{proposition}
  Let $\anAFRA = \tup{\A}{\R}$ be an \AF{} and let $\SUB$ be the grounded extension of $\anAFRA$. 
  Then, $\SUB = \exttoafra{\U}$ where $\U$ is the \dunggrounded{} extension of $\anAFRA$.
\begin{proof}
  Since $\SUB$ is by definition a complete extension, by Proposition \ref{prop_corr_af_complete}
  it is the case that $\SUB = \exttoafra{\U}$ with $\U$ a \dungcomplete{} extension of $\anAFRA$.
  Assume by contradiction that $\U$ is not the least \dungcomplete{} extension.
  Then, letting $\U'$ the \dunggrounded{} extension of $\anAFRA$, we have
  $\U' \subsetneq \U$ which by Lemma \ref{lemma_afra_operator_properties}(2) entails that
  $\exttoafra{\U'} \subsetneq \exttoafra{\U}$, where $\exttoafra{\U'}$ is by Proposition \ref{prop_corr_af_complete}
  a complete extension of $\anAFRA$: but this contradicts the fact that $\SUB = \exttoafra{\U}$ is the least complete extension of $\anAFRA$.
\end{proof} 
\end{proposition}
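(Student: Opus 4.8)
The plan is to piggy-back on the complete-semantics correspondence of Proposition~\ref{prop_corr_af_complete}, transporting the characterisation of the grounded extension as a \emph{least} complete extension across the bijection induced by the \opexttoafra{} operator.

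First I would note that the grounded extension $\SUB$ of the \AFRA{} $\anAFRA$ is a complete extension; this is precisely Lemma~\ref{lemma_grounded_as_complete}. Consequently Proposition~\ref{prop_corr_af_complete} applies and gives a \dungcomplete{} extension $\U \subseteq \A$ of $\anAFRA$ (regarded as an \AF) with $\SUB = \exttoafra{\U}$. What remains is to show that this particular $\U$ is in fact the \dunggrounded{} extension, i.e.\ the least \dungcomplete{} extension of $\anAFRA$.

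For that I would argue by contradiction. Assume $\U$ is not the least \dungcomplete{} extension and let $\U'$ denote the \dunggrounded{} extension of $\anAFRA$. Since the \dunggrounded{} extension is contained in every \dungcomplete{} extension we have $\U' \subseteq \U$, and since $\U' \ne \U$ by assumption this sharpens to $\U' \subsetneq \U$. Strict monotonicity of \opexttoafra{} (Lemma~\ref{lemma_afra_operator_properties}(2)) then yields $\exttoafra{\U'} \subsetneq \exttoafra{\U} = \SUB$. On the other hand, applying the $\Leftarrow$ direction of Proposition~\ref{prop_corr_af_complete} to $\U'$ shows that $\exttoafra{\U'}$ is a complete extension of $\anAFRA$. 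But $\SUB$, being the grounded extension, is the least complete extension of $\anAFRA$ (again Lemma~\ref{lemma_grounded_as_complete}), so $\SUB \subseteq \exttoafra{\U'}$, contradicting $\exttoafra{\U'} \subsetneq \SUB$.

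I do not expect any genuine obstacle here: the whole argument is a routine transfer of ``minimality among complete extensions'' along the order-isomorphism established in Lemma~\ref{lemma_afra_operator_properties} and Proposition~\ref{prop_corr_af_complete}. The one place that needs a moment's attention is the step that turns $\U' \ne \U$ into the strict inclusion $\U' \subsetneq \U$ required by strict monotonicity; this is settled by the standard fact from Dung's theory that the \dunggrounded{} extension sits below every \dungcomplete{} extension.
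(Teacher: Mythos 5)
Your proposal is correct and follows essentially the same route as the paper's own proof: transport the grounded extension through Proposition~\ref{prop_corr_af_complete}, then derive a contradiction with least-ness of the complete extension via the strict monotonicity of the \opexttoafra{} operator from Lemma~\ref{lemma_afra_operator_properties}(2). Your explicit justification of the step from $\U' \ne \U$ to $\U' \subsetneq \U$ (via containment of the \dunggrounded{} extension in every \dungcomplete{} extension) is a detail the paper leaves implicit, but the argument is the same.
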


\begin{proposition}
  Let $\anAFRA = \tup{\A}{\R}$ be an \AF. 
  Then, $\SUB$ is a stable extension of $\anAFRA$ \ifff{} 
  $\SUB = \exttoafra{\U}$ where $\U$ is a \dungstable{} extension of $\anAFRA$.
\begin{proof}
  $\Rightarrow$. If $\SUB$ is a stable extension, then by Lemma \ref{lemma_afra_stable_preferred}
  it is also a preferred and thus a complete extension, 
  therefore by Proposition \ref{prop_corr_af_complete} it is the case that
  $\SUB = \exttoafra{\U}$ where $\U$ is a \dungcomplete{} extension of $\anAFRA$ 
  and in particular a conflict-free set.
  To see that $\U$ is a \dungstable{} extension of $\anAFRA$,
  consider a generic $\arga \in \A, \arga \notin \U$.
  By definition of $\exttoafra{\U}$ it must be the case that $\arga \notin \exttoafra{\U}$,
  thus since $\SUB$ is a stable extension 
  $\exists \attackalpha \in \exttoafra{\U}$ such that $\AFRattacks{\attackalpha}{\arga}$.
  The conclusion follows from the fact that, by definition of $\exttoafra{\U}$, $\src{\attackalpha} \in \U$, 
  and $\attacksAF{\src{\attackalpha}}{\arga}$.
   
  $\Leftarrow$. Since $\U$ is a \dungstable{} extension of $\anAFRA$ and thus a \dungcomplete{} extension,
  by Proposition \ref{prop_corr_af_complete} it holds that
  $\SUB = \exttoafra{\U}$ is a complete extension and thus a conflict-free set of $\anAFRA$.
  Let $\elementv$ be a generic element not belonging to $\SUB$.
  If $\elementv \in \A$ then by definition of $\exttoafra{\U}$ it is the case that 
  $\elementv \notin \U$, thus $\elementv \in \dungplus{\U}$ since $\U$ is a \dungstable{} extension.
  If $\elementv \in \R$ then by definition of $\exttoafra{\U}$ it holds that $\src{\elementv} \notin \U$, 
  thus again $\src{\elementv} \in \dungplus{\U}$.
  In any case, $\exists \attackalpha$ with $\src{\attackalpha} \in \U$ such that 
  $\AFRattacks{\attackalpha}{\elementv}$,
  and the conclusion follows from the fact that, by definition of $\exttoafra{\U}$, $\attackalpha \in \SUB$.
\end{proof}
\end{proposition}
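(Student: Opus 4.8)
The plan is to let Proposition~\ref{prop_corr_af_complete} do the heavy lifting and then add only the extra bookkeeping that the stable condition requires. For the $\Rightarrow$ direction I would first observe, via Lemma~\ref{lemma_afra_stable_preferred} and Lemma~\ref{lemma_preferred_as_complete}, that a stable extension $\SUB$ of the \GAF{} is in particular complete; Proposition~\ref{prop_corr_af_complete} then gives $\SUB = \exttoafra{\U}$ for some \dungcomplete{} extension $\U$, which is in particular \dungconffree. It then remains only to verify the \dungstable{} condition on $\U$: given an arbitrary $\arga \in \A \setminus \U$, the definition of the \opexttoafra{} operator gives $\arga \notin \exttoafra{\U} = \SUB$, so stability of $\SUB$ yields some $\attackalpha \in \SUB$ with $\AFRattacks{\attackalpha}{\arga}$; since $\anAFRA$ is an \AF{} this defeat is necessarily direct, i.e.\ $\attacksAF{\src{\attackalpha}}{\arga}$, and $\src{\attackalpha} \in \U$ again by the operator definition, which is exactly what is needed for $\U$ to be \dungstable.

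For the $\Leftarrow$ direction I would start from a \dungstable{} extension $\U$, recall the classical fact that every \dungstable{} extension is \dungcomplete, and apply Proposition~\ref{prop_corr_af_complete} to conclude that $\SUB = \exttoafra{\U}$ is a complete extension, hence conflict-free. The remaining task is to show that $\SUB$ defeats everything outside it, which I would handle by cases on a fixed $\elementv \in (\A \cup \R) \setminus \SUB$. If $\elementv$ is an argument then $\elementv \notin \U$, so, since $\U$ is \dungstable, there is $\argb \in \U$ with $\attacksAF{\argb}{\elementv}$; if $\elementv$ is an attack then $\src{\elementv} \notin \U$ (otherwise $\elementv$ would already lie in $\exttoafra{\U}$), so there is $\argb \in \U$ with $\attacksAF{\argb}{\src{\elementv}}$. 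In either case the attack with source $\argb$ and target $\elementv$ (respectively $\src{\elementv}$) belongs to $\R$, hence to $\exttoafra{\U} = \SUB$ by the operator definition, and it defeats $\elementv$ --- directly in the first case, indirectly through $\src{\elementv}$ in the second. Thus $\SUB$ is a stable extension.

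The main obstacle here is one of care rather than depth. In the $\Leftarrow$ direction the case split on $\elementv$ must explicitly cover attack elements, exploiting the notion of \emph{indirect} defeat (Definition~\ref{defindirdef}) to reach an attack through its source, and one must note that the corresponding \AF{}-attack $(\argb, \src{\elementv})$ indeed belongs to $\R$. In the $\Rightarrow$ direction the hypothesis that $\anAFRA$ is an \AF{} is essential, since it excludes an attack being defeated by anything other than through its target or its source. Everything else reduces to Proposition~\ref{prop_corr_af_complete} and the routine operator properties collected in Lemma~\ref{lemma_afra_operator_properties}.
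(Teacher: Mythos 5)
Your proposal is correct and follows essentially the same route as the paper: both directions reduce to Proposition \ref{prop_corr_af_complete} via the chain stable $\Rightarrow$ preferred $\Rightarrow$ complete (resp.\ \dungstable{} $\Rightarrow$ \dungcomplete), and the residual stability check is handled by exactly the same case analysis, with the attack case in the $\Leftarrow$ direction resolved through indirect defeat via $\src{\elementv}$. The only cosmetic difference is that your appeal to the \AF{} hypothesis in the $\Rightarrow$ direction is not actually needed there, since a defeat whose target is an argument is direct by Definition \ref{defindirdef} in any \AFRA.
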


The relationships among \dungpreferred{} and preferred extensions, \dungstable{} and stable extensions, and \dunggrounded{} and grounded extensions can be easily identified in Example \ref{ex_afraasaf}.

\begin{examplecont}[\ref{ex_afraasaf}]
$\set{A, D}$ and $\set{B, D}$ are both \dungpreferred{} and \dungstable{} extensions of $\athirdAFRA$, while $\set{A, D, \alpha, \gamma, \zeta}$ and $\set{B, D, \beta, \delta, \zeta}$ are, correspondingly, both stable and preferred extensions as defined in \AFRA.
The \dunggrounded{} extension of $\athirdAFRA$ is $\emptyset$ and coincides with the grounded extension as defined in \AFRA{} (note that $\exttoafra{\emptyset}=\emptyset$).
\end{examplecont}

The bijective correspondence also holds for semi-stable semantics.
To show this we have to first prove a property
concerning the relationship between the \opexttoafra{} operator and the range of a set.

\begin{lemma}
\label{lemma_afra_operator_property}
  Let $\anAFRA = \tup{\A}{\R}$ be an \AF{} and $\U \subseteq \A$ a set of arguments.
  Then, $\range{\exttoafra{\U}} = \exttoafra{(\drange{\U})}$.
\begin{proof}
  By definition
  $\range{\exttoafra{\U}} = 
   \exttoafra{\U} \cup \set{\elementv \in \A \cup \R 
           \mid \exists \attackalpha \in \exttoafra{\U} : \AFRattacks{\attackalpha}{\elementv}}$, \ie{}
  $\exttoafra{\U} \cup 
  \set{\arga \in \A \mid \exists \attackalpha \in \exttoafra{\U} : \AFRattacks{\attackalpha}{\arga}}  \cup
  \set{\attackbeta \in \R \mid \exists \attackalpha \in \exttoafra{\U} : \AFRattacks{\attackalpha}{\attackbeta}}$,
  which, taking into account that an attack $\attackalpha \in \exttoafra{\U}$ \ifff{} $\src{\attackalpha} \in \U$, 
  is in turn equal to
  $\exttoafra{\U} \cup \dungplus{\U} \cup 
  \set{\attackbeta \in \R \mid \exists \attackalpha \in \exttoafra{\U} : \AFRattacks{\attackalpha}{\attackbeta}}$.
  Since $\anAFRA$ is an \AF, $\AFRattacks{\attackalpha}{\attackbeta}$ with $\attackbeta \in \R$ 
  can only hold by indirect defeat,
  thus $\range{\exttoafra{\U}}$ can be expressed as 
  $\exttoafra{\U} \cup \dungplus{\U} \cup \set{\attackbeta \in \R \mid \src{\attackbeta} \in \dungplus{\U}}$,
  \ie{} $\exttoafra{\U} \cup \exttoafra{(\dungplus{\U})}$. 
  Now, by Lemma \ref{lemma_afra_operator_properties}(3) the last expression is equal to
  $\exttoafra{(\U \cup \dungplus{\U})}$, \ie{} $\exttoafra{(\drange{\U})}$.
\end{proof}
\end{lemma}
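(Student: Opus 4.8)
The plan is to unfold both sides into explicit sets of arguments and attacks and match them term by term, exploiting the fact that when $\anAFRA$ is an \AF{} an attack can never be the target of another attack, so in a defeat a defeated argument is always \emph{directly} defeated and a defeated attack is always \emph{indirectly} defeated. First I would expand the left-hand side by the definition of range, writing $\range{\exttoafra{\U}} = \exttoafra{\U} \cup (\exttoafra{\U})^{+}$, and split $(\exttoafra{\U})^{+}$ according to whether the defeated element is an argument or an attack. A useful preliminary observation is that, by the definition of the \opexttoafra{} operator, an attack $\attackalpha \in \R$ belongs to $\exttoafra{\U}$ \ifff{} $\src{\attackalpha} \in \U$; hence every defeat $\AFRattacks{\attackalpha}{\elementv}$ witnessed from $\exttoafra{\U}$ has $\src{\attackalpha} \in \U$.

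For the argument component I would note that an argument is never indirectly defeated, so $\set{\arga \in \A \mid \exists \attackalpha \in \exttoafra{\U} \text{ s.t. } \AFRattacks{\attackalpha}{\arga}}$ is precisely the set of targets of attacks whose source lies in $\U$; since in an \AF{} the elements of $\R$ are exactly Dung's attacks, this set coincides with $\dungplus{\U}$. For the attack component I would invoke the \AF{} restriction once more: $\AFRattacks{\attackalpha}{\attackbeta}$ with $\attackbeta \in \R$ cannot be a direct defeat (its target would be an attack), so it is an indirect defeat, \ie{} $\trg{\attackalpha} = \src{\attackbeta}$; combined with $\src{\attackalpha} \in \U$ this gives $\set{\attackbeta \in \R \mid \exists \attackalpha \in \exttoafra{\U} \text{ s.t. } \AFRattacks{\attackalpha}{\attackbeta}} = \set{\attackbeta \in \R \mid \src{\attackbeta} \in \dungplus{\U}}$.

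Assembling the pieces yields $\range{\exttoafra{\U}} = \exttoafra{\U} \cup \dungplus{\U} \cup \set{\attackbeta \in \R \mid \src{\attackbeta} \in \dungplus{\U}}$. Since $\dungplus{\U} \subseteq \A$, the last two terms are by definition exactly $\exttoafra{(\dungplus{\U})}$, so the expression reduces to $\exttoafra{\U} \cup \exttoafra{(\dungplus{\U})}$. Finally I would apply Lemma \ref{lemma_afra_operator_properties}(3) to rewrite this as $\exttoafra{(\U \cup \dungplus{\U})}$, which equals $\exttoafra{(\drange{\U})}$ by the definition of Dung's range, completing the argument.

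The proof is essentially definition-chasing, so there is no deep obstacle; the only step that demands genuine care — and the one I would flag as the crux — is the case split on the nature of the defeated element, where the \AF{} hypothesis has to be used at exactly the right moment to force pure direct defeat for defeated arguments and pure indirect defeat for defeated attacks. Once that distinction is kept straight, the remaining manipulations are substitutions of definitions together with the single appeal to the already-proved properties of the \opexttoafra{} operator in Lemma \ref{lemma_afra_operator_properties}.
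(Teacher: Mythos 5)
Your proof is correct and follows essentially the same route as the paper's: expand the range, split the defeated elements into arguments and attacks, use the \AF{} hypothesis to force direct defeat for arguments and indirect defeat for attacks, identify the two pieces with $\dungplus{\U}$ and $\exttoafra{(\dungplus{\U})}$, and close with Lemma \ref{lemma_afra_operator_properties}(3). No gaps.
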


\begin{proposition}
  Let $\anAFRA = \tup{\A}{\R}$ be an \AF. 
  Then, $\SUB$ is a semi-stable extension of $\anAFRA$ \ifff{} 
  $\SUB = \exttoafra{\U}$ where $\U$ is a \dungsemistable{} extension of $\anAFRA$.
\begin{proof}
  $\Rightarrow$. Since $\SUB$ is a semi-stable extension of $\anAFRA$, 
  it is by definition also a complete extension, therefore
  by Proposition \ref{prop_corr_af_complete} it is the case that
  $\SUB = \exttoafra{\U}$ where $\U \subseteq \A$ is a \dungcomplete{} extension of $\anAFRA$.
  Assume by contradiction that $\U$ is not a \dungsemistable{} extension:
  then, there is a \dungcomplete{} extension $\U'$ of $\anAFRA$, $\U' \subseteq \A$,
  such that $\drange{\U} \subsetneq \drange{\U'}$, and letting $\SUB' \equiv \exttoafra{\U'}$
  yields $\SUB'$ a complete extension of $\anAFRA$ by Proposition \ref{prop_corr_af_complete}.
  However, $\drange{\U} \subsetneq \drange{\U'}$ entails by Lemma \ref{lemma_afra_operator_properties}(2)
  that $\exttoafra{(\drange{\U})} \subsetneq \exttoafra{(\drange{\U'})}$,
  which according to Lemma \ref{lemma_afra_operator_property} 
  is equivalent to $\range{\SUB} \subsetneq \range{\SUB'}$,
  contradicting the fact that $\SUB$ is a semi-stable extension of $\anAFRA$. 
  
  $\Leftarrow$. Since $\U$ is a \dungcomplete{} extension of $\anAFRA$,
  by Proposition \ref{prop_corr_af_complete} it holds that 
  $\SUB = \exttoafra{\U}$ is a complete extension of $\anAFRA$.
  Assume by contradiction that it is not semi-stable:
  then, there is a complete extension $\SUB' \subseteq (\A \cup \R)$ such that
  $\range{\SUB} \subsetneq \range{\SUB'}$, where according to Proposition \ref{prop_corr_af_complete}
  it is the case that $\SUB' = \exttoafra{\U'}$ with $\U' \subseteq \A$ a \dungcomplete{} extension of $\anAFRA$.
  However, applying Lemma \ref{lemma_afra_operator_property} to
  $\range{\SUB} \subsetneq \range{\SUB'}$ yields
  $\exttoafra{(\drange{\U})} \subsetneq \exttoafra{(\drange{\U'})}$, 
  which by Lemma \ref{lemma_afra_operator_properties}(2) holds \ifff{}
  $\drange{\U} \subsetneq \drange{\U'}$, contradicting the fact that 
  $\U$ is a \dungsemistable{} extension of $\anAFRA$.
\end{proof}
\end{proposition}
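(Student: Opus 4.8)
The plan is to reduce the claim to the bijective correspondence for complete semantics already established in Proposition~\ref{prop_corr_af_complete}, upgrading it to the semi-stable case by tracking how the $\opexttoafra$ operator interacts with the range. The first observation is that, by definition, a semi-stable extension of $\anAFRA$ viewed as an \AFRA{} is in particular a complete extension, and a \dungsemistable{} extension of $\anAFRA$ viewed as an \AF{} is in particular a \dungcomplete{} extension. Hence, by Proposition~\ref{prop_corr_af_complete}, it suffices to work within the bijection $\U \mapsto \exttoafra{\U}$ between \dungcomplete{} and complete extensions, and to show that it maps \dungsemistable{} extensions exactly onto semi-stable extensions, i.e.\ that it both preserves and reflects maximality of the range.

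For the ($\Rightarrow$) direction I would take a semi-stable $\SUB$, write $\SUB = \exttoafra{\U}$ with $\U$ a \dungcomplete{} extension (Proposition~\ref{prop_corr_af_complete}), and argue by contradiction: if $\U$ were not \dungsemistable, some \dungcomplete{} extension $\U'$ would satisfy $\drange{\U} \subsetneq \drange{\U'}$; Lemma~\ref{lemma_afra_operator_properties}(2) then gives $\exttoafra{(\drange{\U})} \subsetneq \exttoafra{(\drange{\U'})}$, and Lemma~\ref{lemma_afra_operator_property} rewrites this as $\range{\SUB} \subsetneq \range{\exttoafra{\U'}}$, where $\exttoafra{\U'}$ is complete by Proposition~\ref{prop_corr_af_complete} — contradicting maximality of $\range{\SUB}$. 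The ($\Leftarrow$) direction is the mirror image: from a \dungsemistable{} $\U$ one has $\SUB = \exttoafra{\U}$ complete; a putative complete $\SUB'$ with $\range{\SUB} \subsetneq \range{\SUB'}$ can be written as $\exttoafra{\U'}$ for a \dungcomplete{} $\U'$; Lemma~\ref{lemma_afra_operator_property} turns the range inclusion into $\exttoafra{(\drange{\U})} \subsetneq \exttoafra{(\drange{\U'})}$, and Lemma~\ref{lemma_afra_operator_properties}(2) into $\drange{\U} \subsetneq \drange{\U'}$, contradicting the choice of $\U$.

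The only genuinely load-bearing step is Lemma~\ref{lemma_afra_operator_property}, $\range{\exttoafra{\U}} = \exttoafra{(\drange{\U})}$, which is where the \AF{} hypothesis is really used: in an \AF{} an attack can only be defeated indirectly, through a defeat on its source, so the attacks in $\range{\exttoafra{\U}}$ are precisely those whose source lies in $\dungplus{\U}$. Granting that lemma (already proved in the excerpt) together with the order-isomorphism properties of $\opexttoafra$ from Lemma~\ref{lemma_afra_operator_properties}, everything else is a routine transfer of the maximality condition across the bijection, and I do not anticipate any further obstacle.
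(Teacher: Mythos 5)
Your argument coincides with the paper's own proof: both directions reduce the claim to Proposition \ref{prop_corr_af_complete} and then transfer strict range inclusions across the $\opexttoafra$ bijection via Lemma \ref{lemma_afra_operator_properties}(2) and Lemma \ref{lemma_afra_operator_property}, deriving a contradiction with maximality in each case. Your added remark correctly identifies where the \AF{} hypothesis enters (attacks are only defeated indirectly, so $\range{\exttoafra{\U}} = \exttoafra{(\drange{\U})}$); no gaps.
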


In order to exemplify the relationship between \dungsemistable{} and semi-stable extensions, let us consider again Example \ref{ex_afraasaf}.

\begin{examplecont}[\ref{ex_afraasaf}]
The  \dungsemistable{} extensions of $\athirdAFRA$ are 
$\set{A, D}$ and $\set{B, D}$, while $\set{A, D, \alpha, \gamma, \zeta}$ and $\set{B, D, \beta, \delta, \zeta}$ are, correspondingly, its semi-stable extensions.
\end{examplecont}

We finally provide the correspondence result for ideal semantics.

\begin{proposition}\label{prop_corr_af_ideal}
  Let $\anAFRA = \tup{\A}{\R}$ be an \AF{} and let $\SUB$ be the ideal extension of $\anAFRA$. 
  Then, $\SUB = \exttoafra{\U}$ where $\U$ is the \dungideal{} extension of $\anAFRA$.
\begin{proof}
  According to Lemma \ref{lemma_ideal_as_complete} $\SUB$ is a complete extension of $\anAFRA$,
  therefore, by Proposition \ref{prop_corr_af_complete},
  $\SUB = \exttoafra{\U}$ where $\U$ is a \dungcomplete{} extension of $\anAFRA$ (thus in particular \dungadmissible).
  By Proposition \ref{prop_corr_pref_complete},
  $\forall \ \E$ with $\E$ a \dungpreferred{} extension of $\anAFRA$ $\exttoafra{\E}$ is a preferred extension,
  thus by definition of ideal extension $\SUB = \exttoafra{\U} \subseteq \exttoafra{\E}$, 
  which by Lemma \ref{lemma_afra_operator_properties}(1) yields $\U \subseteq \E$. 
  To show that $\U$ is the \dungideal{} extension, we have to prove that $\U$ 
  is the maximal subset of $\A$ satisfying the latter condition.
  Assume by contradiction that this is not the case:
  then, there is a set $\U' \subseteq \A$ such that $\U \subsetneq \U'$ and 
  $\U'$ is a \dungcomplete{} extension contained in all the \dungpreferred{} extensions of $\anAFRA$.
  By Lemma \ref{lemma_afra_operator_properties}(2), 
  $\SUB = \exttoafra{\U} \subsetneq \exttoafra{\U'}$,
  where, according to Proposition \ref{prop_corr_af_complete}, 
  $\exttoafra{\U'}$ is a complete extension of $\anAFRA$, thus admissible.
  Moreover, by Proposition \ref{prop_corr_pref_complete} 
  for any preferred extension $\aPreferredExtension$ of $\anAFRA$
  $\aPreferredExtension = \exttoafra{\E}$ with $\E$ a \dungpreferred{} extension of $\anAFRA$,
  and since $\U' \subseteq \E$ according to Lemma \ref{lemma_afra_operator_properties}(1) we have that
  $\exttoafra{\U'} \subseteq \exttoafra{\E} = \aPreferredExtension$.
  Summing up, there is an admissible set, namely $\exttoafra{\U'}$,
  which is contained in all preferred extensions of $\anAFRA$ and such that $\SUB \subsetneq \exttoafra{\U'}$:
  but this contradicts the fact that $\SUB$ is the ideal extension of $\anAFRA$.
\end{proof}
\end{proposition}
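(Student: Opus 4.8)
The plan is to transfer the correspondences already established for the complete and preferred semantics (Propositions~\ref{prop_corr_af_complete} and~\ref{prop_corr_pref_complete}) through the \opexttoafra{} operator, exploiting its monotonicity properties (Lemma~\ref{lemma_afra_operator_properties}). First I would note that, by Lemma~\ref{lemma_ideal_as_complete}, the ideal extension $\SUB$ of the \AFRA{} $\anAFRA$ is a complete extension; hence Proposition~\ref{prop_corr_af_complete} furnishes a \dungcomplete{} extension $\U \subseteq \A$ with $\SUB = \exttoafra{\U}$. It then suffices to prove that this $\U$ is the \dungideal{} extension of $\anAFRA$, that is, the maximal \dungadmissible{} set contained in every \dungpreferred{} extension.

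For the containment part, I would take any \dungpreferred{} extension $\E$ of $\anAFRA$. By Proposition~\ref{prop_corr_pref_complete}, $\exttoafra{\E}$ is a preferred extension of $\anAFRA$, so the defining property of the ideal extension gives $\SUB = \exttoafra{\U} \subseteq \exttoafra{\E}$, and Lemma~\ref{lemma_afra_operator_properties}(1) yields $\U \subseteq \E$. Since $\U$ is moreover \dungadmissible{} (being \dungcomplete), it is a \dungideal{} set.

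For maximality I would argue by contradiction. Suppose $\U' \subseteq \A$ is \dungideal{} with $\U \subsetneq \U'$. As every \dungpreferred{} extension is a maximal \dungadmissible{} set, Dung's fundamental lemma lets us enlarge $\U'$ without ever leaving any \dungpreferred{} extension until it becomes \dungcomplete, so I may assume $\U'$ itself is a \dungcomplete{} extension. Then Proposition~\ref{prop_corr_af_complete} makes $\exttoafra{\U'}$ a complete, hence admissible, set of $\anAFRA$, and Lemma~\ref{lemma_afra_operator_properties}(2) gives $\SUB = \exttoafra{\U} \subsetneq \exttoafra{\U'}$. Finally, every preferred extension of $\anAFRA$ equals $\exttoafra{\E}$ for some \dungpreferred{} extension $\E$ of $\anAFRA$ (Proposition~\ref{prop_corr_pref_complete}), and $\U' \subseteq \E$ forces $\exttoafra{\U'} \subseteq \exttoafra{\E}$ by Lemma~\ref{lemma_afra_operator_properties}(1); hence $\exttoafra{\U'}$ is an admissible set strictly larger than $\SUB$ and contained in all preferred extensions of $\anAFRA$, contradicting the maximality of the ideal extension $\SUB$.

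I expect the only delicate point to be this last step: Propositions~\ref{prop_corr_af_complete} and~\ref{prop_corr_pref_complete} are phrased for complete (not arbitrary admissible) sets, so one must first replace the hypothetical larger \dungideal{} set by its completion, or, equivalently, recall that the \dungideal{} extension is itself \dungcomplete{} and thus test maximality only against \dungcomplete{} sets. Everything else is a routine transfer back and forth through \opexttoafra{}. A fully symmetric alternative would start from the \dungideal{} extension $\U_0$, verify directly that $\exttoafra{\U_0}$ is an ideal extension of $\anAFRA$, and then conclude by the uniqueness of the ideal extension together with the injectivity of \opexttoafra{} recorded in Lemma~\ref{lemma_afra_operator_properties}(1).
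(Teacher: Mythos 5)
Your proof is correct and follows essentially the same route as the paper's: reduce to the complete-semantics correspondence via Lemma \ref{lemma_ideal_as_complete} and Proposition \ref{prop_corr_af_complete}, obtain $\U \subseteq \E$ for every \dungpreferred{} $\E$ via Proposition \ref{prop_corr_pref_complete} and Lemma \ref{lemma_afra_operator_properties}(1), then refute non-maximality by pushing a strictly larger \dungideal{} set through \opexttoafra{}. The one place you are actually more careful than the paper is the "delicate point" you flag yourself: the paper simply asserts that the larger competing set $\U'$ may be taken to be a \dungcomplete{} extension, whereas you justify passing to its completion inside every \dungpreferred{} extension.
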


\begin{examplecont}[\ref{ex_afraasaf}]
The  \dungideal{} extension of $\athirdAFRA$ is 
$\set{D}$, while $\set{D, \zeta}$ is, correspondingly, its ideal extension.
\end{examplecont}

\section{Expressing $\AFRA$ as an $\AF$}\label{SecGAFAF}

We consider now the issue of expressing an $\AFRA$ in terms of a traditional $\AF$ and drawing the relevant correspondences concerning the notions introduced in Sections \ref{sec_semantics_afra_1} and \ref{sec_semantics_afra_2}.
This kind of correspondence provides a very useful basis for further investigations as it allows one to reuse or adapt, in the context of $\AFRA$, the large corpus of theoretical results available in Dung's framework, in particular as far as computational complexity is concerned.

\begin{definition}
\label{def_from_afra_to_af}
Let $\anAFRA = \tup{\A}{\R}$ be an $\GAF$, the corresponding \AF{} $\toAF{\anAFRA} = \tup{\Atilde}{\Rtilde}$ is defined as follows:

\begin{itemize}
\item $\Atilde = \A \cup \R$;
\item $\Rtilde = \{(\elementv, \elementw) | \elementv, \elementw \in \A \cup \R$ and $\AFRattacks{\elementv}{\elementw}\}$.
\end{itemize}
\end{definition}

In words, both arguments and attacks of the original $\GAF$ $\anAFRA$ become arguments of its corresponding 
\AF-version $\toAF{\anAFRA}$, while the defeat relations in $\AF$ correspond to all direct and indirect defeats in the original $\GAF$.
We can now examine the relationships between the relevant notions in $\anAFRA$ and $\toAF{\anAFRA}$ showing that they are all bijections as desirable.

\begin{proposition}
Let $\anAFRA = \tup{\A}{\R}$ an $\GAF$ and $\toAF{\anAFRA} = \tup{\Atilde}{\Rtilde}$ its corresponding \AF, $\SUB \subseteq \A \cup \R$ and $\elementa \in \A \cup \R$:
\begin{enumerate}
\item $\SUB$ is a conflict--free set for $\anAFRA$ iff $\SUB$ is a \dungconffree{} set for $\toAF{\anAFRA}$;
\item $\elementa$ is acceptable w.r.t. $\SUB \subseteq \A \cup \R$ in $\anAFRA$ 
iff $\elementa$ is \dungacceptable{} w.r.t. $\SUB$ in $\toAF{\anAFRA}$;
\item $\SUB$ is an admissible set for $\anAFRA$ iff $\SUB$ is a \dungadmissible{} set for $\toAF{\anAFRA}$;
\item $\SUB$ is a preferred extension for $\anAFRA$ iff $\SUB$ is a \dungpreferred{} extension for $\toAF{\anAFRA}$;
\item $\SUB$ is a stable extension for $\anAFRA$ iff $\SUB$ is a \dungstable{} extension for $\toAF{\anAFRA}$;
\item $\SUB$ is a complete extension for $\anAFRA$ iff $\SUB$ is a \dungcomplete{} extension for $\toAF{\anAFRA}$;
\item $\SUB$ is the grounded extension for $\anAFRA$ iff $\SUB$ is the \dunggrounded{} extension for $\toAF{\anAFRA}$;
\item $\SUB$ is a semi-stable extension for $\anAFRA$ iff $\SUB$ is a \dungsemistable{} 
extension for $\toAF{\anAFRA}$;
\item $\SUB$ is the ideal extension for $\anAFRA$ iff $\SUB$ is the \dungideal{} extension for $\toAF{\anAFRA}$.
\end{enumerate}
\end{proposition}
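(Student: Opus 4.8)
The plan is to exploit the one structural fact built into Definition~\ref{def_from_afra_to_af}: the attack relation $\Rtilde$ of $\toAF{\anAFRA}$ is, by construction, exactly the defeat relation of $\anAFRA$; that is, for all $\elementv, \elementw \in \A \cup \R = \Atilde$ we have $(\elementv,\elementw) \in \Rtilde$ if and only if $\AFRattacks{\elementv}{\elementw}$. From this, items~1--3 follow by unfolding the two pairs of definitions side by side. Item~1 is immediate, since Definition~\ref{conflict-free_gaf} and the conflict-free clause of Definition~\ref{defrecall} become literally the same statement once $(\cdot,\cdot)\in\Rtilde$ is identified with $\AFRattacks{\cdot}{\cdot}$. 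For item~2 the only point needing a word of care is that AFRA-acceptability (Definition~\ref{acceptability_gaf}) quantifies the attacker over $\R$, whereas D-acceptability quantifies it over all arguments of $\toAF{\anAFRA}$, i.e. over $\Atilde = \A \cup \R$; but by Definitions~\ref{defdirdef}--\ref{defeat_gaf} only elements of $\R$ ever defeat anything, so a node of $\toAF{\anAFRA}$ attacks something only if it is (the copy of) an attack of $\anAFRA$, hence the two quantifications range over the same set and the required defender lies in $\SUB \cap \R$ in both cases. Item~3 is then just the conjunction of items~1 and~2.

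Next I would record two immediate corollaries. By item~2 the characteristic functions coincide as maps $2^{\Atilde} \to 2^{\Atilde}$, i.e. $\carfun{\toAF{\anAFRA}} = \charfunction{\anAFRA}$; and, since the defeat relation is common and only attacks defeat, the AFRA range operator $\range{\cdot}$ of Definition~\ref{def_range_afra} coincides with the D-range operator $\drange{\cdot}$ on subsets of $\Atilde$. With these identifications, items~4--9 follow formally, because each semantics is defined by a construction referring only to conflict-freeness, acceptability/admissibility, the characteristic function, the range, and set inclusion on the common ground set $\Atilde$. Concretely: preferred $=$ maximal admissible (item~3 plus inclusion); stable $=$ conflict-free set defeating everything outside it (item~1 plus the common defeat relation); complete $=$ admissible fixed point of the characteristic function (items~2--3); grounded $=$ least fixed point of that function (item~2); semi-stable $=$ complete extension of maximal range (items~2--3 plus the range identification); ideal $=$ maximal admissible subset of the intersection of all preferred extensions (items~3--4). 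In every case the defining property is invariant under the identification, so the classes of extensions coincide as sets.

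In the write-up I would present items~1--3 as brief explicit definition-unfoldings, state the characteristic-function and range identifications as one-line consequences, and then dispatch items~4--9 semantics by semantics, pointing at the relevant clause of Definition~\ref{def_sem_recall} against its AFRA counterpart and observing that the two coincide on $\Atilde$. The only genuinely delicate step is the acceptability equivalence in item~2: one must spell out that enlarging the range of the attacker quantifier from $\R$ to all arguments of $\toAF{\anAFRA}$ is harmless because every argument of $\toAF{\anAFRA}$ that attacks anything is the AF-copy of an attack of $\anAFRA$. Everything after that is routine bookkeeping.
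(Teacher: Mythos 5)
Your proposal is correct and follows essentially the same route as the paper: establish that $\Rtilde$ coincides with the defeat relation, derive items 1--3 by unfolding definitions (the paper phrases item 2 as a proof by contradiction, but it is the same identification, and your explicit remark that the attacker quantifier may be restricted to $\R$ because only attacks ever defeat is a point the paper leaves implicit), then note that the characteristic functions and the range operators coincide and read off items 4--9 semantics by semantics. No gaps.
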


\begin{proof}\mbox{}
\begin{enumerate}
\item 
\label{proof_gaf_to_af_conflict-free}
The conclusion follows directly from Definition \ref{def_from_afra_to_af}, taking into account Definitions \ref{defrecall} and \ref{conflict-free_gaf}.
\item 
\label{proof_gaf_to_af_acceptability}
$\Rightarrow$. Let $\elementa \in \A \cup \R$ be acceptable w.r.t. $\SUB \subseteq \A \cup \R$ in $\anAFRA$ and suppose $\elementa$ is not \dungacceptable{} w.r.t. $\SUB$ in $\toAF{\anAFRA}$. So, there exists $\elementb \in \Atilde = \A \cup \R$ s.t. $(\elementb, \elementa) \in \Rtilde$ and $\nexists \elementc \in \SUB$ s.t. $(\elementc, \elementb) \in \Rtilde$. From Definition \ref{def_from_afra_to_af}, $(\elementb, \elementa) \in \Rtilde$ iff $\AFRattacks{\elementb}{\elementa}$ and $(\elementc, \elementb) \in \Rtilde$ iff $\AFRattacks{\elementc}{\elementb}$. Then $\exists \elementb \in \A \cup \R$ s.t. $\AFRattacks{\elementb}{\elementa}$ and $\nexists \elementc \in \SUB$ s.t. $\AFRattacks{\elementc}{\elementb}$. Therefore $\elementa$ is not acceptable w.r.t. $\SUB$ in $\anAFRA$. Contradiction.

$\Leftarrow$.
Follows the same reasoning line with obvious modifications.

\item
%
Follows directly from 1 and 2.

\item 
It follows directly from 3 since both in \AFRA{} (Definition \ref{def_preferred_afra}) 
and in \AF{} (Definition \ref{def_sem_recall}) preferred (\dungpreferred) extensions are defined 
as maximal w.r.t. set inclusion admissible (\dungadmissible) sets.

\item 
From 1 conflict-free sets are in correspondence between $\anAFRA$ and $\toAF{\anAFRA}$. From Definition \ref{def_from_afra_to_af}, it is easy to see that if $\forall \elementa \in \A \cup \R, \elementa \notin \SUB$, $\exists \elementb \in \SUB$ s.t. $\AFRattacks{\elementb}{\elementa}$ (Definition \ref{def_stab_afra}) then also $\forall \arga \in \Atilde \setminus \SUB$, $\exists \argb \in \SUB$ s.t. $\attacksAF{\argb}{\arga}$ (Definition \ref{defrecall}) and viceversa.

\item 
From 2 it follows that the characteristic function $\charfunction{\anAFRA}$ of $\anAFRA$ is equal to the \dungcharacteristic{} function $\carfun{\toAF{\anAFRA}}$ of its corresponding \AF{} $\toAF{\anAFRA}$. Then the conclusion follows from 3, taking into account Definitions \ref{def_complete_afra} and \ref{def_sem_recall}.

\item 
It follows from 6 as the grounded extension is the least complete (\dungcomplete) extension both in \AFRA{} (Lemma \ref{lemma_grounded_as_complete}) and in \AF{} (Definition \ref{def_sem_recall}).

\item 
It follows from 6, taking into account Definitions \ref{def_semi_afra} and \ref{def_sem_recall} and noting that, in virtue of Definition \ref{def_from_afra_to_af}, the range of a set $\SUB$ in \AFRA{} (Definition \ref{def_range_afra}) is equal to the \dungrange{} of $\SUB$ in $\toAF{\anAFRA}$ (Definition \ref{defrecall}).

\item 
It follows directly from 3 and 4.
\qedhere
\end{enumerate}
\end{proof}
 
\section{Comparison with related works}\label{sec_discuss}

\subsection{The Extended Argumentation Framework}

In recent years a generalization of Dung's framework to encompass attacks to attacks has been proposed in \cite{modgil2007,modgil2009}, called Extended Argumentation Framework (\EAF{}).
This approach is motivated by the need to express preferences between arguments and supports a very interesting form of meta-level argumentation about the values that arguments promote.
In \EAF{} a limited notion of attacks to attacks is encompassed: only attacks whose target is an argument can be attacked, while attacks whose target is another attack can not be attacked in turn. In short, only one level of attacks to attacks is allowed.
Referring to Figure \ref{fig_esempio}, only the attack originated from $\argp$ can be represented, while the one originated from $\argn$ can not.

We recall briefly the main definitions of \EAF{} formalism.

\begin{definition}
\label{def_modgil_eaf}
An \emph{Extended Argumentation Framework} (\EAF) is a tuple \ModgilEAFTuple{} s.t. \ModgilArgs{} is a set of arguments, and:
\begin{itemize}
\item $\ModgilR \subseteq \ModgilArgs \times \ModgilArgs$;
\item $\ModgilD \subseteq \ModgilArgs \times \ModgilR$;
\item if $(\argx, (\argy, \argz))$, $(\argxprime, (\argz, \argy)) \in \ModgilD$ then $(\argx, \argxprime), (\argxprime, \argx) \in \ModgilR$.
\end{itemize}

\end{definition}

\begin{definition}
\label{def_modgil_definitions}
Let \ModgilEAFTuple{} be an \EAF{} and $\unsetModgil \subseteq \ModgilArgs$. Then 
\begin{itemize}
\item 

$\arga$ $\ModgilDefeatSWord$ $\argb$ iff $(\arga, \argb) \in \ModgilR$  and $\nexists \argc \in \unsetModgil$ s.t. $(\argc, (\arga, \argb)) \in \ModgilD$. 
We write $\ModgilDefeatS{\arga}{\argb}{\unsetModgil}$ to denote that $\arga$ $\ModgilDefeatSWord$ $\argb$, and $\ModgilNDefeatS{\arga}{\argb}{\unsetModgil}$ to denote that $\arga$ does not $\ModgilDefeatSWord$ $\argb$;

\item 
$\unsetModgil$ is conflict-free iff $\forall \arga, \argb \in \unsetModgil$: if $(\arga, \argb) \in \ModgilR$, then $(\argb, \arga) \notin \ModgilR$ and $\exists \argc \in \unsetModgil$ s.t. $(\argc, (\arga, \argb)) \in \ModgilD$;

\item 
$\ModgilRS = \set{\ModgilDefeatS{\argx_1}{\argy_1}{\unsetModgil}, \ldots, \ModgilDefeatS{\argx_n}{\argy_n}{\unsetModgil}}$ is a reinstatement set for $\ModgilDefeatS{\argc}{\argb}{\unsetModgil}$ iff:
	\begin{enumerate}
	\item $\ModgilDefeatS{\argc}{\argb}{\unsetModgil} \in \ModgilRS $,
	\item for $i = 1 \ldots n, \argx_i \in \unsetModgil $,
	\item $\forall \ModgilDefeatS{\argx}{\argy}{\unsetModgil} \in \ModgilRS $, $\forall \argyprime$ s.t. $(\argyprime,(\argx, \argy)) \in \ModgilD$, there is a $\ModgilDefeatS{\argxprime}{\argyprime}{\unsetModgil} \in \ModgilRS$.
	\end{enumerate}

\item 
$\arga \in \ModgilArgs$ is acceptable w.r.t. $\unsetModgil$, iff $\forall \argb$ s.t. $\ModgilDefeatS{\argb}{\arga}{\unsetModgil}$, there is a $\argc \in \unsetModgil$ s.t. $\ModgilDefeatS{\argc}{\argb}{\unsetModgil}$ and there is a reinstatement set for $\ModgilDefeatS{\argc}{\argb}{\unsetModgil}$.
\end{itemize}
\end{definition}

Semantics notions for \EAF{} paralleling Dung's ones are proposed in \cite{modgil2009}.

\begin{definition}
\label{def_modgil_semantics}
Let $\unsetModgil$ be a conflict-free subset of \ModgilArgs{} in \ModgilEAFTuple. Then:
\begin{itemize}
\item $\unsetModgil$ is an admissible extension iff every argument in $\unsetModgil$ is acceptable w.r.t. $\unsetModgil$.
\item $\unsetModgil$ is a preferred extension iff $\unsetModgil$ is a set inclusion maximal admissible extension.
\item $\unsetModgil$ is a complete extension iff each argument which is acceptable w.r.t. $\unsetModgil$ is in $\unsetModgil$.
\item $\unsetModgil$ is a stable extension iff $\forall \argb \notin \unsetModgil, \exists \arga \in \unsetModgil$ s.t. $\arga$ $\ModgilDefeatSWord$ $\argb$.
\end{itemize}
\end{definition}

Differently from the case of \AF, in \EAF{} the characteristic function is defined on conflict-free sets only and is used to define  grounded semantics.

\begin{definition}
Let $\aModgilEAF = \ModgilEAFTuple$ be an \EAF, $\unsetModgil \subseteq \ModgilArgs$, and $2^{\ModgilArgsCF}$ denote the set of all conflict-free subset of \ModgilArgs. The characteristic function $\ModgilCharFunct: 2^{\ModgilArgsCF} \mapsto 2^{\ModgilArgs}$ is defined as $\ModgilCharFunct(\unsetModgil) = \set{\arga | \arga$ is acceptable w.r.t. $\unsetModgil}$.
\end{definition}

In \EAF{} the grounded semantics is defined only for finitary \EAF s.

\begin{definition}
\ModgilEAFTuple{} is finitary iff $\forall \arga \in \ModgilArgs$, the set $\set{\argb | (\argb, \arga) \in \ModgilR}$ is finite, and $\forall (\arga, \argb) \in \ModgilR$, the set $\set{\argc | (\argc, (\arga, \argb)) \in \ModgilD}$ is finite.
\end{definition}

\begin{definition}\label{def_eaf_grounded}
Let \aModgilEAF{} be a finitary \EAF{} and $\ModgilCharFunct^0 = \emptyset$, $\ModgilCharFunct^{i+1} = \ModgilCharFunct(\ModgilCharFunct^i)$. Then $\bigcup_{i = 0}^{\infty} (\ModgilCharFunct^i)$ is the grounded extension of \aModgilEAF. 
\end{definition}

It is possible to draw a direct correspondence from \EAF{} to \AFRA.

\begin{definition}[\AFRA-\EAF{} correspondence]
\label{def_afra_eaf}
For any \EAF{} $\aModgilEAF = \ModgilEAFTuple$ we define the corresponding \AFRA{} $\toAFRA{\aModgilEAF} = \tup{\ModgilArgs}{\ModgilR \cup \ModgilD}$.
\end{definition}

Apart from this formal correspondence at the definition level, four main points are worth remarking to compare \EAF{} and \AFRA. 

First, as already remarked, \EAF{} encompasses only attacks to attacks between arguments rather than the general issue of making any attack defeasible. In \cite{baronietal2009} an extension of \EAF{} (called \EAFplus) has been devised which allows for recursive attacks, while attempting to follow as close as possible the original \EAF{} definitions: it was shown that the \AFRA{} formalism is able to cover also \EAFplus.

A second issue concerns the notion of conflict-free set given in Definition \ref{def_modgil_definitions} and a constraint on the attack relation in Definition \ref{def_modgil_eaf}. Consider the following simple example.

\begin{example}
Let $\aModgilEAF = \ModgilEAFTuple$ s.t. $\ModgilArgs = \set{\arga, \argb, \argc}$, $\ModgilR = \set{(\arga, \argb)}$, $\ModgilD = \set{(\argc, (\arga, \argb))}$. Then, $\set{\arga, \argb, \argc}$ is a conflict-free set.
\end{example}

Let us add now the relation $(\argb, \arga)$ in \ModgilR.

\begin{example}
Let $\aModgilEAF = \ModgilEAFTuple$ s.t. $\ModgilArgs = \set{\arga, \argb, \argc}$, $\ModgilR = \set{(\arga, \argb), (\argb, \arga)}$, $\ModgilD = \set{(\argc, (\arga, \argb))}$. Then, $\set{\arga, \argb, \argc}$ is not a conflict-free set.
\end{example}

Suppose then that there is an argument $\argc'$ which attacks the attack $(\argb, \arga)$ but it is not the case that $\argc$ and $\argc'$ attack each other (this situation is illustrated in Figure \ref{fig_es_fake_eaf}).
Note that the third requirement of Definition \ref{def_modgil_eaf} is violated, therefore this case is not compatible with the \EAF{} definition. While this restriction is justified in the context of preference modelling, where \EAF{} has been conceived, it may turn out a limitation in other areas. Note anyway that even if this constraint was relaxed, by Definition \ref{def_modgil_definitions} the set $\unsetModgil=\set{\arga, \argb, \argc, \argc'}$ would not be conflict-free due to the mutual attack between $\arga$ and $\argb$.
However, coherently with other situations, $\unsetModgil$ should be conflict-free since no argument in $\unsetModgil$ $\ModgilDefeatSWord$ another element in $\unsetModgil$.

The situation shown in Figure \ref{fig_es_fake_eaf} can be directly encompassed in \AFRA{} where, in particular, the set $\set{\arga, \argb, \argc, \argc', \attackgamma, \attackdelta}$ is conflict-free.

\begin{figure}[ht]
	\centering
	\includegraphics[scale=0.25]{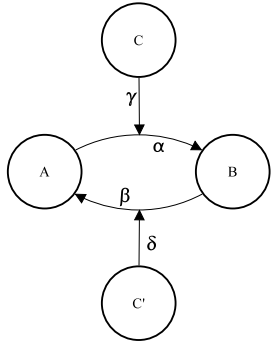}
	\caption{A situation forbidden in \EAF.}
	\label{fig_es_fake_eaf}
\end{figure}

The third issue concerns the fact that the \EAF{} characteristic function is not monotonic in general, while it is monotonic in the special cases\footnote{The notions of \HEAF{} and \psEAF{} and the relevant results are provided in \cite{modgil2009}.} of Hierarchical Extended Argumentation Framework (\HEAF), and Preference Symmetric Extended Argumentation Framework (\psEAF). In the \HEAF{} the sets \ModgilArgs{} and \ModgilR{} are partitioned in different levels which are ordered: any attack to attack can only start from a higher level than the one which contains the attacked attack. In the \psEAF{} only attacks between symmetrically attacking arguments can be attacked by arguments expressing preferences. 

These limitations do not apply to \AFRA{} where (as in \AF{}) the characteristic function is monotonic (Proposition \ref{prop_monotonic}), ensuring further desirable properties at the semantics level.

The fourth point regards the fact that, as remarked in \cite{modgil2009}, in general it does not hold that the grounded extension is the least complete extension: as a consequence, it is not guaranteed to be included in any complete extension nor, in particular, in any preferred extension.

\begin{example}
\label{es_problem_grounded_EAF}
(From \cite{modgil2009}). Consider $\aModgilEAF = \ModgilEAFTuple$ s.t. $\ModgilArgs = \set{\arga, \argb, \argc}$, $\ModgilR = \set{(\argb, \arga), (\argc, \argb)}$, $\ModgilD = \set{(\argb, (\argc, \argb))}$. The \virg{self-reinstating argument} \argb{} is overlooked by Definition \ref{def_eaf_grounded}: $\ModgilCharFunct^1 = \set{\argc}$, $\ModgilCharFunct^2 = \set{\argc, \arga}$, $\ModgilCharFunct^3 = \set{\argc, \arga}$ leading to the inclusion of $\arga$ in the grounded extension $\set{\argc, \arga}$. Definition \ref{def_modgil_semantics} gives rise to a different scenario where $\set{\argc}$, $\set{\argc, \arga}$ and $\set{\argc, \argb}$ are admissible, and $\set{\argc, \arga}$, $\set{\argc, \argb}$ are the preferred extensions. Hence, only $\argc$ is included in all preferred estensions while $\arga$ is not. 
\end{example}

\begin{figure}[ht]
	\centering
	\includegraphics[scale=0.25]{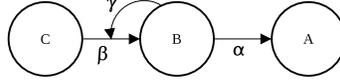}
	\caption{Example about the relation between grounded and preferred semantics in \EAF.}
	\label{fig_es_problem_grounded_EAF}
\end{figure}

Consider instead the example of Figure \ref{fig_es_problem_grounded_EAF} as formalized in \AFRA, \ie{} $\anAFRA = \tup{\A}{\R}$ s.t. $\A = \set{\arga, \argb, \argc}$ and $\R = \set{\attackalpha, \attackbeta, \attackgamma}$, where $\attackalpha = (\argb, \arga)$, $\attackbeta = (\argc, \argb)$, and $\attackgamma = (\argb, (\argc, \argb))$. There are two preferred extensions, namely $P_1(\anAFRA) = \set{\argc, \argb, \attackgamma, \attackalpha}$ and $P_2(\anAFRA) = \set{\argc, \arga, \attackbeta}$, while the  grounded extension is $\set{\argc}$. 

Therefore, the grounded semantics of \EAF{} is not in correspondence with the grounded semantics of \AFRA, and a \virg{classical} relation between semantics notions does not hold in \EAF.

\subsection{Higher Order Argumentation Framework}

In the context of reasoning about coalitions, a formalism encompassing attacks to attacks called Higher Order Argumentation Framework (HOAF) has been considered \cite{boellaetal2008,boellaetal2008b}. 

We recall literally the rather articulated definition of HOAF from \cite{boellaetal2008b}.

\begin{definition}
\label{def_high}
A Higher Order Argumentation Framework (HOAF) is a tuple $\BvVtuple$ where $\BvVArgsC$ is a set of coalition arguments, $\BvVArgsNot$ is a set of arguments such that $|\BvVArgsNot|$ = $|\BvVArgsC|$, $\BvVnot$ is a bijection from $\BvVArgsC$ to $\BvVArgsNot$, $\BvVArgsDiesis$ is a set of arguments that coalitions attacks attack each other, and $\BvVDiesis \subseteq (\BvVArgsC \times \BvVArgsNot) \cup (\BvVArgsNot \times \BvVArgsDiesis) \cup (\BvVArgsDiesis \times \BvVArgsC) \cup (\BvVArgsDiesis \times \BvVArgsDiesis)$ is a binary relation on the set of arguments such that for $\BvVarga \in \BvVArgsC$ and $\BvVargb \in \BvVArgsNot$ we have $\BvVarga \BvVDiesis \BvVargb$  if and only if $\BvVargb = \BvVnot(\BvVarga)$, and for each $\BvVarga \in \BvVArgsDiesis$, there is precisely one $\BvVargb \in \BvVArgsNot$ such that $\BvVargb \BvVDiesis \BvVarga$ and precisely one $\BvVargc \in \BvVArgsC \cup \BvVArgsDiesis$ such that $\BvVarga \BvVDiesis \BvVargc$. A higher order argumentation framework $\BvVtuple$ represents $\tup{\setarg}{\attackrel}$ if and only if $\setarg = \BvVArgsC \cup \BvVArgsNot \cup \BvVArgsDiesis$. The extensions of $\BvVtuple$ are the extensions of the represented argumentation framework.
\end{definition}

In HOAF attacks to attacks are not explicitly represented.
In fact, the attack relation $\BvVDiesis$ is not recursive and relates the elements of three distinct sets, namely $\BvVArgsC$, $\BvVArgsNot$, and $\BvVArgsDiesis$. The intuitive meaning of these three sets can be appreciated by regarding a HOAF as the result of a translation of an argumentation framework with recursive attacks into an \AF\footnote{Note that this kind of representation was also introduced in \cite{benchcapon&modgil2008} as an approach to rewrite an \EAF{} as a traditional \AF.}, where $\BvVArgsC$ represents the set of arguments and $\BvVArgsDiesis$ the set of attacks.
In fact for each element $\arga$ of $\BvVArgsC$ a corresponding element $\BvVnot(\arga)$ representing \virg{non acceptance} of $\arga$ is present in $\BvVArgsNot$. It is assumed that $\arga$ attacks $\BvVnot(\arga)$, i.e. $(\arga, \BvVnot(\arga)) \in \BvVDiesis$, but not viceversa.
Elements of $\BvVArgsDiesis$ represent attacks of the original argumentation framework with recursive attacks: each of these attacks has a source argument belonging to $\BvVArgsC$ and a target which is either an argument in $\BvVArgsC$ or another attack in $\BvVArgsDiesis$.
Then:
\begin{itemize}
\item for each element $\arga$ of $\BvVArgsC$, it is assumed that $\BvVnot(\arga)$ attacks in HOAF (through $\BvVDiesis$) all elements of $\BvVArgsDiesis$ whose source is $\arga$;
\item each element of $\BvVArgsDiesis$ attacks in HOAF (through $\BvVDiesis$) its target, either belonging to $\BvVArgsC$ (if it is an argument) or to $\BvVArgsDiesis$ (if it is another attack).
\end{itemize}

Finally, by simply treating $\BvVArgsC \cup \BvVArgsNot \cup \BvVArgsDiesis$ as an undistinguished set of arguments, ignoring the $\BvVnot$ relation and using $\BvVDiesis$ as an attack relation, a HOAF can be regarded as a traditional \AF{} and the relevant semantics notions are defined.

It emerges from the above discussion that, given a HOAF, a corresponding \AFRA{} can be defined by considering only elements of $\BvVArgsC$ as arguments and applying a sort of inversion of the implicit translation procedure, as proposed in Definition \ref{def_afra_hoaf}.

\begin{definition} \label{def_afra_hoaf}
For any $\aBvV = \BvVtuple$ we define the corresponding \AFRA{} $\toAFRA{\aBvV} = \tup{\BvVArgsC}{\R}$ s.t. $(\BvVarga, \elementv) \in \R$ if and only if 
$(\BvVarga, \elementv)$ represents $(\BvVargb, \BvVargc)\in \BvVDiesis$.
$(\BvVarga, \elementv) \in \R$ is said to represent $(\BvVargb, \BvVargc)\in \BvVDiesis$ if either of the following conditions holds:
\begin{itemize}
\item $\BvVargb \in \BvVArgsDiesis$, $(\BvVnot(\arga), \BvVargb) \in \BvVDiesis$, $\BvVargc \in \BvVArgsC$ and $\elementv = \BvVargc$;
\item $\BvVargb \in \BvVArgsDiesis$, $(\BvVnot(\arga), \BvVargb) \in \BvVDiesis$, $\BvVargc \in \BvVArgsDiesis$ and $\exists (\BvVargc, \BvVargd) \in \BvVDiesis$ such that $\elementv$ represents $(\BvVargc, \BvVargd)$.
\end{itemize}
\end{definition}

From Definition \ref{def_afra_hoaf} it emerges that any HOAF can be reduced to an \AFRA{} while the (implicit) translation algorithm previously described can be used to translate an \AFRA{} into a HOAF. Hence the two formalisms feature the same expressiveness, \AFRA{} providing however a simpler and cleaner representation and not requiring in particular the use of the additional \virg{not} arguments. Semantics notions are directly introduced in the \AFRA{} formalism and shown to satify desirable properties in relation to the representation of attacks to attacks while semantics notions in HOAF are indirectly introduced with reference to a \virg{represented \AF}, and are not accompanied by such an analysis nor by the statement of reference requirements.

To exemplify, let us consider the Bob's last minute dilemma formalised by an HOAF and shown in Figure \ref{fig_esempio_hoaf}.

\begin{example}
Let $\BvVtuple$ be an Higher Order Argumentation Framework, where $\BvVArgsC = \set{\arga,$ $\argn,$ $\argp,$ $\argc,$ $\argg}$, $\BvVArgsNot = \set{\BvVnot(\arga),$ $\BvVnot(\argn),$ $\BvVnot(\argp),$ $\BvVnot(\argc),$ $\BvVnot(\argg)}$, 
$\BvVArgsDiesis = \set{\argc-\argg,$ $\argg-\argc,$ $\argp-(\argc-\argg)$, $\argn-(\argp-(\argc-\argg)),$ $\arga-\argn}$ and 
$\BvVDiesis = \set{(\arga, \BvVnot(\arga)),$ $(\BvVnot(\arga), \arga-\argn),$ $(\arga-\argn, \argn),$ $(\argn, \BvVnot(\argn)),$ $(\BvVnot(\argn), \argn-(\argp-(\argc-\argg))),$ $(\argn-(\argp-(\argc-\argg)), \argp-(\argc-\argg))$, $(\argp, \BvVnot(\argp)),$ $(\BvVnot(\argp), \argp-(\argc-\argg)),$ $(\argp-(\argc-\argg), \argc-\argg),$ $(\argc, \BvVnot(\argc)),$ $(\BvVnot(\argc), \argc-\argg),$ $(\argc-\argg, \argg),$ $(\argg, \BvVnot(\argg)),$ $(\BvVnot(\argg), \argg-\argc),$ $(\argg-\argc, \argc)}$.

The only complete, grounded, preferred, stable, semi-stable, ideal extension, according with Definition \ref{def_high} is $\set{\arga,$ $\argp,$ $\argg,$ $\BvVnot(\argn),$ $\BvVnot(\argc),$ $\arga-\argn,$ $\argp-(\argc-\argg),$ $\argg-\argc}$.
\end{example}

\begin{figure}[ht]
	\centering
	\includegraphics[scale=0.25]{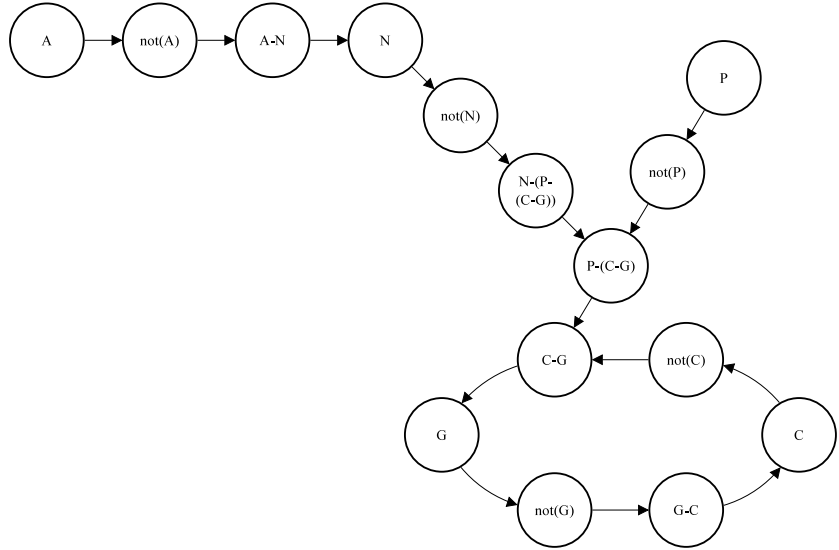}
	\caption{Bob's last minute dilemma formalised by a HOAF.}
	\label{fig_esempio_hoaf}
\end{figure}

Further investigation of the relations between HOAF and \AFRA{} is an interesting direction of future work.

\section{Conclusion and future works}\label{sec_concl}

\AFRA{} is a novel abstract argumentation formalism encompassing unlimited attacks to attacks in a quite simple formal setting and satisfying a set of basic requirements mainly concerning relationships with Dung's original \AF.
While direct correspondences between \AFRA{} and \AF{} are achieved at several levels, as shown in the paper, we remark that \AFRA{} represents a significant conceptual advancement by regarding attacks as defeasible entities themselves. This enables the representation of situations where some kind of reasoning about attacks is carried out, whose usefulness is suggested by several recent works in the literature. In particular, meta-argumentation \cite{wooldridgeetal2005,boellaetal2009} represents a promising research area where defeasibility of attacks may play a significant role as a useful modelling tool. 
Currently a full-fledged formalization of reasoning contexts where recursive attacks are required can be regarded as an important future research task: apart the issue of meta-argumentation mentioned above, a detailed analysis has been carried out up to now only for the case of reasoning about preferences in \cite{modgil2009} where just one level of recursion is considered.

Independently of the actual representation needs of specific reasoning contexts, it can be remarked that \AFRA{}, while pursuing a high generality in the definition of recursive attacks, achieves at the same time the goal of providing a simpler formalism than other (even less expressive) proposals.
In fact other approaches to represent
attacks to attacks in the literature appear to resort to somewhat more complicated formal structures and their semantics properties are still to be analyzed in detail and/or reveal a looser correspondence with Dung's ones, as discussed in Section \ref{sec_discuss}.
\AFRA{} appears anyway to be able to encompass these formalisms while, by explicit design choice, it does not cover more radical departures from Dung framework, where the notion of recursive attacks is combined with those of argument strength or of joint and disjunctive attack \cite{barringeretal2005,gabbay2009,gabbay2009b}. Analyzing their relationships with \AFRA{} represents an interesting direction of future work.

As to actual implementation in software tools, \AFRA{} has recently been included in ASPARTIX\footnote{We are grateful to the ASPARTIX team for having included \AFRA{} in their tool and having provided us useful information and support on this matter.}, a software tool for implementing argumentation frameworks using Answer Set Programming. As stated in \cite{eglyetal2008,eglyetal2008bis} this approach uses a fixed logic program which is capable of computing the different forms of extension from a given framework which is given as input. Due to this simple architecture, the system is easily extensible and suitable for rapid prototyping.
This shows that, by its simplicity, the \AFRA{} formalism lends itself to rather straightforward implementation. As to computational complexity, the results in Sections \ref{sec_compatibility_afra_af} and \ref{SecGAFAF} suggest that the many already known results for \AF{} can be applied to \AFRA{} too.

On the application side, we have mentioned in the paper several contexts where recursive attacks can be useful. The area of modelling articulated decision processes involving reasoning with values is particularly interesting as it has been the subject of detailed analysis leading to the proposal of the Value-Based Argumentation Framework (\VAF) \cite{benchcapon2003,benchcaponetal2007}. From this perspective an analysis of the relationships between \AFRA{} and \VAF{} provides a significant direction of future work: some preliminary results are provided in \cite{baronietal2009b}.

\bibliographystyle{elsarticle-num}
\bibliography{refs}

\end{document}